\renewcommand{\vec}{\boldsymbol}								% redefine \vec -> bold
\newcommand{\mat}{\boldsymbol}							 	% redefine \mat -> bold
\newcommand{\Mx}[1]{\begin{bmatrix}#1\end{bmatrix}}			% define matrix with \Mx
\newtheorem{prop}{Proposition}
\newtheorem{assumption}{Assumption}
\newtheorem{definition}{Definition}
\acrodef{RSL}{Robotic Systems Lab}
\acrodef{COM}{center of mass}
\acrodef{SQP}{sequential quadratic problem}
\acrodef{VMC}{virtual model controller}
\acrodef{DOF}{degree of freedom}
\acrodef{ZMP}{zero moment point}
\acrodef{IMU}{inertial measurement unit}
\acrodef{COT}{cost of transport}
\acrodef{JPL}{Jet Propulsion Laboratory}
\acrodef{HAA}{hip adduction/abduction}
\acrodef{HFE}{hip flexion/extension}
\acrodef{KFE}{knee flexion/extension}
\acrodef{ZMP}{Zero-Moment Point}
\acrodef{QP}{quadratic program}
\acrodef{SQP}{sequential quadratic program}
\acrodef{WBC}{whole-body controller}
\acrodef{WBD}{whole-body dynamics}
\acrodef{HO}{hierarchical optimization}
\acrodef{COP}{Center of Pressure}
\acrodef{MPC}{Model Predictive Control}
\acrodef{HMM}{Hidden Markov Model}
\acrodef{PDF}{probability distribution function}
\acrodef{CDF}{cumulative distribution function}
\acrodef{CNN}{convolutional neural network}
\acrodef{TO}{trajectory optimization}
\acrodef{GM}{generalized momentum}
\acrodef{EOM}{equations of motion}
\acrodef{GIA}{gravito-inertia acceleration}
\acrodef{CD}{centroidal dynamics}
\acrodef{SRBD}{Single Rigid Body Dynamics}
\acrodef{CW}{contact wrench}
\acrodef{CWC}{centroidal or contact wrench cone}
\acrodef{GIW}{gravito-inertia wrench}
\acrodef{MMO}{modular motion optimization}
\acrodef{CMO}{combined motion optimization}
\acrodef{TAMOLS}{terrain-aware motion optimization for legged systems}
\acrodef{HAA}{hip abduction/adduction}
\acrodef{HFE}{hip flexion/extension}
\acrodef{LF}{left front}
\acrodef{LH}{left hind}
\acrodef{RF}{right front}
\acrodef{RH}{right hind}
\acrodef{NLP}{non-linear program}
\acrodef{GN}{Gauss-Newton}
\acrodef{DDM}{double-descriptor method}
\acrodef{GIAC}{gravito-inertia acceleration cone}
\definecolor{grey}{gray}{0.4}
    \newcommand\highlight[1]{#1}
    \newcommand\highlight[1]{\textcolor{red}{#1}}
\begin{document}

% paper title
\title{\LARGE \bf
TAMOLS: Terrain-Aware Motion Optimization for Legged Systems}
%
%
% author names and IEEE memberships
% note positions of commas and nonbreaking spaces ( ~ ) LaTeX will not break
% a structure at a ~ so this keeps an author's name from being broken across
% two lines.
% use \thanks{} to gain access to the first footnote area
% a separate \thanks must be used for each paragraph as LaTeX2e's \thanks
% was not built to handle multiple paragraphs
%

\author{
Fabian~Jenelten,~
Ruben~Grandia,~
Farbod~Farshidian,~
and~Marco~Hutter% <-this % stops a space
\thanks{This research was partially supported by the Swiss National Science Foundation (SNSF) as part of project No.188596, the European Union’s Horizon 2020 research and innovation programme under grant agreement No.780883 and No. 101016970, and the Swiss National Science Foundation through the National Centre of Competence in Research Robotics (NCCR Robotics).}% <-this % stops a space
\thanks{All authors are with the Robotic Systems Lab, ETH Zurich, 8092 Zurich, Switzerland. 
{\tt\footnotesize \{fabianje, rgrandia, farbodf, mahutter\}@ethz.ch}}
}

% note the % following the last \IEEEmembership and also \thanks - 
% these prevent an unwanted space from occurring between the last author name
% and the end of the author line. i.e., if you had this:
% 
% \author{....lastname \thanks{...} \thanks{...} }
%                     ^------------^------------^----Do not want these spaces!
%
% a space would be appended to the last name and could cause every name on that
% line to be shifted left slightly. This is one of those "LaTeX things". For
% instance, "\textbf{A} \textbf{B}" will typeset as "A B" not "AB". To get
% "AB" then you have to do: "\textbf{A}\textbf{B}"
% \thanks is no different in this regard, so shield the last } of each \thanks
% that ends a line with a % and do not let a space in before the next \thanks.
% Spaces after \IEEEmembership other than the last one are OK (and needed) as
% you are supposed to have spaces between the names. For what it is worth,
% this is a minor point as most people would not even notice if the said evil
% space somehow managed to creep in.

% The paper headers
%\markboth{IEEE Transactions on Robotics, regular paper. Final Submission, 2021}%
%{Jenelten \MakeLowercase{\textit{et al.}}: Terrain Aware Motion Optimization For Legged Systems} 

% If you want to put a publisher's ID mark on the page you can do it like
% this:
%\IEEEpubid{0000--0000/00\$00.00~\copyright~2015 IEEE}
% Remember, if you use this you must call \IEEEpubidadjcol in the second
% column for its text to clear the IEEEpubid mark.

% use for special paper notices
%\IEEEspecialpapernotice{(Invited Paper)}

% make the title area
\maketitle

% abstract.
\begin{abstract}
Terrain geometry is, in general, non-smooth, non-linear, non-convex, and, if perceived through a robot-centric visual unit, appears partially occluded and noisy. This work presents the complete control pipeline capable of handling the aforementioned problems in real-time. We formulate a trajectory optimization problem that jointly optimizes over the base pose and footholds, subject to a heightmap. To avoid converging into undesirable local optima, we deploy a graduated optimization technique. We embed a compact, contact-force free stability criterion that is compatible with the non-flat ground formulation. Direct collocation is used as transcription method, resulting in a non-linear optimization problem that can be solved online in less than ten milliseconds. To increase robustness in the presence of external disturbances, we close the tracking loop with a momentum observer. Our experiments demonstrate stair climbing, walking on stepping stones, and over gaps, utilizing various dynamic gaits.
\end{abstract}

% Keywords
%\begin{IEEEkeywords}
%Motion Control, Legged Robots, Optimization and Optimal Control, Perceptive Locomotion
%\end{IEEEkeywords}

% For peerreview papers, this IEEEtran command inserts a page break and
% creates the second title. It will be ignored for other modes.
\IEEEpeerreviewmaketitle

% content
%====================================================================================================
\section{Introduction}
\IEEEPARstart{L}{egged} locomotion has been studied and designed for the last couple of decades. Recent advances in both software and hardware have triggered the transition from experimental platforms used under laboratory conditions to (semi-) autonomous machines deployed in real-world scenarios, e.g., on industrial sites for inspection~\cite{Bellicoso2018} or in underground mines for exploration and mapping~\cite{Tranzatto2021}.
Yet, assumptions made in ``classical'' control approaches limit applications to flat or mildly rough ground or restrict locomotion to static stability. However, the true potential of legged locomotion is undeniably rooted in a combination of rough environments and dynamic agility. More recent control approaches have eliminated both the flat-ground and static-gait restrictions but struggle to match the computational overhead with the onboard compute budget. 

When it comes to rough terrain locomotion, we can isolate three major challenges: 
\begin{itemize}
\item Low computation time and generalizability over various terrains are contradicting requirements. Therefore, it is essential to strike a balance between simplicity and complexity to fully exploit perceptive feedback. 
\item When embedding the true terrain geometry, many local optima might appear, rendering a motion optimization problem sensitive to the initial guess. If the initializing motion is located on the ``wrong'' side of the cost function valley, the solution \highlight{might} converge into an undesirable local optimum. 
\item Sensors used to generate perceptive data are mounted onboard, and large parts of the field of view might appear occluded. These occluded regions can significantly shrink the feasible space or lead to unreliable foot placement.
\end{itemize}
In this work, we
address each of the three points individually, aiming to further
close the gap between blind and perceptive locomotion.
\begin{figure}
\centering
\includegraphics[width=1.0\columnwidth]{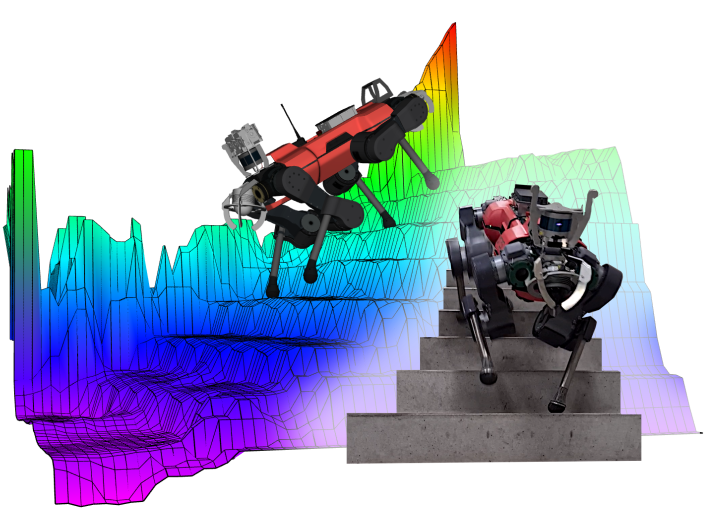}
\caption{Right: ANYmal~\cite{Hutter2016} is walking down a staircase using an ambling gait. Left: Corresponding gazebo model visualization with the elevation map.}
\label{fig:intro:title_image}
\end{figure}

\subsection{Contributions}
The main contributions of this work can be split into four independent parts. 1) First, we derive and analyze a contact-force free and fully differentiable dynamic stability metric. Compared to \ac{SRBD}, our model eliminates the contact forces and has thus a smaller problem size; as opposed to contact wrench models, we do not depend on algorithmic construction of the feasible set; and compared to \ac{ZMP} constraints, our model generalizes to uneven ground with vertical contact planes. 2) We present a \ac{TO} framework, capable of simultaneously optimizing over footholds and base pose, that can be solved at frequencies larger than \unit[100]{Hz}. Instead of handcrafting a reasonable initial guess, we rely on graduated optimization, a heuristic tool primarily used in computer vision for dealing with non-convex and non-smooth problems~\cite{Mobahi2015}. 3) We introduce a mapping pipeline and show how to extrapolate missing data in the map and how to remove artifacts and noise. 4) Finally, we improve the inverse dynamics based tracking loop by embedding a disturbance observer.

\subsection{Outline}
In the remainder of this section, we review different model-based \ac{TO} methods and categorize them according to complexity and decomposition of structure. 
Guided by this discussion, we introduce and justify design choices and model assumptions in section~\ref{s:design_choices}, forming the basis of our control pipeline. What follows in section~\ref{s:generalized_zmp_model} is the development of a dynamic stability criterion.
Mapping and map processing are carried out all onboard. Related details can be found in section~\ref{s:Perception}.
In the following section~\ref{s:optimization_method}, we present our approach to \acl{TAMOLS}\acused{TAMOLS}, hereinafter referred to as \ac{TAMOLS}. 
In section~\ref{s:wbc} we present changes to the widely used whole body control scheme.
The overall locomotion controller is evaluated on the quadrupedal robot ANYmal (see Fig.~\ref{fig:intro:title_image}) in section~\ref{s:results}, and concluded with section~\ref{s:conclusion}.

\subsection{Literature Review}
\subsubsection{Decomposition of Structure}
A first class of locomotion controllers, in this work termed \emph{\ac{MMO}}, decomposes motion optimization into several modules.
The number of modules and the module hierarchy may vary, but the separation of foothold and base pose planning is a necessary characteristics~\cite{BellicosoJenelten2017,Bellicoso2,Bledt1,Carlo1,Orsolino1,CROC,CCROC, Fankhauser2018,Magana2019, fabianje2020, kim2020}. The decoupling allows to simplify foothold selection in mapped environments, e.g., by iterating over a foot-score map and checking each grid cell for optimality. An often encountered concept, called foothold adaptation, limits the search space to the local vicinity of a nominal foothold, which is selected under blind conditions~\cite{Fankhauser2018,Magana2019, fabianje2020, kim2020}. Computation durations of such planners are low, but the lack of a true kinematic model can lead to infeasible base motions. 

As opposed to foothold adaptation, other work has focused on pattern generation. Similar approaches lay out a pattern of footholds along a height map, predicting touch-down locations over several strides. Scanning and map generation are either outsourced as offline tasks~\cite{Kolter1,Kalakrishnan1,Winkler2}, or the maps are generated onboard~\cite{Mastalli1,Griffin2019}. Optimization duration ranges from several tenths of a second to several seconds, rendering continuous replanning difficult on fast walking robots.

Locomotion planners characterized by a \emph{\ac{CMO}} can guarantee the existence of kinematically consistent trajectories. In combination with automatic gait discovery, this second class of locomotion controllers typically suffers from large computation times of several seconds to minutes~\cite{BiToQuad,zerodyn,Neunert3,Winkler1}.
When fixing the gait, these \ac{TO} problems have shown to be solvable in less than a second~\cite{Neunert2} or less than \unit[100]{ms}~\cite{Ruben1, regularized_pmpc2}. There exist also \ac{CMO} frameworks that optimize over the gait and terminate in several hundreds of milliseconds~\cite{Melon2021} or even less~\cite{Neunert1, Carius1}. Extending these formulations with perceptive information while keeping the optimization fast and reliable is still ongoing research.

\subsubsection{Perceptive Locomotion}
Most work in the field of perceptive locomotion is inspired by the early development of Kolter et al.~\cite{Kolter1}. Footsteps are found using a greedy search over a foot-cost map along an approximate body path. The latter is subsequently refined offline by planning a statically stable \ac{COM} trajectory. The controller's performance was evaluated with \emph{LittleDog} on four different terrains of varying difficulty. Kalakrishnan et al.~\cite{Kalakrishnan1} proposed several extensions to that approach; the most notable one is online \ac{COM} planning subject to \ac{ZMP} stability. Both methods rely on accurately pre-scanned terrains and pre-processed height maps. 

Throughout the following years, related \ac{MMO} approaches have been pushed towards full onboard motion planning. Mastalli et al.~\cite{Mastalli1} achieved a re-planning frequency of \unit[0.5]{Hz} for the foothold computation. The closed-loop pipeline was evaluated with \emph{HyQ}, demonstrating crawling over stacked pallets and stepping stones. The key to fast convergence is to restrict the search space to an approximate body path plan. Griffin et al.~\cite{Griffin2019} suggested a convex decomposition of the environment and introduced an $A^*$-footstep planer. It also accounts for yaw orientation of the feet and converges in less than $\unit[2]{s}$. Performance was verified with the biped robots \emph{Atlas} and \emph{Valkyrie} on a set of different terrains. 

Following the \ac{MMO} structure, Fankhauser et al.~\cite{Fankhauser2018} introduced a terrain-aware locomotion pipeline, where footholds are selected from a binary foothold-score map. The search space is limited to the local surroundings of a priory computed nominal foothold. Using \emph{ANYmal}, the authors showed statically stable rough terrain locomotion. 

Magana et al.~\cite{Magana2019} presented a self-supervised foothold classifier. The convolutional neural network learns a correction step based on a nominal foothold. The approach was verified on HyQ, demonstrating a trot over gaps and pallets. This approach was further extended by Jenelten et al.~\cite{fabianje2020} using a batch search, showing dynamic stair-climbing with ANYmal.

Gangapurwala et al.~\cite{RLOC} employed a learning-based policy to plan perceptive footholds, tracked within a model-based \ac{MMO} environment. Robustness of the tracking controller has been improved by learning corrective joint torques. The combined framework has been tested with ANYmal on slopes and bricks, utilizing a trotting and crawling gait.

The more complex \ac{CMO} formulations are on a transition to online-perceptive locomotion. Mastalli et al.~\cite{Mastalli3} presented a method where the motion is subject to \ac{ZMP}-stability, and the footholds are constrained to a terrain cost map. The stochastic search leads to optimization durations of several minutes, and hence, the \ac{TO} has to be carried out offline.

First successful results in generalizing kino-dynamic \ac{MPC} has been shown by Grandia et al.~\cite{Ruben3}. ANYmal was demonstrated to trot over dense stepping stones with the help of a pre-mapped and pre-segmented terrain.

In the work of Melon et al.~\cite{towr_learning}, the \ac{CMO} pipeline proposed by Winkler et al.~\cite{Winkler1} has been improved with a learning-based initializer. In the experiments performed with ANYmal, the environment was loaded from a virtual model. Most recently~\cite{Melon2021}, this approach was shown in combination with online mapping, enabling ANYmal to walk over \unit[20]{cm} high steps. Despite promising results, the replanning frequency was reported as not larger than $\unit[3.3]{Hz}$ in the most favorable case.

\subsubsection{Dynamic Model}
\begin{figure*}
\centering
\includegraphics[width=1.0\textwidth]{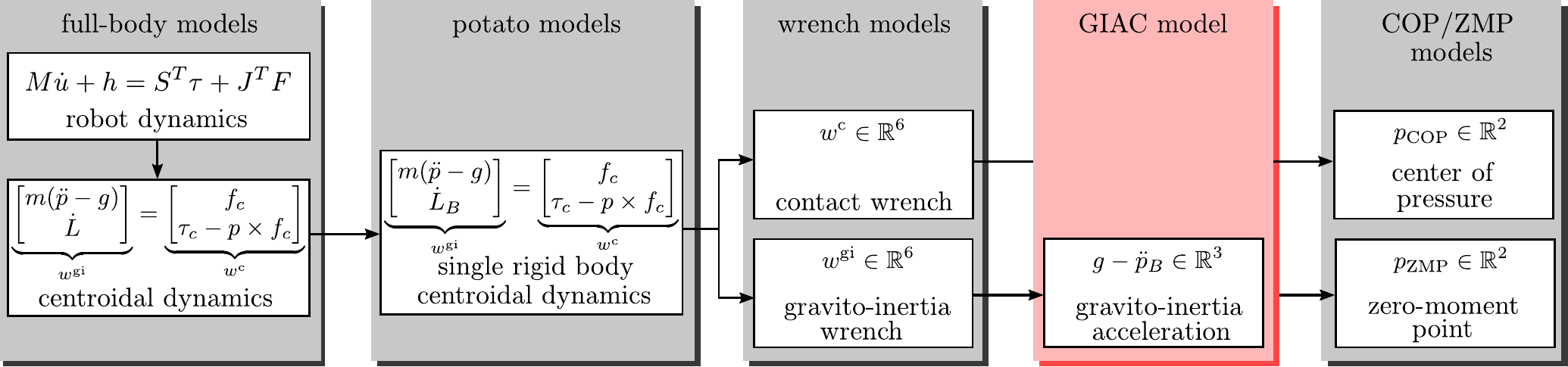}
\caption{Illustration of dynamic models following the order of regressive complexity (or dimensionality) from left to right. The whole-body dynamics represent the robot as a fully articulated structure. The governing equations of motion (EOM) are obtained by Lagrangian mechanics. The first six rows are called the centroidal dynamics (CD) and coincide with the Newton–Euler \acs{EOM}. It relates variations of the centroidal momentum with the sum of contact forces $\vec f_c = \sum \vec f_i$ and momenta $\vec \tau_c = \sum \vec p_i \times \vec f_i$. The \acs{SRBD} account for only one rigid body, and the angular centroidal momentum $\vec L$ simplifies to the angular momentum of the base $\vec L_B$. Gravito-inertia/contact wrench models consider only the left/right-hand side of the \acs{CD}. The \acs{GIAC} model, which will be presented in this work, only depends on the gravito-inertia acceleration and footholds. \acs{ZMP}/\acs{COP} models are the projection of the associated wrench onto the ground or some virtual plane.}
\label{fig:models:dynamic_models_overview}
\end{figure*}
The formulation of a \ac{TO} problem requires a physical model at hand. We have categorized these models based on their complexity as in the following and as summarized in Fig.~\ref{fig:models:dynamic_models_overview}.

\ac{TO} has been used in combination with the \emph{\ac{WBD}} of legged robots. These methods can directly constrain joint torques. Due to excessive model complexity, optimization times usually peak at several minutes~\cite{zerodyn, Neunert3} or seconds~\cite{BiToQuad, Neunert2}, but can be as low as several tenths of a millisecond\cite{Neunert1, Carius1}.

The \emph{\ac{CD}} are the projection of the \ac{WBD} to the \ac{COM}~\cite{Orin2}. It has been used in \ac{TO} methods for bipedal~\cite{centroidal_mpc1, centroidal_mpc2} and quadrupedal systems~\cite{Farbod, sleiman1}. 

The joint space states can be eliminated by assuming that the robot's inertia remains constant. These models are commonly referred to as \emph{\ac{SRBD}}.
Related methods can integrate kinematic constraints formulated in joint~\cite{Ruben1} or task space~\cite{Winkler1,towr_learning, Melon2021}. A popular approach is to linearize the dynamics around a nominal operation point, e.g. the measured state~\cite{lin_centroidal_dyn1}, steady state~\cite{Carlo1, regularized_pmpc1, regularized_pmpc2} or desired state~\cite{kim2019}.

The \ac{CD} or \ac{SRBD} can be further decomposed into two parts, the \emph{\ac{CW}} and the \emph{\ac{GIW}}~\cite{CaronZMP}, separating contact forces from spatial acceleration. The \ac{GIW} is typically used for planning where it is subject to \ac{CWC} feasibility. Due to the absence of contact forces, this stability criterion scales better with the number of contacts~\cite{cwc1}. To the authors' best knowledge, methods of this category have only been used in combination with prefixed footholds~\cite{cwc2, Caron2015, Orsolino1, CROC, CCROC}, for which the \ac{CWC} can be pre-computed as a conic polyhedron.

Projecting the \ac{GIW} or the \ac{CW} onto the ground~\cite{zmp} or a virtual plane~\cite{CaronZMP} leads to the definition of the so-called \emph{\acf{ZMP}} or \emph{\acf{COP}}, respectively. Both have different meanings but are equivalent on a physical basis. Under the same transformation, the \ac{CWC} projects to a support polygon, defined as the convex hull of (virtual) contact positions. These reduced models represent some kind of inverted pendulum with a single virtual contact, from which the most popular one is the linear inverted pendulum mode ~\cite{Kalakrishnan1, BellicosoJenelten2017, Bellicoso2}. If we further assume zero acceleration, the \ac{ZMP}/\ac{COP} simplifies to the projected \ac{COM}~\cite{Fankhauser2018, Kolter1}. The associated stability criterion is of pure static nature. 

%====================================================================================================
\section{Overview} \label{s:design_choices}
\subsection{Design Choices And Model Assumptions}
\subsubsection{Decomposition of Structure}
In rough terrains, the optimal motion is often found at kinematic limits. The absence of kinematic constraints in the foothold selection step renders \ac{MMO} susceptible to leg over-extension. Footholds might be located too far from each other, potentially reducing the feasible space of the base pose trajectory to the empty set. On the other hand, if kinematic constraints are not taken into account, the limbs may be exposed to singular configurations where tracking performance degrades~\cite{fabianje2020}. 
When combining base pose and footholds into one single \ac{TO} problem, we can directly constrain limb extension and thereby enforce a feasible task space configuration without the need for an exact kinematic model. In this work, the focus is put on such a \ac{CMO} method.

The gait pattern defines lift-off and touch-down timings of the feet over a periodic time interval.
As a second design choice, our approach uses a fixed gait pattern. This is not a necessary assumption, but it reduces the problem size and thus the expected optimization time.

\subsubsection{Dynamic Stability Criterion}
The literature review suggests a correlation between model complexity and optimization duration. In favor of the latter, we prefer the model to be located on the far right side of Fig.~\ref{fig:models:dynamic_models_overview}.

In the case of ANYmal, the four limbs make up for half of the robot's total weight. Two third of the limb mass is accumulated in the \ac{HAA} and \ac{HFE} joints, forming an approximate revolute joint. Its mass is located close to the torso, which makes the \ac{SRBD} a plausible model for our robot.

As a next simplification, we might consider only the total wrench generated by the contact forces and the \ac{CWC} that it belongs to. Mathematically, such a cone is formulated as the Minkowski sum of translated friction cones, for which there exists no closed-form solution. Numerical tools exist, such as the \ac{DDM}~\cite{double_description}. Unfortunately, the algorithm finds an expression that is non-differentiable w.r.t. footholds, and thereby, wrench models can not be used in our case. A brief introduction into \ac{CWC} stability is provided in the appendix~\ref{a:comparison_cwc}.

The simpler \ac{ZMP}/\ac{COP} models assume flat ground and are by design not representative in rough environments.

We conclude that the least complex design that suffices the requirements for perceptive locomotion is based on \ac{SRBD}. In the subsequent section, we will introduce model assumptions that allow us to eliminate contact forces while maintaining differentiability w.r.t. all unknowns.

\subsection{Nomenclature}\label{ss:models:nomencalture}
If not stated differently, vectors are expressed in \emph{world frame} $W$, defined by the orthogonal unit vectors $\{\vec e_x, \vec e_y,\vec e_z\}$ and the origin $\mathcal{O}_W$. The robot-centric \emph{base frame} $B$ is attached to the geometric center of the base. We use $\vec x^B$ notation to refer to a vector given in local coordinates $B$.

We denote a foothold and a contact force of leg $i$ by $\vec p_i$ and $\vec f_i$, respectively, with \highlight{$i = 1,\ldots, N$} and $N$ the number of grounded legs. A difference between two footholds $i$ and $j$ is written as $\vec p_{ij} = \vec p_j - \vec p_i$. In case of a quadrupedal robot, the four legs are labeled \highlight{\ac{LF} ($i=1$), \ac{RF} ($i=2$), \ac{LH} ($i=3$), and \ac{RH} ($i=4$)}. 

For the base pose we use the expression $\vec \Pi_B = \Mx{\vec p_B & \vec \phi_B}$ where $\vec p_B = \Mx{x & y & z}^T$ is the base position in world frame and $\vec \phi_B = \Mx{\psi &\theta& \varphi}^T$ are the ZYX-Euler angles of the base w.r.t. world. We write $\mat R_B = \mat R_B(\vec \phi_B)$ to indicate a rotation matrix that rotates a vector expressed in base coordinates to world frame. For reducing index complicity, we assume that the base center is identical to the \ac{COM} of the base.

We compute linear and angular momentum of the base as $\vec P_B = m \vec{\dot p}_B$ and $\vec L_B = \mat R_B (\mat I_B \vec \omega_B^B)$.
The vector $\vec \omega_B^B = \vec\omega_B^B(\vec \phi_B, \vec {\dot \phi}_B)$ is the angular velocity of the base around the base frame, $m$ is the total mass, and $\mat I_B$ the inertia of the robot in base frame computed at nominal configuration.

The gravity vector is written as $\vec g = \Mx{0 & 0 & -g}^T$ with $g = \unit[9.81]{m/s^2}$. The symbol $\mu$ is used to denote a friction coefficient. We define a wrench $\vec w \coloneqq \{\vec f, \, \vec \tau\}$ as the tuple consisting of a force and moment.
The \emph{\ac{GIA}} vector is defined as $\vec a_B \coloneqq \vec g - \vec {\ddot p}_B$. Finally, we use the determinant notation $\det(\vec a, \vec b, \vec c)$ to express the scalar triple product $\vec a \cdot (\vec b \times \vec c)$.

\subsection{Control Structure}
The control architecture used in this work is outlined in Fig.~\ref{fig:intro:overview_tamols}.
\begin{figure}
\centering
\includegraphics[width=1.0\columnwidth]{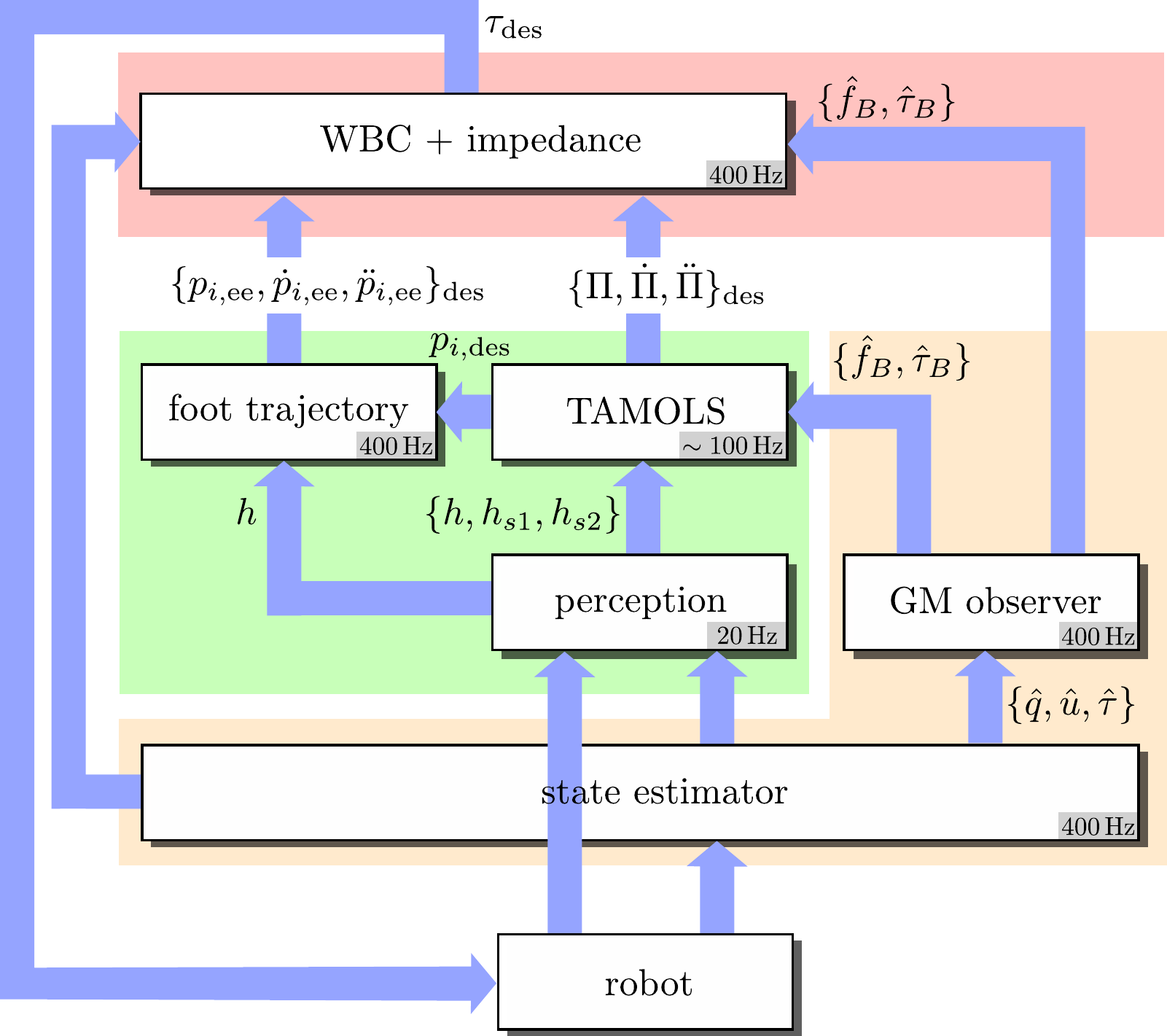}
\caption{Overview of the control structure, consisting of trajectory generation (green), tracking (red), and state/disturbance observer (orange).}
\label{fig:intro:overview_tamols}
\end{figure}
Generalized positions $\hat{\vec q}$, generalized velocities $\hat{\vec u}$ and torques $\hat{\vec \tau}$ are estimated/measured at \unit[400]{Hz}. The point clouds gathered from two onboard LiDARs are processed to different elevation maps $\{h,\,h_{s1},\,h_{s2}\}$ at a rate of \unit[20]{Hz}. The three elevation layers are tightly embedded in the \ac{TO} problem and the swing trajectory generation. These two modules output desired reference signals for the base $\{{\vec \Pi}_B,\, \dot {\vec \Pi}_B,\, \ddot {\vec \Pi}_B\}_\text{des}$ and swinging feet $\{{\vec p}_{i,\text{ee}}, \, \dot {\vec p}_{i,\text{ee}}, \, \ddot {\vec p}_{i,\text{ee}}\}_\text{des}$, which are tracked by a \ac{WBC} in task space at \unit[400]{Hz}. External disturbances of the base $\{\hat {\vec f}_B,\,\hat {\vec \tau}_B\}$, estimated by a \ac{GM} observer, are compensated in the tracking controller and in the motion optimizer.

%====================================================================================================
\section{The \acs{GIAC} model} \label{s:generalized_zmp_model}
We first recapitulate the \ac{SRBD} in~\ref{ss:giac_background}, and show how to eliminate the contact forces in~\ref{ss:giac_derivation}, \highlight{given the assumptions to be introduced in~\ref{ss:model_assumptions}}. The resulting model depends on the contact state and is summarized in~\ref{ss:giac_summary}. The remaining subsection~\ref{ss:giac_properties} discusses its inherent geometric and physical properties.

\subsection{Background} \label{ss:giac_background}
A legged robot can be modeled as a floating body that balances contact forces through the rate of change of linear and angular momentum,
\begin{equation} \label{eq:models_cd}
    \underbrace{m \Mx{ 
    \vec {\ddot p}_B - \vec g \\ 
    m\vec p_B \times (\vec {\ddot p} - \vec g)
    } + 
    \Mx{ \vec 0 \\ \vec {\dot L} }}_{\coloneqq -\vec w^\text{gi}} =
    \underbrace{\sum_{i=1}^N 
    \Mx{
    \vec f_i \\ \vec p_i \times \vec f_i
    }}_{\coloneqq \vec w^\text{c}}.
\end{equation}
The left/right-hand side is called the gravito-inertia/contact wrench, while the combination of both is known as the \ac{CD}. If the limb masses are negligible, then the angular momentum $\vec {L}$ simplifies to the angular momentum of the base $\vec {L}_B$, and the resulting equation is called \ac{SRBD}.

We are not only interested in motions that are consistent with physics, but that also support non-sliding contacts. If friction is modeled using Coulomb's friction law, such a requirement can be formulated as
\begin{equation} \label{eq:models_cd_friction_cone}
    \vec f_i \in \mathcal{F}_i, \quad \forall i = \highlight{1, \ldots, N}.
\end{equation}
The set $\mathcal{F}_i$ is called the \emph{friction cone} and is defined by the plane normal $\vec n_i$ and friction coefficient $\mu_i$ at contact $i$
\begin{equation}
    \mathcal{F}_i = \Big\{\vec f_i \mid \mu_i \vec n_i^T \cdot \vec f_i \geq ||(\mathbb{I}_{3\times 3} - \vec n_i \vec n_i^T) \vec f_i|| \Big\}.
\end{equation}

\subsection{\highlight{Model Assumptions}} \label{ss:model_assumptions}
We start the mathematical derivation by introducing the following set of assumptions:
\begin{assumption}[Single body]\label{assumtopns:models:1}
The limbs have zero mass.
\end{assumption}
\begin{assumption}\label{assumtopns:models:2}
The  rate of  change  of  the  angular  momentum  has a negligible effect on the contact forces.
\end{assumption}
\begin{assumption}\label{assumtopns:models:3}
Contact forces can only push on the ground.
\end{assumption}
\begin{assumption}[Coulomb friction]\label{assumtopns:models:4}
Friction coefficients are constant \highlight{over time and identical for all legs}.
\end{assumption}
\begin{assumption}\label{assumtopns:models:5}
Contacts are established on horizontal planes.
\end{assumption}
\begin{assumption}[Over-hanging torso]\label{assumtopns:models:6}
The base position is located above all the grounded feet, i.e., $\vec e_z^T\cdot \vec p_B > e_z^T\cdot \vec p_i~\forall$ grounded $i$.
\end{assumption}

\subsection{Derivation} \label{ss:giac_derivation}
By summing up the contact forces in~\eqref{eq:models_cd_friction_cone}, we obtain a condition on the \ac{GIA} vector
\begin{equation} \label{eq:models_zmp_friction_cone}
    \sum_{i=1}^N \vec f_i = m(\vec{\ddot p}_B - \vec g) \implies \vec{\ddot p}_B - \vec g \in \mathcal{F}.
\end{equation}
The set $\mathcal{F}$ is obtained as the Minkowski sum of local friction cones $\{\mathcal{F}_1,\ldots, \mathcal{F}_N\}$. Assumptions~\ref{assumtopns:models:3}, \ref{assumtopns:models:4}, and~\ref{assumtopns:models:5}  lead to a relaxed scenario, in which an analytical expression of the friction cone can be found as
\begin{equation}
\mu \vec e_z^T \cdot (\vec g - \vec{\ddot p}_B ) \geq ||(\mathbb{I}_{3\times 3} - \vec e_z \vec e_z^T) (\vec g - \vec{\ddot p}_B )||. \label{eq:models:simplified_no_slip_condition}
\end{equation}
Assumption~\ref{assumtopns:models:2} guarantees that friction constraints on the linear momentum are sufficient for non-sliding contacts.

Without loss of generality, we exploit assumption~\ref{assumtopns:models:1}, i.e.,  $\vec {\dot L} = \vec {\dot L}_B$, and rewrite the system dynamics as
\begin{subequations}
\begin{gather}
    m \Mx{ 
    \vec {\ddot p}_B - \vec g \\ 
    \vec 0
    } + 
    \Mx{ \vec 0 \\ \vec {\dot L}_B } =
    \sum_{i=1}^N 
    \Mx{
    \vec f_i \\ (\vec p_i - \vec p_B) \times \vec f_i
    } \label{a:sbcd} \\
    \vec f_i \in \mathcal{F}_i, \quad \forall i \in \{1, \ldots, N\} \label{a:no_slip_condition}.
\end{gather}
\end{subequations}

In the following, we show how to eliminate the contact forces from~\eqref{a:sbcd} and~\eqref{a:no_slip_condition} by using assumptions~\ref{assumtopns:models:5} and~\ref{assumtopns:models:6}.

\subsubsection{Three or more grounded Legs}
If the number of contact locations is $N> 1$, then we can solve the first row of~\eqref{a:sbcd} for $\vec f_1$ and substitute into the second row, 
\begin{equation} \label{eq:a:cd_combined}
m (\vec p_1 - \vec p_B) \times (\vec {\ddot p}_B - \vec g) +
\sum_{i\neq 1} (\vec p_i - \vec p_1) \times \vec f_i  = \vec {\dot L}_B. 
\end{equation}
Next, we multiply \eqref{eq:a:cd_combined} from the left with $\vec p_{12} = \vec p_2 - \vec p_1$
\begin{multline}
m \vec p_{12}^T \cdot \big[(\vec p_1 - \vec p_B)^T \times (\vec {\ddot p}_B - \vec g) \big]+ \\ 
\vec p_{12}^T \cdot \sum_{i\neq 1,2} (\vec p_i - \vec p_1) \times \vec f_i = \vec p_{12}^T \cdot \vec {\dot L}_B.
\end{multline}
Using the determinant notation and the circular shift property of the scalar triple product, the equation can be simplified to
\begin{multline}
\det(\vec p_{12}, \vec p_B-\vec p_1, \vec g-\vec {\ddot p}_B) - \vec p_{12}^T \cdot \frac{\vec {\dot L}_B}{m} = \\
-\frac{1}{m}\sum_{i\neq 1,2} \vec f_i^T \cdot \big[\vec p_{12} \times (\vec p_i -\vec p_1)\big].
\end{multline}
We seek for a condition under which the left-hand side is smaller than zero, i.e., for which
\begin{equation} \label{eq:a:zmp_condition}
\vec f_i^T \cdot \big[\vec p_{12} \times (\vec p_i -\vec p_1)\big] \geq 0, \quad \forall i \neq 1,2.
\end{equation}
The three contact locations $\{\vec p_1,\, \vec p_2,\,\vec p_i\}$ span a local terrain plane $\vec \Pi_{12i}$, whose normal is parallel to $\vec p_{12} \times (\vec p_i -\vec p_1)$.  In the most restrictive case, there is no friction, $\mu_i = 0$, implying that~\eqref{eq:a:zmp_condition} needs to be satisfied for $\vec f_i = \vec n_i$. We write this condition as $\angle\{\vec\Pi_{12i}, \vec n_i\} \geq 0$. It is satisfied if the ground can be represented by a single plane (with arbitrary inclination) or segmented into multiple planes perpendicular to gravity.

With these considerations at hand we can conclude that:
\begin{prop} \label{a:dyn_prop1}
There exists a set of contact forces $\{\vec f_i\}_{\highlight{i=1,\ldots, N}}$ satisfying \eqref{a:sbcd} and \eqref{a:no_slip_condition}, if
\begin{itemize}
    \item the \ac{GIA} vector is bounded by
    \begin{equation}
        m\det(\vec p_{ij}, \vec p_B-\vec p_i, \vec g-\vec {\ddot p}_B) \leq \vec p_{ij}^T \cdot \vec {\dot L}_B,
    \end{equation}
    \item the \ac{GIA} vector is bounded by the friction cone~\eqref{eq:models:simplified_no_slip_condition},
    \item \highlight{assumption~\ref{assumtopns:models:2} is satisfied}, and
    \item $\angle\{\vec\Pi_{jki}, \vec n_i\}\geq 0,~j\neq k \neq i$.
\end{itemize}
\end{prop}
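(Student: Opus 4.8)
The plan is to recast the claim as a membership question---does the gravito-inertia wrench lie in the set of wrenches realisable by friction-cone-constrained contact forces?---and to check that the four conditions carve out a subset of that set. I would first dispatch the ``structural'' hypotheses: Assumption~\ref{assumtopns:models:1} lets us replace $\vec{\dot L}$ by $\vec{\dot L}_B$ and use~\eqref{a:sbcd}, while the third condition is nothing but Assumption~\ref{assumtopns:models:2}, which permits treating the friction constraints as acting on the linear balance alone, so that no tangential budget need be spent generating $\vec{\dot L}_B$. Under Assumptions~\ref{assumtopns:models:3}--\ref{assumtopns:models:5} each local cone $\mathcal F_i$ equals the common cone $\mathcal F$ with axis $\vec e_z$ and coefficient $\mu$, so the second condition~\eqref{eq:models:simplified_no_slip_condition} says precisely that $m(\vec{\ddot p}_B-\vec g)\in\mathcal F$, i.e.\ that a decomposition $\sum_i\vec f_i=m(\vec{\ddot p}_B-\vec g)$ with every $\vec f_i\in\mathcal F_i$ exists. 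It then remains to show that, among such decompositions, one also reproduces the angular balance $\sum_i(\vec p_i-\vec p_B)\times\vec f_i=\vec{\dot L}_B$.

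Next I would rerun, for an arbitrary pair $(i,j)$ instead of $(1,2)$, the elimination-and-projection computation preceding the proposition: eliminate $\vec f_i$ from the force row of~\eqref{a:sbcd}, substitute into the moment row to obtain the analogue of~\eqref{eq:a:cd_combined}, and left-multiply by $\vec p_{ij}^T$. The $\vec f_j$-term disappears because $\vec p_{ij}^T(\vec p_{ij}\times\vec f_j)=0$, and the circular-shift property of the scalar triple product collapses the rest to
\[
    m\det(\vec p_{ij},\vec p_B-\vec p_i,\vec g-\vec{\ddot p}_B)-\vec p_{ij}^T\vec{\dot L}_B=-\sum_{k\neq i,j}\vec f_k^T\big[\vec p_{ij}\times(\vec p_k-\vec p_i)\big].
\]
The first condition makes the left-hand side non-positive, and since $\vec p_{ij}\times(\vec p_k-\vec p_i)$ is parallel to the normal of the plane $\vec\Pi_{ijk}$ through $\vec p_i,\vec p_j,\vec p_k$, the fourth condition (with $k$ in the role of the index it calls $i$) makes each right-hand summand non-negative as soon as the $\vec f_k$ are taken essentially along their---by Assumption~\ref{assumtopns:models:5} vertical---normals, a choice that also keeps them in $\mathcal F_k$ via the unilateral Assumption~\ref{assumtopns:models:3}. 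Sweeping over all pairs, and using that for $N\geq3$ the force and moment balance together leave $3N-6\geq3$ free force parameters, one assembles a set $\{\vec f_i\}$ satisfying~\eqref{a:no_slip_condition} whose net force and net moment match $m(\vec{\ddot p}_B-\vec g)$ and $\vec{\dot L}_B$; the $\vec f_i$ recovered from the force row lies in $\mathcal F_i$ by the second condition.

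The delicate step---the one I expect to cost real effort---is closing the full vector angular balance, not merely its pairwise projections. When the grounded feet are coplanar the $\vec p_{ij}$ span only a two-dimensional subspace, so the moment component along the gravity axis escapes every pairwise projection and would have to be produced by tangential forces; it is exactly Assumption~\ref{assumtopns:models:2}, entering as the third condition, that removes this obstruction by declaring that component immaterial to cone feasibility. Assumption~\ref{assumtopns:models:6}---base strictly above every foot---is then needed to pin down the signs of $\det(\vec p_{ij},\vec p_B-\vec p_i,\cdot)$ and of the plane normals so the determinant bounds and the angle conditions are consistently oriented, and one must still verify that eliminating $\vec f_i$ rather than $\vec f_j$ yields compatible scalar constraints, which it does because all of them are projections of the single vector identity~\eqref{eq:a:cd_combined}. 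Turning ``essentially along the normals'' into a quantitative statement---how much interior margin the second condition leaves against the tangential load forced by the linear balance---is the residual technical work; the rest is triple-product bookkeeping.
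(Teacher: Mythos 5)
Your proposal follows essentially the same route as the paper: eliminate one contact force from the force balance, substitute into the moment balance, project along $\vec p_{ij}$ so that the $j$-th force drops out, and bound the residual sum $\sum_{k\neq i,j}\vec f_k^T\,[\vec p_{ij}\times(\vec p_k-\vec p_i)]$ via the frictionless worst case $\vec f_k = \vec n_k$, which is exactly the angle condition $\angle\{\vec\Pi_{ijk},\vec n_k\}\geq 0$. If anything you are more careful than the paper, which only derives the determinant bound as a necessary consequence of the sign condition on the remaining forces and never addresses the existence (sufficiency) direction or the unprojected moment component for coplanar feet that you correctly flag as the delicate step.
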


\subsubsection{Two Grounded Legs}
We consider a double support phase where legs $1$ and $2$ are grounded. Equation \eqref{eq:a:cd_combined} becomes
\begin{equation}
    m (\vec p_1 - \vec p_B) \times (\vec {\ddot p}_B - \vec g) +
(\vec p_2 - \vec p_1) \times \vec f_2  = \vec {\dot L}_B. 
\end{equation}
The projection along $\vec p_{12}$ eliminates $\vec f_2$
\begin{equation} \label{eq:a:line_constraint}
    m\det(\vec p_{12}, \vec p_B-\vec p_1, \vec g-\vec {\ddot p}_B) - \vec p_{12}^T \cdot \vec {\dot L}_B = 0.
\end{equation}
Due to the projection axis, we also need to limit the \ac{GIA} vector perpendicular to $\vec p_{12}$. If we look at the moment $\vec M_1 = (\vec p_B - \vec p_1) \times (\vec g - \vec {\ddot p}_B) - \vec {\dot L}_B/m$ induced about foothold $\vec p_1$, perpendicular to $\vec p_{12}$, then $ \vec e_z^T \cdot (\vec p_{12} \times \vec M_1) \geq 0$, or,
\begin{equation} \label{eq:a:double_support_generalized_zmp}
  \det(\vec e_z, \vec p_{12}, \vec M_1) \geq 0.
\end{equation}
Notice that the formulation requires an over-hanging configuration (\highlight{assumption~\ref{assumtopns:models:6})}.\footnote{This requirement comes from the fact that the projection axis is chosen as $\vec e_z$. A more general formulation is possible by replacing $\vec e_z$ with $\vec p_B - \vec p_1$, but would lead to a higher order of non-linearity.}
For the second foothold, condition~\eqref{eq:a:double_support_generalized_zmp} can only be satisfied if $\vec e_z^T \cdot \vec p_{12} \times (\vec p_{12} \times \vec f_2)  \leq 0$. Considering again the worst case scenario, i.e.,  $\mu_i=0$, the inequality constraint can be simplified to 
\begin{equation}
\begin{aligned}
    \vec e_z^T \cdot \vec p_{12} \cdot (\vec p_{12} \vec n_2)               &\leq \vec e_z^T \cdot \vec n_2 \cdot ||\vec p_{12}||^2 \\
    \vec e_z^T \cdot \vec p_{12} \cdot ||\vec n_2||\cos\theta               &\leq \vec e_z^T \cdot \vec n_2 \cdot ||\vec p_{12}|| \\
    \vec e_z^T \cdot \vec e_{p} \cos\theta                                  &\leq \vec e_z^T \cdot \vec e_n \\
    \vec e_{p,z} \cos\theta                                                 &\leq \vec e_{n,z},
\end{aligned}
\end{equation}
where we have used $\theta = \angle\{\vec p_{12},\,\vec n_2\}$, $\vec e_p = \vec p_{12} / ||\vec p_{12}||$ and  $\vec e_n = \vec n_{2} / ||\vec n_{2}||$. Since $\vec e_{n,z} \geq 0$ for a push contact (assumption~\ref{assumtopns:models:3}), the condition further simplifies to $\angle\{\vec p_{12},\,\vec n_2\} \geq 0$ and we can conclude that:
\begin{prop} \label{a:dyn_prop2}
There exists a set of contact forces $\{\vec f_1, \vec f_2\}$ satisfying \eqref{a:sbcd} and \eqref{a:no_slip_condition}, if
\begin{itemize}
    \item the \ac{GIA} vector is bounded by
    \begin{subequations}
    \begin{gather}
       m \det(\vec p_{12}, \vec p_B-\vec p_1, \vec g-\vec {\ddot p}_B) - \vec p_{12}^T \cdot \vec {\dot L}_B = 0 \label{eq:models:ss1} \\
        \det(\vec e_z, \vec p_{12}, \vec M_1) \geq 0 \label{eq:models:ss2},
    \end{gather}
    \end{subequations}
    \item the \ac{GIA} vector is bounded by the friction cone~\eqref{eq:models:simplified_no_slip_condition},
    \item \highlight{assumption~\ref{assumtopns:models:2} and~\ref{assumtopns:models:6} are satisfied}, and
    \item $\angle\{\vec p_{ij}, \vec n_j\}\geq 0,~\forall i\neq j$.
\end{itemize}
\end{prop}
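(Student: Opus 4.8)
The plan is to argue constructively: granting all four hypotheses, I will exhibit forces $\vec f_1,\vec f_2$ that lie in their friction cones $\mathcal F_1,\mathcal F_2$ and satisfy the single-body centroidal dynamics \eqref{a:sbcd}. For $N=2$ those dynamics amount to the linear balance $\vec f_1+\vec f_2=m(\vec{\ddot p}_B-\vec g)$ together with the angular balance which, specialising \eqref{eq:a:cd_combined} to the single term $i=2$, is equivalent to
\begin{equation*}
\vec p_{12}\times\vec f_2=-m\,\vec M_1,\qquad \vec M_1=(\vec p_B-\vec p_1)\times(\vec g-\vec{\ddot p}_B)-\vec{\dot L}_B/m.
\end{equation*}
So the strategy reduces to: solve the angular balance for $\vec f_2$, recover $\vec f_1$ from the linear balance, and then spend the remaining degree of freedom to place both forces inside their cones. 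Assumption~\ref{assumtopns:models:1} is what lets me use $\vec{\dot L}=\vec{\dot L}_B$ throughout, and Assumption~\ref{assumtopns:models:2} is what makes the friction conditions on the \emph{linear} momentum sufficient (as already noted below \eqref{eq:models:simplified_no_slip_condition}), so neither needs a separate argument.

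First I would dispose of the angular balance. The equation $\vec p_{12}\times\vec f_2=-m\vec M_1$ admits a solution in $\vec f_2$ iff $-m\vec M_1\perp\vec p_{12}$, i.e.\ $\vec p_{12}^T\vec M_1=0$; expanding the scalar triple product shows this is \emph{exactly} hypothesis~\eqref{eq:models:ss1} (the line constraint \eqref{eq:a:line_constraint}). Under it the solution set is the one-parameter affine family $\vec f_2(\lambda)=\tfrac{m}{\norm{\vec p_{12}}^2}\,\vec p_{12}\times\vec M_1+\lambda\,\vec p_{12}$, $\lambda\in\R$: the component of $\vec f_2$ orthogonal to $\vec p_{12}$ is pinned, the component along $\vec p_{12}$ is free. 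Putting $\vec f_1(\lambda):=m(\vec{\ddot p}_B-\vec g)-\vec f_2(\lambda)$ satisfies the linear balance by construction, so everything reduces to finding a $\lambda$ with $\vec f_2(\lambda)\in\mathcal F_2$ and $\vec f_1(\lambda)\in\mathcal F_1$ simultaneously. (The degenerate case $\vec p_1=\vec p_2$ collapses the angular balance to $\vec M_1=\vec 0$, forced by \eqref{eq:models:ss1}--\eqref{eq:models:ss2}, and leaves $\vec f_2$ entirely free; it is effectively the single-support situation and I would treat it aside.)

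The heart of the proof — and the step I expect to be hardest — is this choice of $\lambda$. I would work in the relaxed, worst-friction setting already used in the excerpt: by Assumptions~\ref{assumtopns:models:3}--\ref{assumtopns:models:5} each $\mathcal F_i$ is the half-cone about $\vec e_z$ with coefficient $\mu$, so it suffices that the affine line $\{\vec f_2(\lambda)\}$ meets $\mathcal F_2$ and that its image $\{m(\vec{\ddot p}_B-\vec g)-\vec f_2(\lambda)\}$ meets $\mathcal F_1$. For the first: hypothesis~\eqref{eq:models:ss2}, i.e.\ $\vec e_z^T(\vec p_{12}\times\vec M_1)\ge 0$, says the pinned part of $\vec f_2(\lambda)$ carries non-negative vertical load; the over-hanging Assumption~\ref{assumtopns:models:6} is precisely what licenses choosing $\vec e_z$ as projection axis (see the footnote after \eqref{eq:a:double_support_generalized_zmp}); and the angle condition $\angle\{\vec p_{12},\vec n_2\}\ge0$ — reached through the chain ending in $\vec e_{p,z}\cos\theta\le\vec e_{n,z}$ just before the proposition — guarantees that sliding $\lambda$ along $\vec p_{12}$ keeps the horizontal-to-vertical load ratio below $\mu$, so a non-empty admissible $\lambda$-interval lands in $\mathcal F_2$. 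For the second I would invoke hypothesis (ii): \eqref{eq:models_zmp_friction_cone} together with \eqref{eq:models:simplified_no_slip_condition} places the net gravito-inertia force $m(\vec{\ddot p}_B-\vec g)$ in $\mathcal F=\mathcal F_1\oplus\mathcal F_2$, so $\vec f_1(\lambda)$ is the difference of a vector in $\mathcal F$ and the line $\{\vec f_2(\lambda)\}$; the symmetric condition $\angle\{\vec p_{21},\vec n_1\}\ge0$ with Assumption~\ref{assumtopns:models:6} then carves out a second admissible $\lambda$-interval, and I would show the two intervals overlap. Assembling the double- and triple-support constructions (Proposition~\ref{a:dyn_prop1}) as there then yields $\vec f_1,\vec f_2$ with the stated properties.

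The genuine obstacle is that last overlap: unlike the $N\ge3$ case, the angular balance confines $\vec f_2$ to a one-dimensional set, so mere membership of the gravito-inertia force in the Minkowski sum $\mathcal F_1\oplus\mathcal F_2$ does not by itself supply a decomposition compatible with that line — one must check the line actually crosses the feasible region. This is exactly what forces the extra scalar constraint \eqref{eq:models:ss2} and the over-hanging Assumption~\ref{assumtopns:models:6}, and why \emph{both} angle conditions (rather than one) appear. I would finish with a short case split on the sign of $\vec e_z^T\vec p_{12}$ and on whether $\vec p_{12}\times\vec M_1$ vanishes.
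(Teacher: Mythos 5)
Your proposal follows essentially the same route as the paper's derivation: you recover \eqref{eq:models:ss1} as the $\vec p_{12}$-projection of the angular balance (equivalently, the solvability condition of $\vec p_{12}\times\vec f_2=-m\vec M_1$), \eqref{eq:models:ss2} as the sign condition $\vec e_z^T(\vec p_{12}\times\vec M_1)\geq 0$ on the pinned component of $\vec f_2$, and the terrain-angle condition via the same worst-case $\mu=0$ argument, with each assumption playing exactly the role it plays in the paper. Your explicit affine parametrization $\vec f_2(\lambda)$ and the resulting interval-overlap question are a cleaner and more candid packaging of the existence claim; the step you flag as unfinished (showing the two feasible $\lambda$-intervals intersect) is precisely the sufficiency step the paper's own argument, which only derives the stated conditions as necessary ones by successive projections, also leaves implicit.
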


Fig. \ref{fig:a:failure_case} demonstrates an exotic example where prop.~\ref{a:dyn_prop2} is violated.
It should be noted that prop~\ref{a:dyn_prop1} and \ref{a:dyn_prop2} do not exclude convex nor concave surfaces. For instance, Fig.~\ref{fig:a:success_case} shows a valid example, where horizontal forces cancel each other out.

\begin{figure}%
\centering
 \subfloat[Cone stability criterion fails.]{\includegraphics[width=0.44\columnwidth]{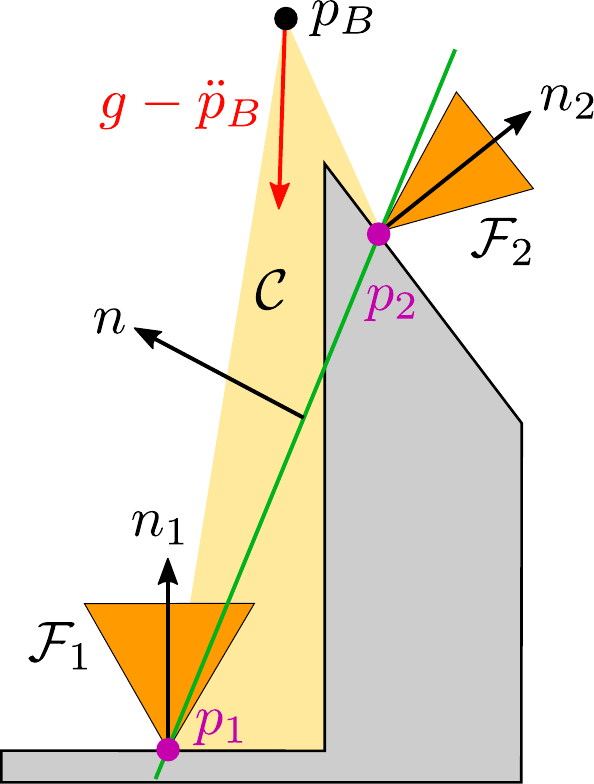} \label{fig:a:failure_case}}%
 \quad
 \subfloat[Cone stability criterion applies.]{\includegraphics[width=0.45\columnwidth]{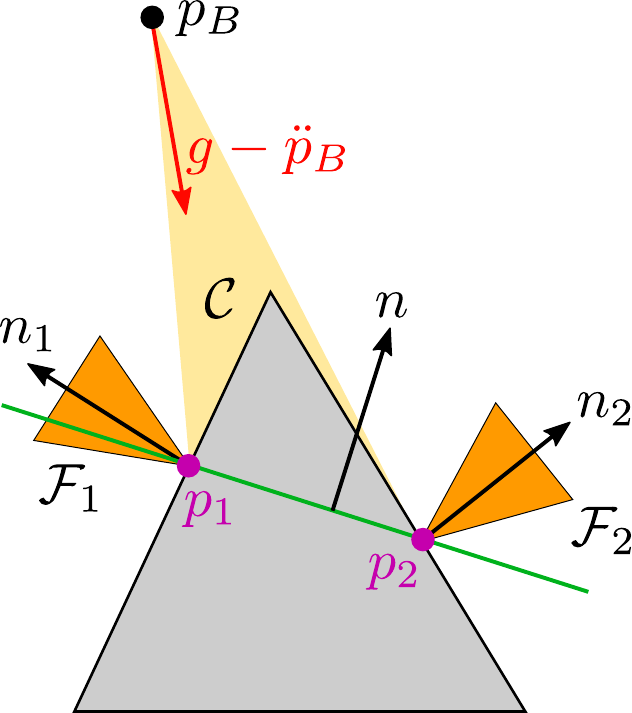} \label{fig:a:success_case}}%
 \caption{Left: We can find $\vec g - \vec {\ddot p}_B$ s.t. it is bounded by the \ac{GIAC} $\mathcal{C}$ and by the convex hull of friction cones $\{\mathcal{F}_1, \mathcal{F}_2 \}$. But we cannot find a contact force $\vec f_2\in\mathcal{F}_2$ that is lower bounded by the vector $(\vec p_2 - \vec p_1)$. Right: We can place the footholds $\vec p_1$ and $\vec p_2$ anywhere on the pyramid without ever violating proposition~\ref{a:dyn_prop2}.}
\end{figure}

\subsubsection{Single Contact and Full Flight Phase} 
In the case of a single contact, the 2D cone described by \eqref{eq:models:ss1} and \eqref{eq:models:ss2} simplifies to a line and the induced moment needs to aligned with it, i.e, $m(\vec p_B - \vec p_i) \times \vec a_B - \vec {\dot L}_B = \vec 0$.
Similarly, the base is constrained to evolve along the ballistic trajectory given by $\vec {\ddot p}_B = \vec g$ and $\vec {\dot L}_B = \vec 0$ during any full flight phase.

\subsection{Model Formulation} \label{ss:giac_summary}
Combining prop.~\ref{a:dyn_prop1} and~\ref{a:dyn_prop2} together with assumptions~\ref{ss:model_assumptions}, we can now state our dynamic stability criterion:
\begin{subequations}
\begin{align}
    \mu \vec e_z^T \cdot \vec a_B  \geq ||(\mathbb{I}_{3\times 3} - \vec e_z \vec e_z^T) \cdot \vec a_B || &&N&>0 \label{eq:models:m1}\\
    m \det(\vec p_{ij}, \vec p_B-\vec p_i, \vec a_B ) \leq \vec p_{ij}^T \cdot \vec {\dot L}_B&& N&\geq3 \label{eq:models:m2}  \\
    m \det(\vec p_{ij}, \vec p_B-\vec p_i, \vec a_B) = \vec p_{ij}^T \cdot \vec {\dot L}_B&& N &= 2 \\
    \det(\vec e_z, \vec p_{ij}, \vec M_i) \geq 0&& N &= 2 \\
    m(\vec p_B - \vec p_i) \times \vec a_B - \vec {\dot L}_B = \vec 0&& N &= 1 \label{eq:models:m5} \\
    \vec a_B = \vec 0 \quad \vec {\dot L}_B = \vec 0&& N &= 0. \label{eq:models:mN}
\end{align}
\end{subequations}

Due to its geometric interpretation discussed in~\ref{ss:giac_properties}, we refer to the set of constraints~\eqref{eq:models:m1} to~\eqref{eq:models:mN} as the \emph{\ac{GIAC} model}.

In section~\ref{ss:wbc:gm_observer} we recapitulate a method for estimating an external base wrench disturbance $\vec {\hat w}_{\text{ext},B} = \Mx{ \vec {\hat f}_B, \vec {\hat \tau}_B }$. By assuming the wrench to be constant over the prediction horizon, we can complete the dynamic constraints, e.g., \eqref{eq:models:m2} can be written as
\begin{equation} \label{eq:models:zmp_3legs}
       \det\left(\vec p_{ij}, \vec p_B-\vec p_i, \vec a_B + \frac{\vec {\hat f}_B}{m}\right) 
       \leq \vec p_{ij}^T \left(\frac{\vec {\dot L}_B }{m} - \vec {\hat \tau}_B \right).
\end{equation}

\subsection{Properties}  \label{ss:giac_properties}
In the appendix~\ref{a:comparison} we proof the following:
\begin{itemize}
    \item The \ac{GIAC} constraints~\eqref{eq:models:m2} to~\eqref{eq:models:mN} can be interpreted as the largest inscribing convex cone $\tilde{\mathcal{C}}$ of the so called \emph{\acf{GIAC}} $\mathcal{C}$, which is defined as the convex hull of rays connecting the base position with the footholds. Rate of change of the angular momentum changes shape and size of these cones~(\ref{ss:giac_interpretation}).
    \item The \ac{ZMP} is the projection of the \ac{GIA} vector onto the ground. On flat ground, the \ac{GIAC} constraints~\eqref{eq:models:m2} to~\eqref{eq:models:mN} simplify to the well known \ac{ZMP} stability criterion~(\ref{a:comparison_zmp}).
    \item Under assumption~\ref{ss:model_assumptions}, the \ac{GIAC} model is \emph{weak contact stable}, a property shared with the closely related \ac{CWC} models~(\ref{a:comparison_cwc}).
\end{itemize}

%====================================================================================================
\section{Perception} \label{s:Perception}
\begin{figure}
\centering
\includegraphics[width=1.0\columnwidth]{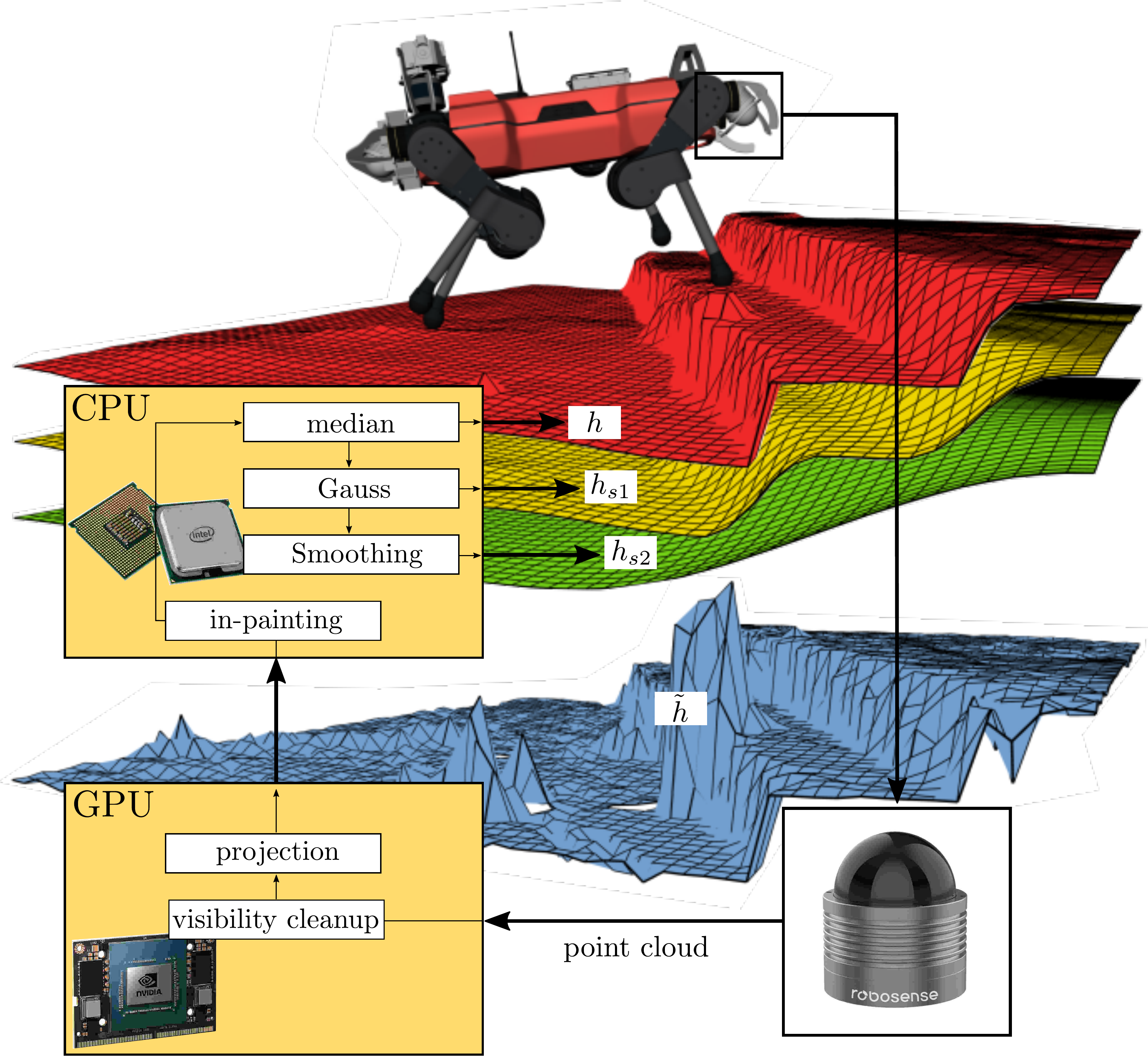}
\caption{Illustration of the mapping pipeline using data recorded in a real-world experiment. The raw map $\tilde h$ (blue) is generated on GPU and updated at \unit[20]{Hz}. A filter chain, implemented on CPU, removes noise and artifacts.}
\label{fig:planning:perception}
\end{figure}
Next to proprioceptive sensors, which allow the robot to perceive its surroundings by touch, vision is the second most important sense for locomotion. It allows the robot to not only react to the environment (blind) but also to plan its motion ahead w.r.t. the terrain topology (perceptive). 
The perception pipeline is illustrated in Fig.~\ref{fig:planning:perception}. Two onboard LiDARs are used to generate a detailed \emph{height map}~\cite{Fankhauser2014}, abbreviated by $\tilde h$. For performance reasons, projection and visibility clean-up are performed on an onboard GPU~\cite{fabianje2020}. Further map processing algorithms run on CPU and generate three height layers of different smoothness, $h$, $h_{s1}$, and $h_{s2}$. 

\subsection{Filtering}
We deploy a filter chain consisting of in-painting, outlier rejection, and smoothing.
1) First, we iterate over all grid cells in the raw map $\tilde h$, replacing empty cells with the minimum found across the occlusion border.
2) Reflections, damaged lenses, odometry drift, or a miss-aligned URDF are known sources of artifacts. These outliers are removed by a sequentially repeated median filter~\cite{medianFilter} on the inpainted map.
3) Given the de-noised map $h$, we compute two additional layers, $h_{s1}$ and $h_{s2}$. The former is a slightly Gaussian filtered version of the original map and is used to compute gradients of edges
\begin{equation}
    h_{s1} = \text{Gauss}_{\sigma_1}(h).
\end{equation}
The layer $h_{s2}$ represents a ``virtual floor'' and is derived in four steps. The map $h$ is aggressively median filtered and then subtracted from itself
\begin{equation}
\Delta h = h - \text{median}(h)   
\end{equation}
The height difference is used to set up a filter mask $m$,
\begin{equation}
    m_{ij} = 
    \begin{cases}
    1 & \Delta h_{ij}>0 \text{ (stepping stone)}\\
    1 & \Delta h_{ij}<0 \text{ (gap)} \\ 
    \infty & \text{otherwise}.
    \end{cases}
\end{equation}
In the third step, we widen stepping stones and narrow gaps through masked dilation. This filter replaces height values of $h$ by the maximum found in its close neighborhood. Candidate grid cells must be categorized as either stepping stones or gaps,
\begin{equation}
    h_\text{dilated} = \text{dilate}(h \mid m = 1).
\end{equation}
Finally, the processed height map is smoothed by a Gaussian filter with a large standard deviation $\sigma_2 > \sigma_1$
\begin{equation}
h_{s2} = \text{Gauss}_{\sigma_2}(h_\text{dilated}).
\end{equation}
Such a four-staged filter hierarchy is, in most cases, just identical to Gaussian smoothing. However, in the presence of positive/negative obstacles, more weight is naturally given to elevated parts, as exemplified in Fig.~\ref{fig:planning:stepping_stones}.

Post processing relies on \texttt{opencv}~\cite{opencv_library} filters which are deployed on CPU. For a grid map with dimensions \unit[2.4]{m} $\times$ \unit[2.4]{m} and a cell size \unit[4]{cm} $\times$ \unit[4]{cm}, the post-processing step takes about \unit[1.5]{ms}. 
\begin{figure}
\centering
\includegraphics[width=1.0\columnwidth]{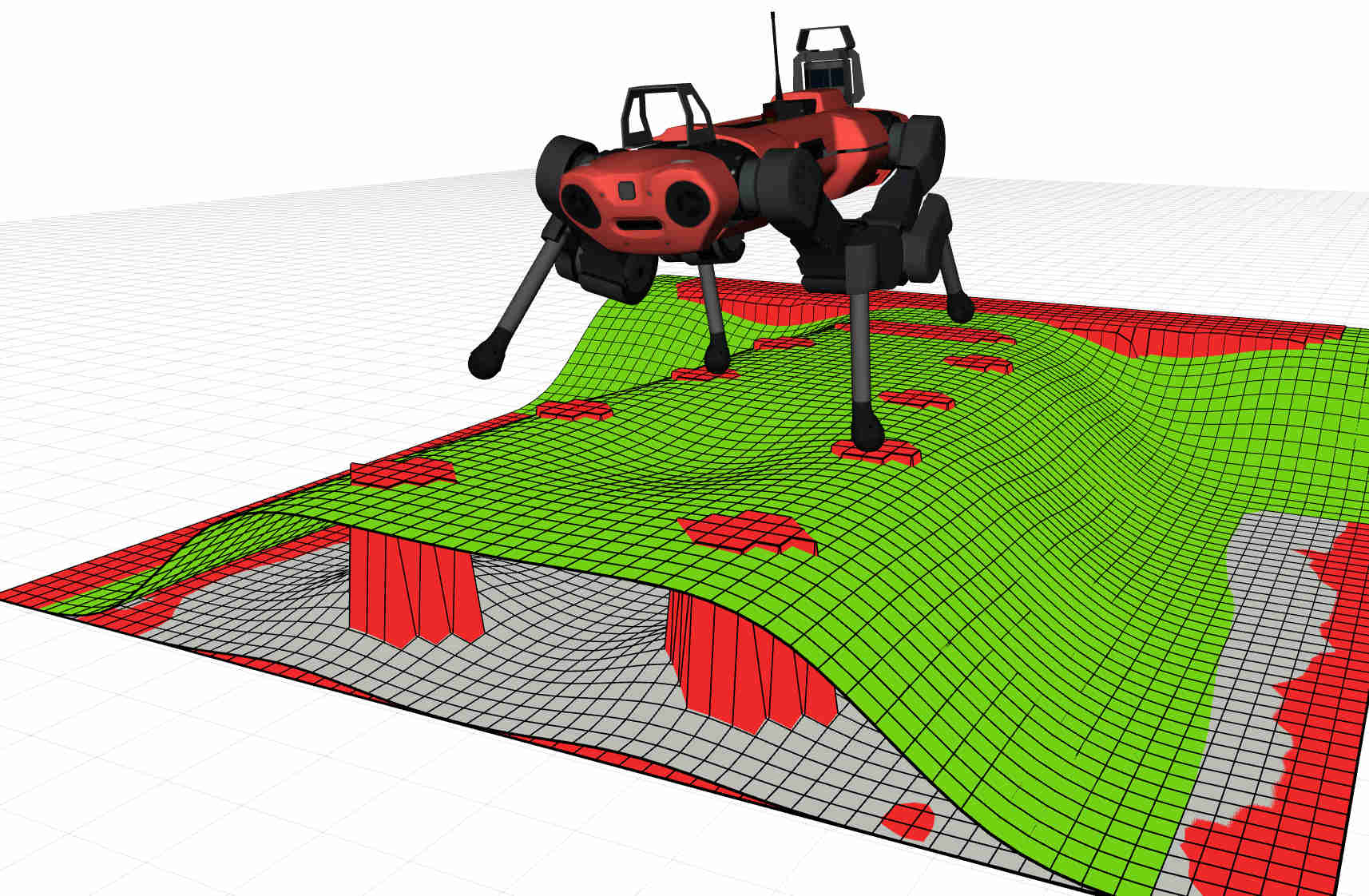}
\caption{The red map $h$, representing geometric features of the terrain, is used for foot placement, while the green map $h_{s2}$ serves as a base pose reference. In contrast to the gray layer, which would be obtained from $h$ by pure Gaussian smoothing, $h_{s2}$ spawns above the stepping stones. This encourages the robot to walk along the path indicated by the cylindrical pillars.}
\label{fig:planning:stepping_stones}
\end{figure}

\subsection{Numerical Differentiation and Interpolation}
We use a $1$D $5$-point central finite difference kernel for computing numerical derivatives
\begin{align}
    &\text{1th order} &&\frac{1}{12 \Delta x}\Mx{1&-8&0&8&-1} \\
    &\text{2th order} &&\frac{1}{12 \Delta x^2}\Mx{-1&16&-30&16&-1},
\end{align}
with $\Delta x$ the grid cell length and width.
Smoothing perpendicular to the derivative direction is not necessary, as the height maps $\{h, h_{s1}, h_{s2}\}$ are already relatively smooth. This reduces run time compared to classical $2$D derivative kernels as the number of multiplications and additions decreases by a factor of five. Gradients and curvatures are computed online whenever needed and stored in a look-up table for potential re-use. As the optimizer typically only visits a tiny fraction of grid cells, this approach is faster than pre-computing the convolutions for all layers a priory.

Since the height derivatives are computed on a discrete net, solutions to gradient-based solvers become restricted to the grid map's spatial resolution. This problem can be most prominently observed as jumps of the footholds between neighboring grid cells across two consecutive optimizations. We tackle this problem using a bi-linear interpolation scheme: Derivative values are interpolated between four neighboring grid cells.

%====================================================================================================
\section{Motion Optimization} \label{s:optimization_method}
We parametrize the base pose trajectory as a $6$D spline of fixed order five. The first three dimensions capture position, and the last three dimensions describe orientation using Euler angles.
To increase the feasible space, we allow the base acceleration to evolve discontinuous at contact transitions~\cite{Kalakrishnan1, BellicosoJenelten2017}. We do so by breaking the trajectory into different spline segments connected with each other at the transition times. For the $k$th spline we write the $l$th dimension as
\begin{equation}
    \vec \Pi_{B,kl}(t) = a_{0kl} + a_{1kl}t + \ldots + a_{4kl}t^4,
\end{equation}
with $t \in(0, \tau_k)$, $0\leq k <N_s-1$, $0 \leq l < 5$, where $a_{0kl},\ldots, a_{4kl}$ are the spline coefficients, $N_s$ is  the number of splines and $\tau_k$ is the duration of the $k$th spline.
The method is further explained in Fig.~\ref{fig:planning:collocation}.

\begin{figure}
\centering
\includegraphics[width=1.0\columnwidth]{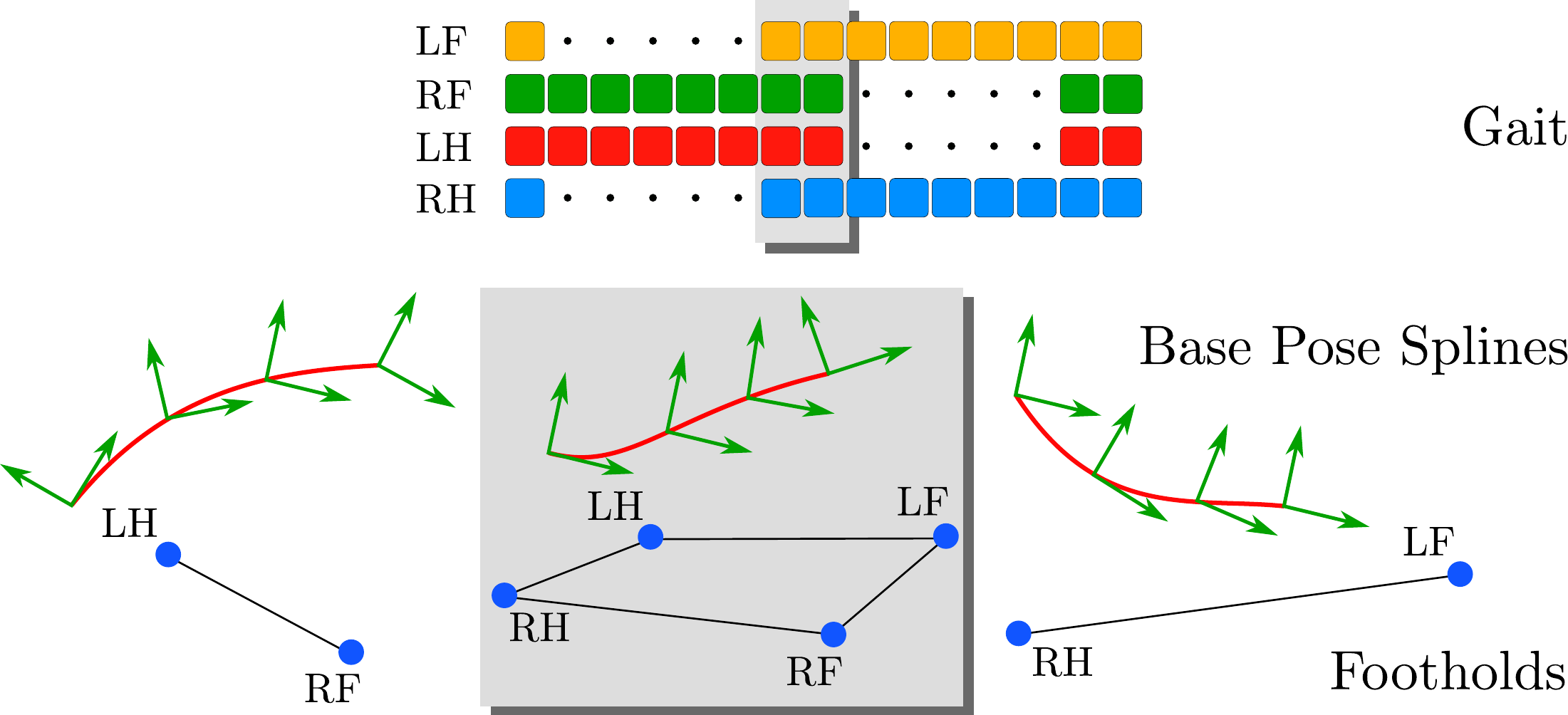}
\caption{The gait pattern is fixed and defines lift-off and touch-down timings for each leg. Two adjacent phase events enclose a time interval of constant contact states. Each phase is assigned a $6$D base pose spline and a set of active footholds. The prediction horizon includes one step per leg.}
\label{fig:planning:collocation}
\end{figure}

We stack all unknown variables \highlight{(which are spline coefficients $a_{0kl}, \ldots, a_{4kl}$, positions of grounded feet $\vec p_\text{meas}^T= \Mx{\vec p_{\text{meas},1}^T & \cdots & \vec p_{\text{meas},N}^T}$, desired footholds $\vec p^T = \Mx{\vec p_1^T & \cdots & \vec p_4^T}$, and slack variables $\vec \varepsilon^T$)} together into a state vector $\vec x$, 
\begin{equation}
    \vec x = \Mx{a_{000} & \cdots & a_{4(N_s-1)5} & \vec p_\text{meas}^T & \vec p^T & \vec \varepsilon^T}^T
\end{equation}
\highlight{For the reminder of this section, if not stated differently, we will no more differentiate between foot position and foothold, and refer to both as foothold $\vec p_i$.}

The constraint \ac{NLP} can be formulated as
\begin{equation} \label{eq:method:opt}
\begin{aligned}
    \min_{\vec x}~ &\sum_i f_i(\vec x) \\
    \text{s.t.} \quad \vec c_\text{eq}(\vec x) &= \vec 0,
    \quad \vec c_\text{ineq}(\vec x) &\leq \vec 0,
\end{aligned}
\end{equation}
where individual objective functions $f_i$, equality constraint $\vec c_\text{eq}$, and inequality constraints $\vec c_\text{ineq}$ are detailed in the following two subsections.

\subsection{Constraints}
\subsubsection{Initial and Junction Constraints}
Initial constraints are formulated for the base pose $\vec \Pi_B(0)$, base twist $\vec{\dot \Pi}_B(0)$, and foot positions $\vec p_{\text{meas},i}$ of grounded legs $i$. Junction constraints are imposed between two adjacent splines to ensure smoothness up to the first derivative, i.e, $\vec \Pi_{B,k+1}(0) = \vec \Pi_{B,k}(\tau_k)$ and $\vec {\dot \Pi}_{B,k+1}(0) = \vec {\dot \Pi}_{B,k}(\tau_k)$.

\subsubsection{Dynamic Stability} \label{ss:planning:dyn}
The dynamic constraints~\eqref{eq:models:m2} to \eqref{eq:models:mN} are implemented as slacked inequalities,
\begin{equation} \label{eq:planning:dyn}
        \min~ \varepsilon
        \qquad\text{s.t.}\quad \vec c_\text{dyn}(\vec x) \leq  \vec 1 \varepsilon,
        \quad  \varepsilon  \leq 0.
\end{equation}
Notice that constraints of type equality can always be transformed to inequalities by $\vec c_\text{dyn}(\vec x) = \Mx{\vec c_\text{dyn, eq}(\vec x) & -\vec c_\text{dyn, eq}(\vec x)}^T$. 

The slack variable $\varepsilon$, which is unique to each phase, maximizes the robustness margin, i.e., the smallest angle between the \ac{GIA} vector and the bounding cone. Choosing a large weight will increase the \ac{GIAC} volume by pushing the footholds away from each other.

\subsubsection{Convex \acs{GIAC}}
\begin{figure}
\centering
\includegraphics[width=1.0\columnwidth]{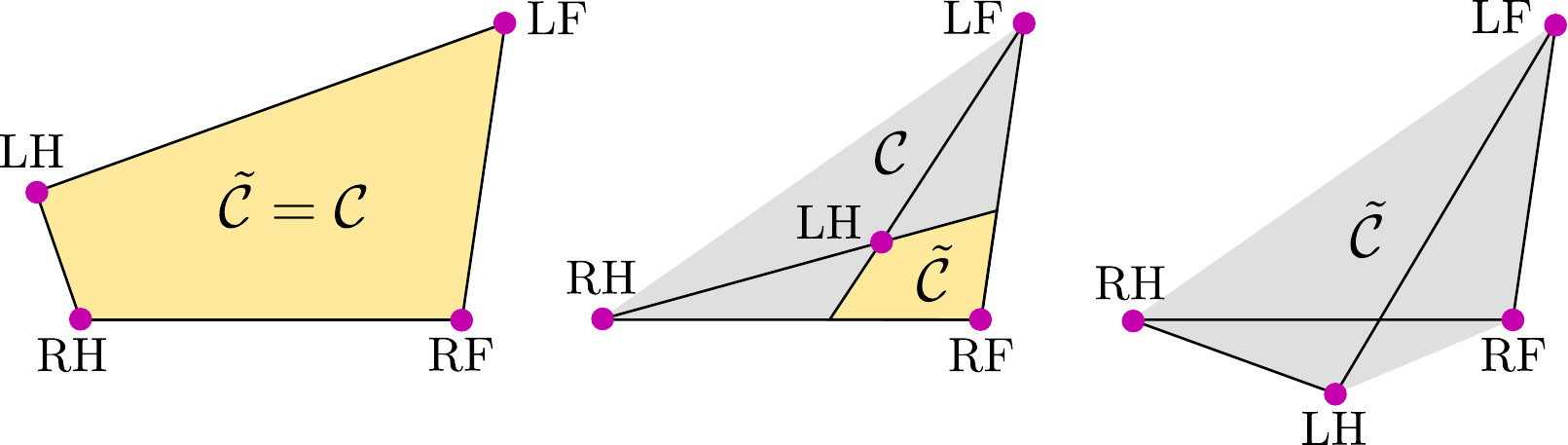}
\caption{Gray: \ac{GIAC} $\mathcal{C}$ as seen from the top. Yellow: Largest inscribing convex cone $\tilde{\mathcal{C}}$, defining the feasible space of the \ac{GIA} vector. If the cone $\mathcal{C}$ is convex, then $\tilde{\mathcal{C}}= \mathcal{C}$ (left), otherwise $\tilde{\mathcal{C}} \subset \mathcal{C}$ (middle), or $\tilde{\mathcal{C}} = \emptyset$ (right).}
\label{fig:planning:convex_cone}
\end{figure}
The feasible space imposed by the dynamic stability criterion is given by the cone $\tilde{\mathcal{C}} \subseteq \mathcal{C}$ with $\mathcal{C}$ being the \ac{GIAC}. If $\mathcal{C}$ is non-convex, then $\tilde{\mathcal{C}} \subset \mathcal{C}$, and the feasible space shrinks accordingly. Fig.~\ref{fig:planning:convex_cone} exemplifies three possible scenarios:  No, partial and complete reduction of the support volume. By enforcing convexity of $\mathcal{C}$, we can avoid the reduction of the effective feasible volume, and thereby guarantee that prop.~\ref{prop:models:zmp_linear} and constraints~\eqref{eq:models:m2} to \eqref{eq:models:mN} are identical. This can be achieved by constraining the footprint to a convex shape, and the kinematic configuration to an over-hanging torso, i.e., $\vec e_z^T \cdot \vec p_B \geq \vec e_z^T \cdot \vec p_i$ \highlight{(assumption~\ref{assumtopns:models:6})}. For a phase with four grounded feet \highlight{$\{1,2,3,4\}$}, counterclockwise ordered, the former constraint can be imposed as
\begin{equation} \label{eq:planning:convex_support}
\begin{aligned}
    (\vec p_{13} \times \vec p_{12}) \cdot \vec e_z & \leq 0 &
    (\vec p_{14} \times \vec p_{13}) \cdot \vec e_z & \leq 0 \\
    (\vec p_{24} \times \vec p_{23}) \cdot \vec e_z & \leq 0 &
    (\vec p_{20} \times \vec p_{24}) \cdot \vec e_z & \leq 0.
\end{aligned}
\end{equation}
We also apply~\eqref{eq:planning:convex_support} in the absence of a full stance phase: We iterate over future phase events and add corresponding footholds until a set of four is complete. This reduces the risk of swing-leg collisions, as explained in Fig.~\ref{fig:planning:covex_footholds1} and~\ref{fig:planning:covex_footholds2}.
\begin{figure*}%
\centering
 \subfloat[Unconstrained footprint: \ac{LF} overtakes \ac{RF}, leading to a collision of the two knee joints. Notice how the kinematic configuration is perfectly symmetric during the full stance phase.]{\includegraphics[width=0.33\textwidth]{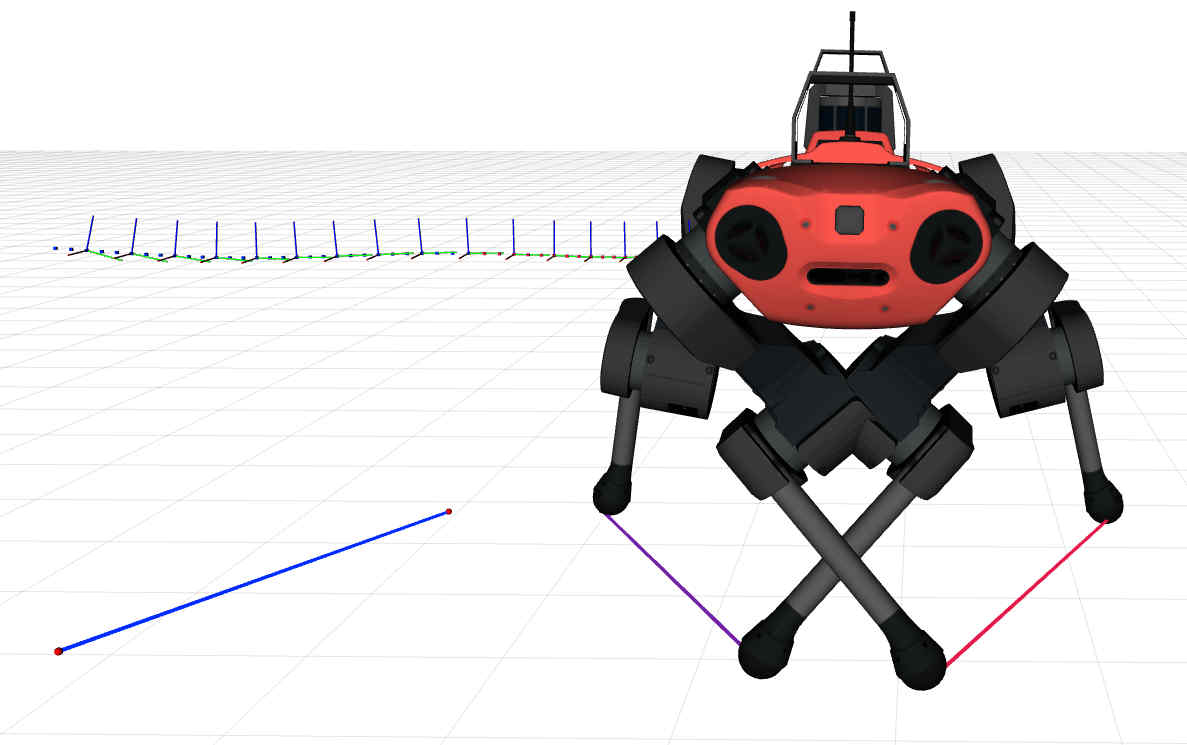} \label{fig:planning:covex_footholds1}}%
 \quad
 \subfloat[Enforcing convex footprint and collision avoidance: \ac{LF}, \ac{RF} and \ac{LH} join on a line. The realized velocity is considerably smaller and the kinematic symmetry is  perturbed.]{\includegraphics[width=0.29\textwidth]{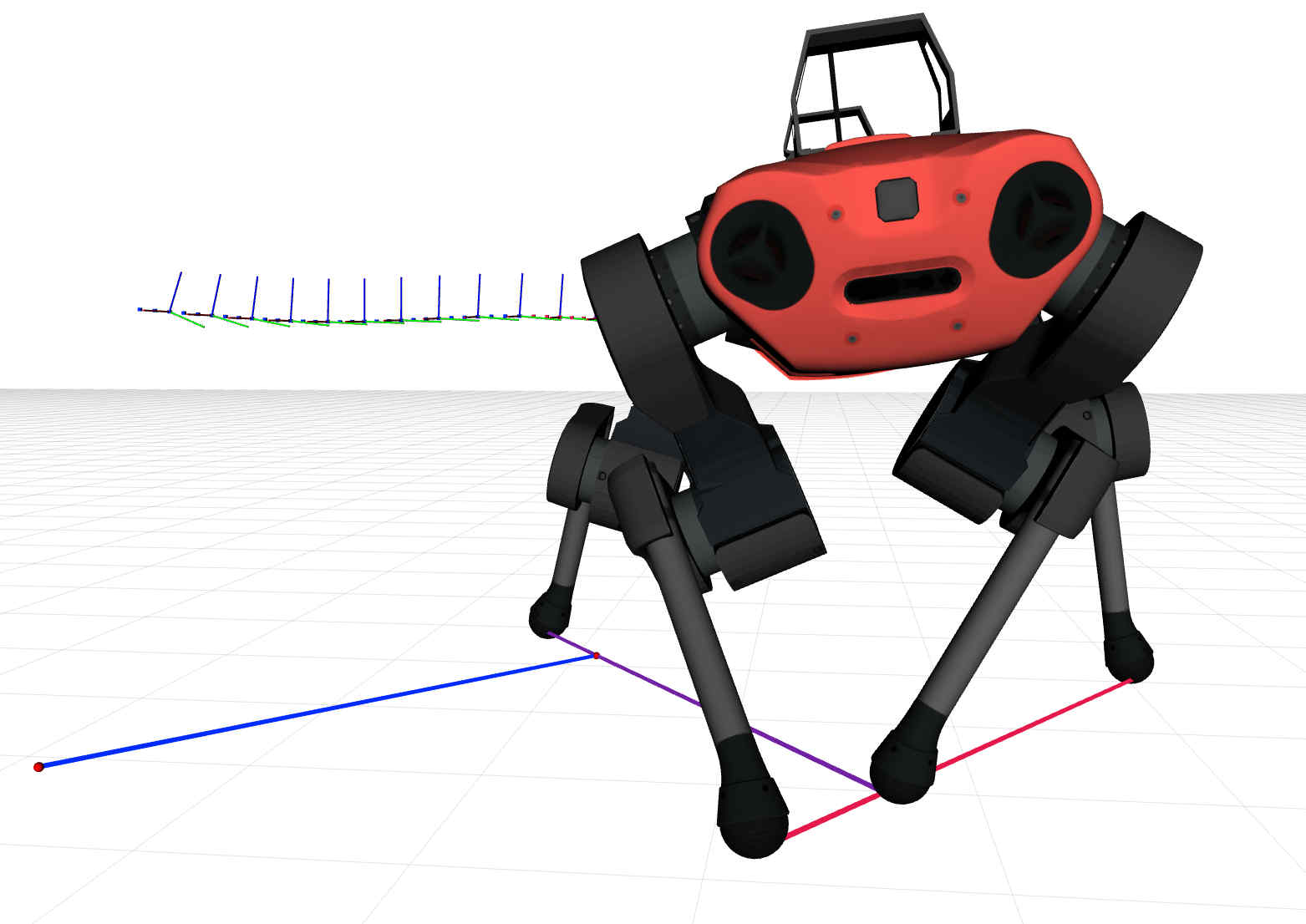} \label{fig:planning:covex_footholds2}}%
 \quad
 \subfloat[The constraint for footprint convexity is active, but leg collision constraints are disabled. The two front legs are likely to step on each other.]{\includegraphics[width=0.3\textwidth]{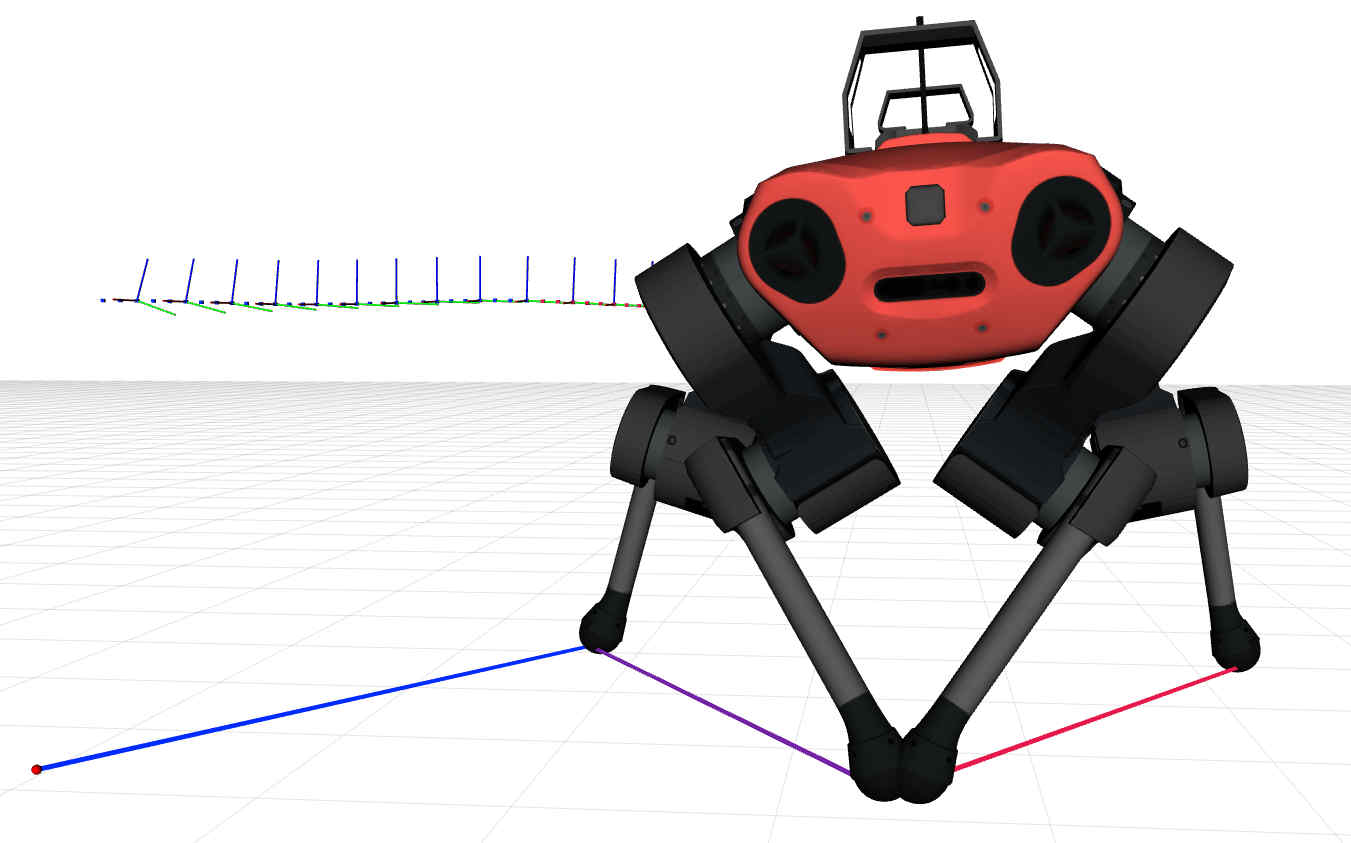} \label{fig:planning:covex_footholds3}}%
 \caption{Illustration of foothold tasks for a trotting gait on flat ground. The lateral reference velocity was \unit[1.5]{m/s} for all three simulation experiments. For our \ac{TO} method, we use the second set-up, which is less likely to encounter feet or knee joint collisions.} \label{fig:planning:task_convex_footholds}
\end{figure*}

\subsubsection{Friction Cone}
The no-slip condition is embedded in the weak and simplified form~\eqref{eq:models:simplified_no_slip_condition}. An additional constraint is required to ensure that the robot pushes \highlight{(assumption~\ref{assumtopns:models:3})} on the ground, i.e., $\vec e_z^T \cdot \vec {\ddot p}_B \geq \vec e_z^T \cdot \vec g$.

\subsubsection{Kinematics} \label{ss:planning:kin_constraints}
We approximate the reachable space of an end-effector $i$ by two balls, centered at the rotation center of the hip\footnote{In case of ANYmal, this point can be found where the rotation axis of the \ac{HAA} and \ac{HFE} joint intersect.}
\begin{equation} \label{eq:planning:kinematics}
l_\text{min}^2 \leq ||\vec p_B + \mat R_B \vec r_i^B - \vec p_i||^2 \leq l_\text{max}^2.
\end{equation}
The vector $\vec r_i^B$ specifies the limb center and $l_\text{max}$/$l_\text{min}$ denotes the maximal/minimal limb extension.

\subsection{Objectives}
\subsubsection{Footholds on Ground} \label{ss:planning:footholds_on_ground}
To ensure that a desired foothold $\vec p_i$ is located on the height map $h$, we may incorporate the constraint $h(\vec p_i) = \vec e_z^T \cdot \vec p_i$. As Fig.~\ref{fig:planning:local_opt} exemplifies, such a formulation can lead to a suboptimal solution or even local infeasibility. We therefore soften the task and write $\min \big(h(\vec p_i) - \vec e_z^T \cdot \vec p_i \big)^2$.

\subsubsection{Leg Collision Avoidance}
During fast lateral motions, foot collisions with neighboring legs are likely to occur. We wish to enforce a lower bound $\epsilon_\text{min}$ on the Euclidean distance in the $xy$ plane between any two feet $i$ and $j$, for instance
\begin{equation}
    z(\vec p_i, \vec p_j) = ||(\mat I - \vec e_z \vec e_z^T)\cdot (\vec p_i - \vec p_j)||^2 \geq \epsilon_\text{min}.
\end{equation}
Unfortunately, this constraint is concave and not well suited for gradient-based optimization. Instead, we use a one-sided quadratic barrier approximation~\cite{gaertner2021collisionfree},
\[
\min~
\begin{cases}
(\epsilon_\text{min} - z(\vec p_i, \vec p_j))^2 & z < \epsilon_\text{min} \\
0            & z   \geq \epsilon_\text{min}.
\end{cases}
\]
Fig.~\ref{fig:planning:covex_footholds2} illustrates the impact of the collision avoidance task on the solution.

\subsubsection{Nominal Kinematics}
We want to relate a set of nominal footholds with a nominal base pose. By introducing a desired leg extension vector $\vec l_\text{des} = \Mx{ 0 & 0 & h_\text{des}}^T$ with $h_\text{des}$ the desired height above ground, we can write
\begin{equation} \label{eq:planning:nom_kin}
    \min~||\vec p_B + \mat R_B \vec r_i^B - \vec p_i - \vec l_\text{des}||^2.
\end{equation}
For each foothold, the objective is applied once per stance. More specifically, we define this time as midstance, which introduces the symmetric behavior observed in Fig.~\ref{fig:planning:covex_footholds1}.

\subsubsection{Base Pose Alignment} \label{ss:planning:base_pose_alignment}
A popular approach used in blind quadrupedal locomotion is to align the torso with some local terrain estimate~\cite{Gehring2016,BellicosoJenelten2017,Bellicoso2}, but similar ideas are also used for perceptive planners~\cite{fabianje2020, Ruben3}. The terrain is typically represented by a plane, obtained by a least-squares fit from current and previous stance foot locations~\cite{Gehring2016}. 

We choose a more general approach and invoke 
\begin{equation} \label{eq:planning:base_pose_alignment}
\min~\sum_{i = 1}^4 \Big[\vec e_z^T \cdot (\vec p_B + \mat R_B \vec r_i^B - \vec l_\text{des}) - h_{s2}(\vec p_B) \Big]^2.
\end{equation}
The local plane is somewhat implicitly given by the spatial locations of limb thighs $\{\vec p_B + \mat R_B \vec r_i^B\}_{\highlight{i=1,\ldots, 4}}$, and objective~\eqref{eq:planning:base_pose_alignment} resembles the least squares fit.
The smooth map $h_{s2}$ is preferred over $h$, as it ``smooths out'' non-convexity.

\subsubsection{Edge Avoidance} \label{ss:planning:edge_avoidance}
The robot shall avoid stepping on edges to minimize the risk of slippage. Methods that rely on plane segmentation avoid this problem naturally for point feet~\cite{Ruben3}, or at least give rise to certain simplifications for humanoid feet~\cite{Griffin2019}. An often encountered idea merges geometric features into a \emph{foothold score} and selects footholds in a way that minimizes this quantity. Such cost functions are widely used in \ac{MMO} structures~\cite{Kolter1, Kalakrishnan1, Fankhauser2018, Magana2019, fabianje2020}.  

Plane segmentation and foothold scores are computationally expensive. Moreover, for gradient based methods, the foothold score needs to be at least locally convex\footnote{There needs to be a gradient that pushes the foothold in the ``good'' direction.} and conventional traversability maps or binary scores do not apply. Instead, we suggest to penalize directly height gradient
\begin{equation}
\min~\nabla h(\vec p_i)^T \nabla h(\vec p_i) + \nabla h_{s1}(\vec p_i)^T \nabla h_{s1}(\vec p_i).
\end{equation}
We utilize the blurred version $h_{s1}$ in the second term to push the optimum away from edges. The larger the standard deviation of the Gaussian kernel is chosen, the farther away the feet will be placed from the edges, but the more details are blurred out. Both cost terms penalize inclination, which is in line with the horizontal contact plane \highlight{assumption~\ref{assumtopns:models:5}}.

\subsubsection{Previous Solution}
In the presence of external (e.g., slip) and internal (e.g., noisy elevation map, imperfect tracking) disturbances, desired footholds are likely to jump in between two adjacent optimization steps. And such jumps are particularly large if the ground is cluttered with edges or other infeasible foothold locations. To increase the step confidence, we minimize squared distance to the previous optimal foothold.

\subsubsection{Tracking} \label{ss:planning:tracking}
We track momentum in base frame by
\begin{equation}
    \min~\frac{||\vec P_B^B - \vec P_\text{des}^B||^2}{m^2}  + 
    ||\vec L_B^B - \vec L_\text{des}^B||^2.
\end{equation}
This task is identical to approach the planned twist towards a reference twist $\{ \vec{\dot p}_{B,\text{des}}^B,~ \vec{\omega}_{B,\text{des}}^B \}$ and using the weight $\mathbb{I}_{3\times 3}$ for the linear and the weight $\mat I_B^T \mat I_B$ for the angular velocity. 

\subsubsection{Smoothness} \label{ss:planning:min_diff_momentum}
In order to not diverge too far from \highlight{assumption~\ref{assumtopns:models:2},} we minimize the rate of change of angular momentum by $\min||\vec {\dot L}_B||^2$.

\subsection{Implementation Details}
\subsubsection{Solver}
By leveraging the \emph{\ac{GN}} method, as summarized in appendix~\ref{a:gauss_newton}, we can extract the positive definite part of a non-convex objective. We use a custom-made \ac{SQP} solver that internally approximates the problem as a sequence of \acp{QP}. The convexity of the resulting \acp{QP} is exploited by deploying an efficient implementation of \texttt{QuadProg++}~\cite{LucaDiGaspero1998}. 

At each iteration, a line search globalization approach~\cite{NoceWrig06} trades off cost function minimization and constraint violation to compute a search step length. As long as the linearization of the feasible set is non-empty, the \ac{SQP} solver is guaranteed to find a local solution. Since the tasks can be non-convex on a global scale, optimality can only be guaranteed locally.

\subsubsection{Automatic Differentiation}
Analytical derivatives of the tasks are sometimes difficult to derive by hand. They might also be numerically inefficient due to a large number of trigonometric multiplications. To this end, we deploy \texttt{CppADCodeGen}~\cite{CppADCodeGen}, a \texttt{C++} library for automatic differentiation and code generation.

\subsubsection{Trajectory sampling and Tuning}
Some tasks appear once per problem or once per spline segment, while others are enforced along the entire prediction horizon. These continuous tasks are sampled on a discrete grid and enforced for each $T_k = \tau_k/6$ seconds. Table~\ref{tab:weights} summarizes the weights used in \ac{TAMOLS} which are identical for all gaits. The weight for foothold task~\ref{ss:planning:footholds_on_ground} is chosen comparably large s.t. the constrained violation of a kinematically feasible problem stays within the mapping accuracy ($<$ \unit[5]{mm}). Significant constraint violations are very rare and only happen when no feasible solution exists around the initial guess.

\begin{table}
\centering
\caption{Weights for soft tasks. Continues-time tasks are sampled along the trajectory and marked as ``sampled''.}
\begin{tabular}{l c c}
\toprule
task                      & sampled     & weight  \\ \midrule
robustness margin         &             & $0.007$     \\
footholds on ground       &             & $10^{4}$     \\
leg collision avoidance   &             & $0.001$     \\
nominal kinematics        &             & $7$     \\
base pose alignment       & \checkmark  & $100 \cdot T_k$     \\
edge avoidance            &             & $3$     \\
previous solution         &             & $0.01$     \\
tracking                  & \checkmark  & $2\cdot T_k$     \\
smoothness                & \checkmark  & $0.001\cdot T_k$     \\ \bottomrule
\end{tabular}
\label{tab:weights}
\end{table}

\subsection{Initialization}\label{ss:planning:init_guess}
\begin{figure}
\centering
\includegraphics[width=0.80\columnwidth]{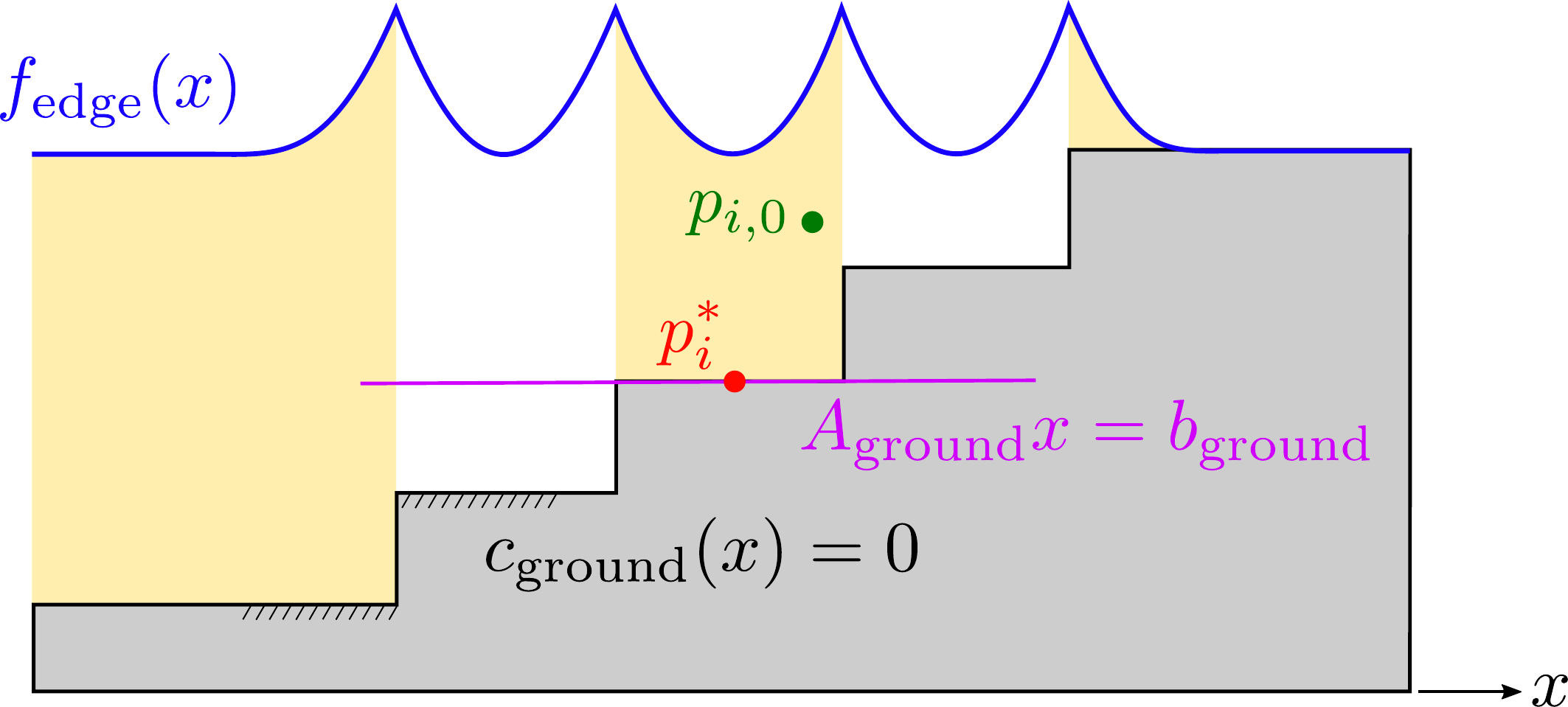}
\caption{Given an initial foothold $\vec p_{i,0}$, the optimizer will find a solution $\vec p_i^*$, which is most likely located on the same tread. The cost function for edge avoidance $f_\text{edge}$ increases on both ends of the tread, inducing a local optimum at its center. Moreover, the linearization of the height constraint $c_\text{ground} = 0$ is only valid on the tread where the constraint was linearized about. This renders exploration beyond the initial tread difficult because either the line search could reject such an update or the update could lead to an infeasible kinematic configuration in the next SQP iteration. A soft task allows finding feasible footholds by exploring unfeasible intermediate solutions.}
\label{fig:planning:local_opt}
\end{figure}
In contrast to most interior point methods, \ac{SQP} solvers can be efficiently warm started, in our case, using the latest optimal solution. However, at each leg touch-down, a new foothold and base pose spline segment appear, for which there is no previous solution available. Initializing the new states with some heuristics is prone to drive the solution into a local optimum. This issue is elaborated using the stair example in Fig.~\ref{fig:planning:local_opt}. 
Additional problems arise due to the \ac{GN} approximation, which disregards important second-order derivative information. For instance, the objective for edge avoidance~\ref{ss:planning:edge_avoidance} loses the curvature of the map, rendering the classification of extrema impossible. It is crucial to provide an initial guess located sufficiently far from the saddle or maximum points in this particular case.   

Even though indisputably important, the available literature often does not give insightful information into the initialization procedure. Some exceptions can be found in the data-driven community, where initial guesses are learned~\cite{Mansard2018,towr_learning,Melon2021}. 

\subsubsection{Batch Search Optimization}
In our previous work~\cite{fabianje2020}, we have introduced \emph{batch search}. The algorithm first establishes nominal footholds, based on Raibert's heuristic, around an approximate base pose trajectory. It then iterates within a circular search space centered about the nominal foothold, selecting a kinematically reachable grid cell with the lowest cost. Since footholds are selected independently from each other, and the base pose trajectory does not take into account the terrain topology, such an initial guess might lead to a suboptimal or even infeasible optimization problem.

\subsubsection{Graduated Optimization}
Instead of finding a more sophisticated initial guess capable of avoiding the aforementioned pitfalls, we deploy a global optimization technique: At each leg touch-down, we solve a sequence of \ac{TO} problems, starting with a greatly simplified problem and progressively approaching towards the original problem. Such a technique is generally known as \emph{graduated optimization}~\cite{Mobahi2015}.

In the case of \ac{TAMOLS}, non-convexity and discontinuity are introduced majorly by the terrain and the base orientation, but most prominently with the two foothold tasks~\ref{ss:planning:footholds_on_ground} and~\ref{ss:planning:edge_avoidance}. This observation gives rise to the idea of solving a ``nearly convex'' \ac{TO} in the first iteration by replacing the height maps $h$ and $h_{s1}$ with $h_{s2}$. The robot's measured state suffices already as an initial guess for this simplified \ac{NLP} to succeed. 

We could now continue in the sense of graduated optimization and introduce several virtual floors of different smoothness that gradually approach to the original map. For each of these virtual floors, a \ac{TO} problem needs to be solved, progressively adjusting the solution towards the actual terrain. Even though such an approach may be capable of finding the global solution, it would not be tractable due to the exploding number of optimization steps.

\subsubsection{Hierarchical Graduated Optimization}
\begin{figure}
\centering
\includegraphics[width=1.0\columnwidth]{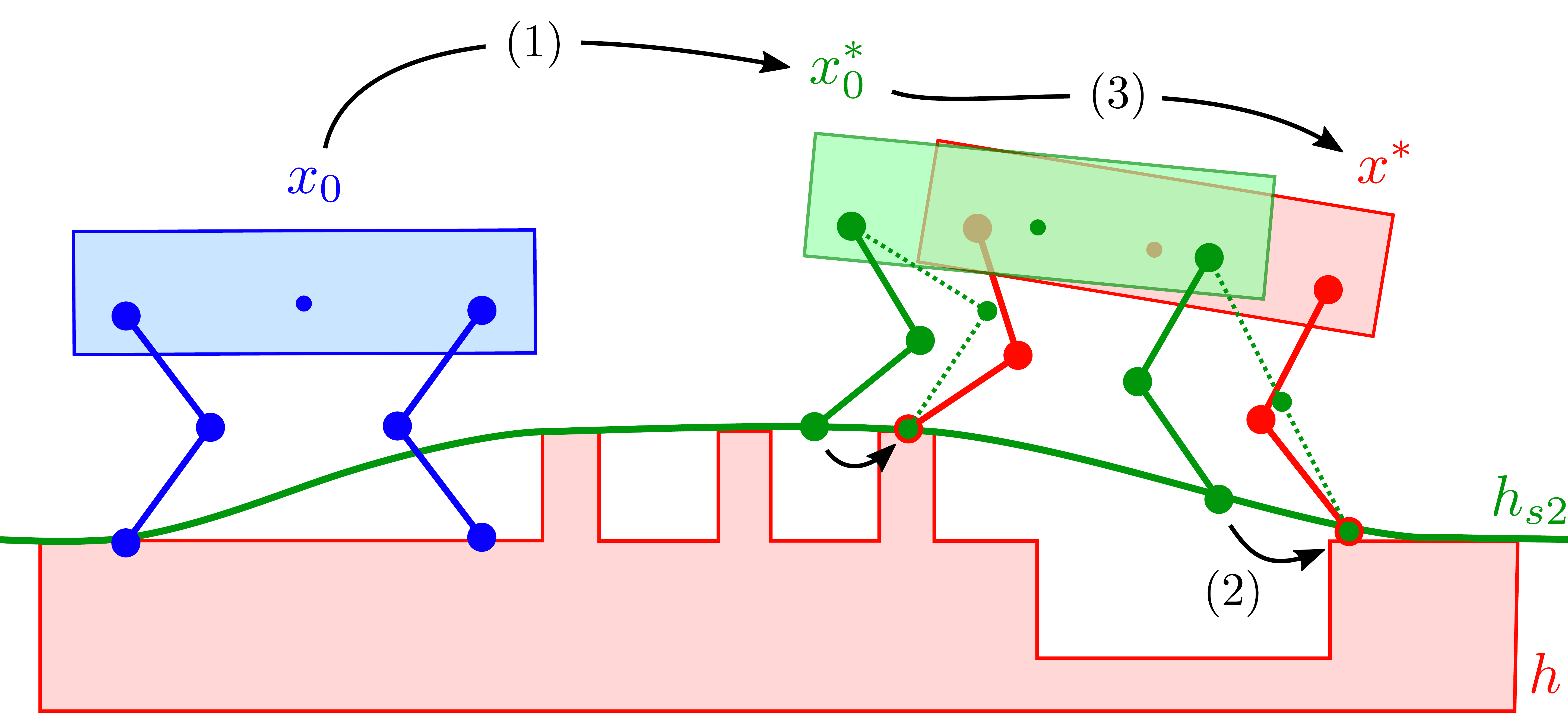}
\caption{(1) Given an initial guess $\vec x_0$ \highlight{(blue)} the motion is optimized over the virtual floor $h_{s2}$, resulting in the intermediate solution $\vec x_0^*$ (green). (2) The post-processed intermediate solution (dashed) is obtained by refining the footholds w.r.t. to the height map $h$. (3) The refined solution is used to warm start the original \ac{NLP}, constraining the motion to $h$ and producing the final solution $\vec x^*$ \highlight{(red)}.}
\label{fig:planning:solver_overview}
\end{figure}

We notice that task \eqref{ss:planning:base_pose_alignment} keeps the base pose aligned with $h_{s2}$. We thus expect the base trajectory to stay approximately constant across different iterations of the graduating optimization scheme. This leads to the idea of truncating the sequence after solving the first \ac{TO} problem while replacing the remaining sequence with a foothold refinement step. We do so by leveraging a simplified version of the batch search. The embedded cost functions penalize occluded cells of $\tilde h$, gradient and positive curvature of $h_{s1}$ and vertical distance to $h_{s2}$ within a \unit[0.4]{m} wide search area centered about the previously optimized foothold.

In the last stage, the optimized and refined state is subsequently passed over the actual \ac{NLP} as a new initial guess. The resulting three-staged optimization hierarchy is visualized in Fig.~\ref{fig:planning:solver_overview} and applied at each leg touch-down. 

\subsection{Swing Trajectory}
A collision-free swing trajectory $\vec s(t)$ is established that connects the lift-off position $\vec p_\text{lo}$ with the desired foothold $\vec p_i$. The gradient-free  method  is  geometrically motivated in Fig.~\ref{fig:planning:swing1}. 
The trajectory consists of two quintic splines, smoothly connected at midswing phase, corresponding to spatial location $\vec s_{0.5}$. The parametrization introduces $6\cdot 2$ spline coefficients, from which $3\cdot 3$ are given by the initial, final, and junction conditions. We assume that the edge of an obstacle is located in between the lift-off and touch-down position, along the step normal $\vec n_s$. We place the spline junction onto that edge, i.e., $\vec s_{0.5} = h_a \vec n_s + 0.5(\vec p_\text{lo} + \vec p_i)$, leaving one \ac{DOF} left. The apex height $h_a$ is found iteratively by lifting $\vec s_{0.5}$ along $\vec n_s$ until the obstacle is cleared. 
\begin{figure}
\begin{minipage}{0.55\columnwidth}
\centering
\subfloat[]{\label{main:a}\includegraphics[width=1\textwidth]{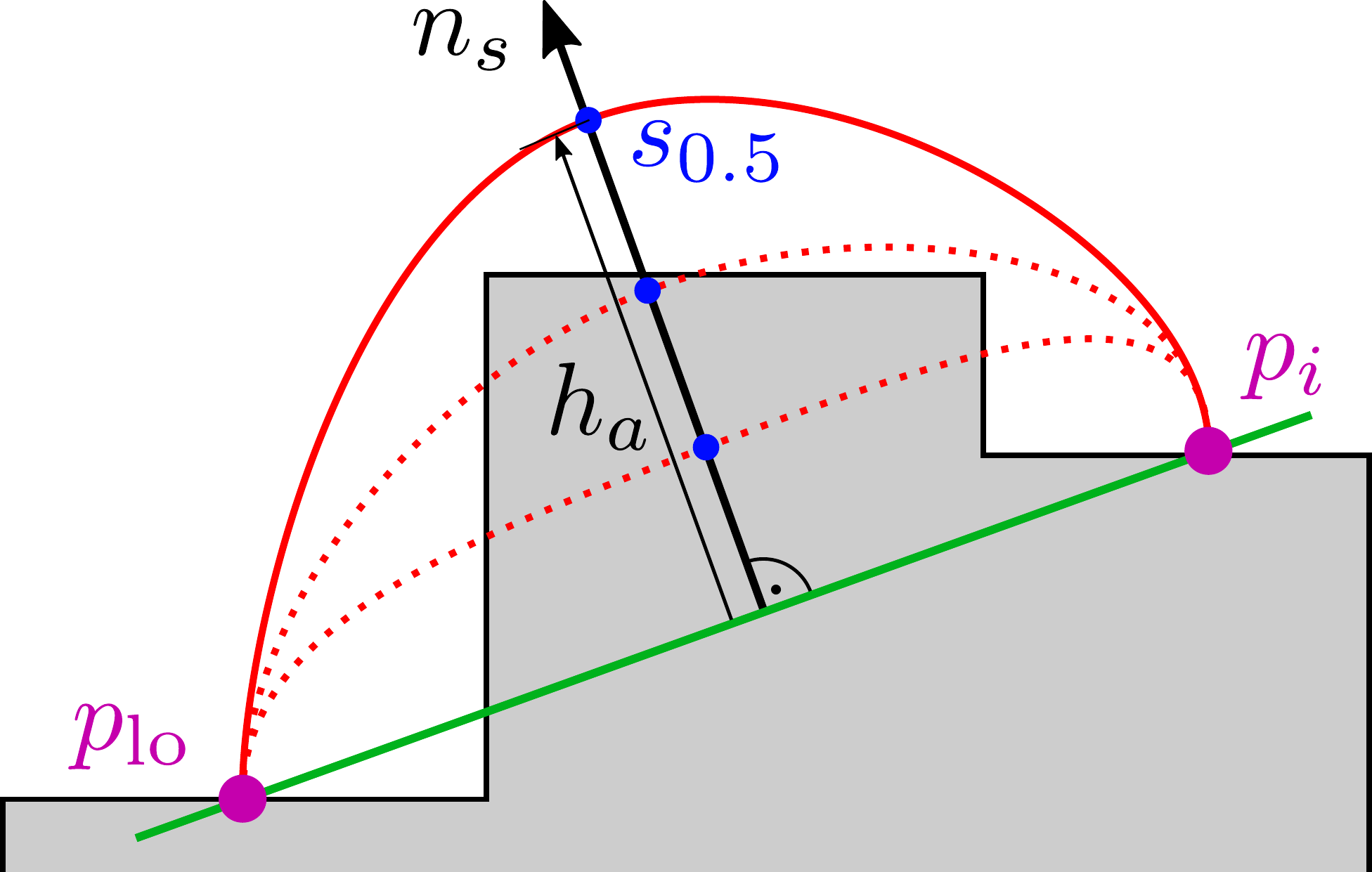} \label{fig:planning:swing1}} \\
\subfloat[]{\label{main:b}\includegraphics[width=1\textwidth]{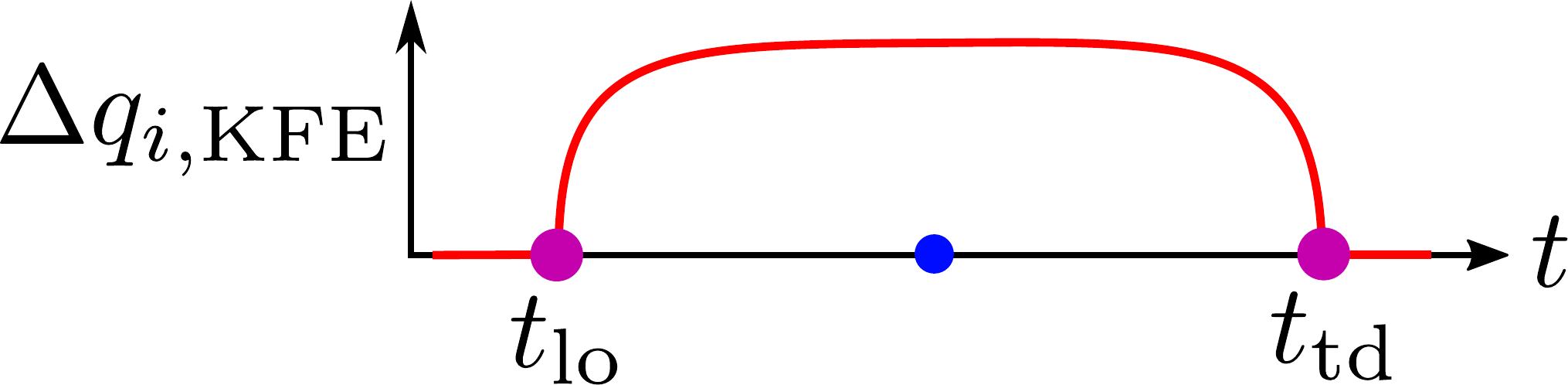} \label{fig:planning:swing2}}
\end{minipage}
\hspace{0.01\columnwidth}
\begin{minipage}{0.35\columnwidth}
\centering
\subfloat[]{\label{main:c}\includegraphics[width=1\textwidth]{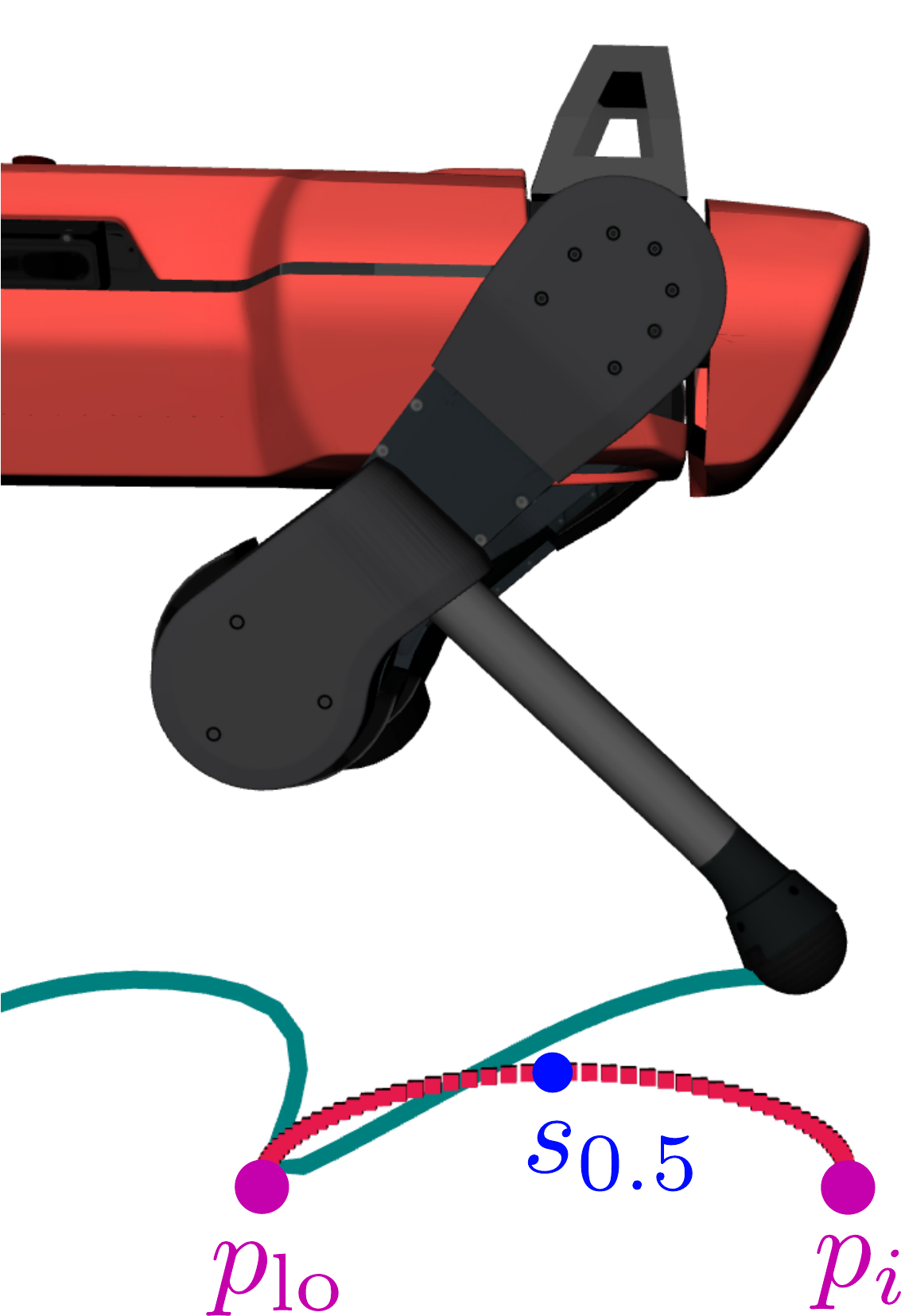} \label{fig:planning:swing3}}
\end{minipage}
\caption{(a) Geometric construction of the swing trajectroy in task space. (b) The knee trajectory flexes the joint between lift-off $t_\text{lo}$ and touch-down time $t_\text{td}$. (c) Overlaid task space trajectory.}
\label{fig:planning:swing}
\end{figure}

Due to tracking errors and odometry drift, a swing leg may still collide with the environment. After the collision, the foot travels along the vertical direction of the obstacle until a stable contact is detected. The new touch-down location shrinks the volume of the planned \ac{GIAC}, and stability of the truncated motion is no more guaranteed. We wish the collision to take place in an early stage of the swing phase, s.t. the \ac{GIA} vector is still contained in the stability margin. In other words, we wish the foot to travel in the shortest possible time to the apex. Setting up such trajectories in task space might require accelerating a lot of inertia.\footnote{The knee joint of ANYMal is relatively heavy} Instead, we overly the swing trajectory with a joint space offset trajectory, that only flexes joints carrying the smallest inertia, i.e.,  all \ac{KFE} joints. Let  $\vec q_i$ be the generalized coordinates of limb $i$, then the new swing motion becomes
\begin{equation}
\begin{aligned}
     \vec q_{i,\text{des,new}}(t) &= \vec q_{i,\text{des}}(t) + \Delta \vec q_i(t) \\
     \vec {\dot q}_{i,\text{des,new}}(t) &= \vec {\dot q}_{i,\text{des}}(t) + \Delta \vec {\dot q}_i(t) \\
     \vec {\ddot q}_{i,\text{des,new}}(t) &= \vec {\ddot q}_{i,\text{des}}(t) + \Delta \vec {\ddot q}_i(t).
\end{aligned}
\end{equation}
A valid offset trajectory $\Delta \vec q_i$ needs to satisfy
\begin{equation}
\begin{aligned}
    &j \neq \text{\ac{KFE}: } &\Delta \vec q_{ij}(t) &= 0 & \forall t& \\
    &j = \text{\ac{KFE}: }    &\Delta \vec q_{ij}(t) &= 0 & t_\text{td} < t < t_\text{lo}&.
\end{aligned}
\end{equation}
Fig.~\ref{fig:planning:swing2} shows a plot of the \ac{KFE} element. We first convert the swing trajectory into joint space using analytical inverse kinematics, pseudo-inverse differential kinematics and dynamics, yielding desired joint positions $\vec q_{i,\text{des}}(t)$, velocities $\vec {\dot q}_{i,\text{des}}(t)$ and accelerations $\vec {\ddot q}_{i,\text{des}}(t)$. After adding the offsets $\{\Delta \vec q_i, \Delta \dot{\vec q}_i, \Delta \ddot{\vec q}_i\}$ the signals are converted back to task space by solving the forward problem, leading to a new swing trajectory as shown in Fig.~\ref{fig:planning:swing3}.

%====================================================================================================
\section{Whole Body Control} \label{s:wbc}
\subsection{Tracking} \label{ss:wbc:tracking}
\begin{table}
\centering
\caption{Task priorities used for \ac{WBC}.}
\begin{tabular}{l c c}
\toprule
task                                    & type      & priority  \\ \midrule
equations of motion                     & $=$       & $0$       \\
joint torque limits                     & $\leq$    & $0$       \\  \midrule
kinematic limits                        & $\leq$    & $1$       \\
friction pyramid (stance legs)          & $\leq$    & $1$       \\
no contact motion (stance legs)         & $=$       & $1$       \\
tracking in task space (swing legs)     & $=$       & $2$       \\
tracking in task space (torso)          & $=$       & $3$       \\ \midrule
tracking in joint space                 & $=$       &$4$        \\
minimize contact forces                 & $=$       &$4$        \\ \bottomrule
\end{tabular}
\label{tab:wbc_priorities}
\end{table}
The reference signals are tracked by a \ac{WBC}~\cite{BellicosoJenelten2017} in task space, while an impedance control law~\cite{fabianje2020} improves tracking under low loads. Table~\ref{tab:wbc_priorities} outlines the task priorities. The tasks are categorized into three blocks, where the first one contains physical constraints at highest priority. The middle block ensures stable contacts and tracks the reference trajectories in task space. The prioritization is such that a saturation of kinematic limits causes a base pose adaptation in favor of swing leg tracking. This feature is particularly useful in the case of regaining~\cite{Bellicoso2016}, i.e., an event where a leg expects a contact, but it has not been established yet. In this case, the foot is pushing vertically towards the ground with a constant velocity. The motion optimizer will adjust the pose trajectory accordingly, but with a delay of one optimization duration. In this short time period, the \ac{WBC} sacrifices tracking performance of the base pose to avoid a regaining leg to enter a singular configuration. We use the kinematics limits task introduced in~\cite{Bellicoso2016}, where we constrain only the \ac{KFE} joints to enforce the X-configuration.\footnote{Depending on the joint index, the angle is upper or lower bounded by $0$.}
The last block consists of low-priority tasks used to completely resolve the null-space.

\subsection{GM-Observer and Disturbance Rejection} \label{ss:wbc:gm_observer}
Consider the \ac{EOM} of an articulated robot structure with $n$ actuated joints
\begin{equation}
    \mat M(\vec q) \vec{\dot u} + \vec h(\vec q, \vec u) = \mat S^T \vec \tau + \mat J(\vec q)^T \vec F_\text{ext},
\end{equation}
with $\vec q\in\mathbb{R}^{(n+6)}$, $\vec u\in\mathbb{R}^{(n+6)}$ the generalized coordinates and velocities, $\vec \tau\in\mathbb{R}^n$ the joint torques, $\vec F_\text{ext} = \Mx{\vec f_B & \vec \tau_B & \vec f_1 & \ldots & \vec f_N} \in\mathbb{R}^{6+3N}$ a vector stacking external base force, base momentum and contact forces, $\mat M(\vec q)$ the mass matrix, $\vec h(\vec q, \vec u)$ the Coriolis, centrifugal and gravity terms, $\mat S^T$ the selection matrix of the actuated \acp{DOF} and $\mat J(\vec q) \in \mathbb{R}^{(n+6)\times (n+6)}$ the stacked contact Jacobians.
A \ac{GM} observer is a momentum integrating structure that estimates the external forces using only the measurements $\{\vec q, \vec u, \vec \tau\}$
\begin{equation} \label{eq:wbc:tau_ext}
    \vec {\hat \tau}_\text{ext} = \text{GM}(\vec q, \vec u, \vec \tau)  = \mat J(\vec q)^T \vec {\hat F}_\text{ext} \in \mathbb{R}^{n+6}.
\end{equation}

The filter does not depend on measured joint accelerations or mass matrix inversion and serves as a virtual sensor for ``external joint torques'' acting on the base. It is well studied~\cite{GMObserverHaddadin} and widely applied in locomotion for disturbance compensation~\cite{focchi2020heuristic} and contact estimation~\cite{GMObserverBledt}.

Instead of discretizing the continuous filter dynamics, we implement a discrete-time version proposed in~\cite{GMObserverBledt} with a cut-off frequency of \unit[40]{Hz}. 

The external torque estimate is post-processed using saturation, median, and low pass elements as illustrated in Fig.~\ref{fig:wbc:gm_observer}.
\begin{figure}
\centering
\includegraphics[width=1.0\columnwidth]{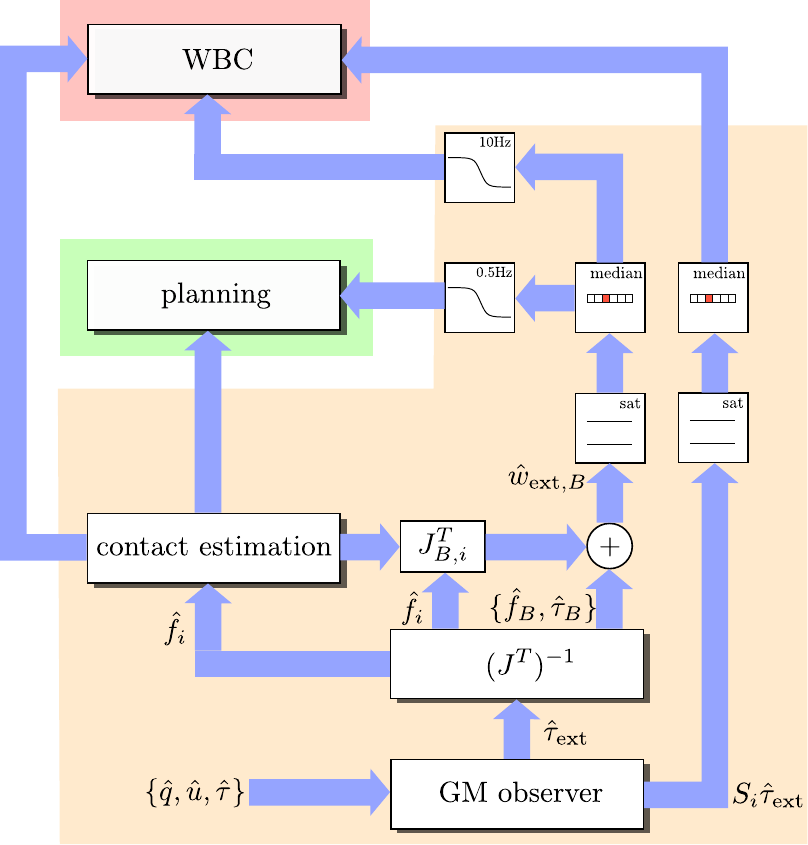}
\caption{The \ac{GM} observer estimates disturbances in joint space. The task space disturbance can be calculated by inverting the contact Jacobian $\mat J^T$. The vertical components of the estimated contact forces are thresholded to contact states. If a leg is swinging according to this estimate, contact forces are projected back into the floating base through the Jacobian matrix $\mat J_{B,i}^T$. The base wrench and joint torques are further mean and low-pass filtered to allow for smooth compensation on the planner and tracking side.}
\label{fig:wbc:gm_observer}
\end{figure}

A disturbance on a swing leg will cause an equivalent reaction on the base. It is desirable to project these disturbances back into the floating base. Let $\mat S_B$ be a matrix selecting the floating base \acp{DOF}, $\mat S_i$ a matrix selecting the actuated \acp{DOF} of the $i$th limb, and $\mat J_{B,i}(\vec q)$ be first 6 columns of the contact Jacobian associated to the $ith$ limb. Then, the base wrench disturbance becomes
\begin{equation}
    \vec {\hat w}_{\text{ext},B} = \mat S_B \vec {\hat F}_\text{ext} +  
    \sum_{i=\text{swing}}\mat J_{B,i}(\vec q)^T \mat S_i \vec {\hat F}_\text{ext}.
\end{equation}

The output of the \ac{GM} observer can be directly used to compensate for the external disturbances. A convenient way to do so is by manipulating the nonlinear term $\vec h(\vec q, \vec u)$
\begin{equation}
\begin{aligned}
     \mat S_B \vec {\tilde h}(\vec q, \vec u) &= \mat S_B \vec h(\vec q, \vec u)- \vec {\hat w}_{\text{ext},B}  \\
     \mat S_i \vec {\tilde h}(\vec q, \vec u) &= \mat S_i \vec h(\vec q, \vec u) - \mat S_i \vec{\hat \tau}_\text{ext} \quad \forall \text{ swinging } i.
\end{aligned}
\end{equation}
Replacing $\vec h$ with $\vec {\tilde h}$ in all tasks of the \ac{WBC} allows compensating directly for disturbances at the base and the swing feet.

%====================================================================================================
\section{Results} \label{s:results}
We evaluate the performance of our perceptive control pipeline using the ANYmal platform. ANYmal is a fully ruggedized quadrupedal robot with $12$ actuated \acp{DOF}, designed to autonomously operate in challenging environments. Two LiDARs (Robo-Sense bperl) are mounted in the front and back of the torso. Elevation mapping runs at \unit[20]{Hz} on an onboard GPU (Jetson AGX Xavier) while control and state estimation is updated on a separate onboard CPU  (Intel  i7-8850H,2.6 GHz,  Hexa-core  64-bit) at \unit[400]{Hz}. \ac{TAMOLS} runs asynchronously at the maximum possible rate and optimizes trajectories for a prediction horizon of one full gait cycle.

\subsection{Computation Times (real world)}
\begin{figure}
\centering
\includegraphics[width=1.0\columnwidth]{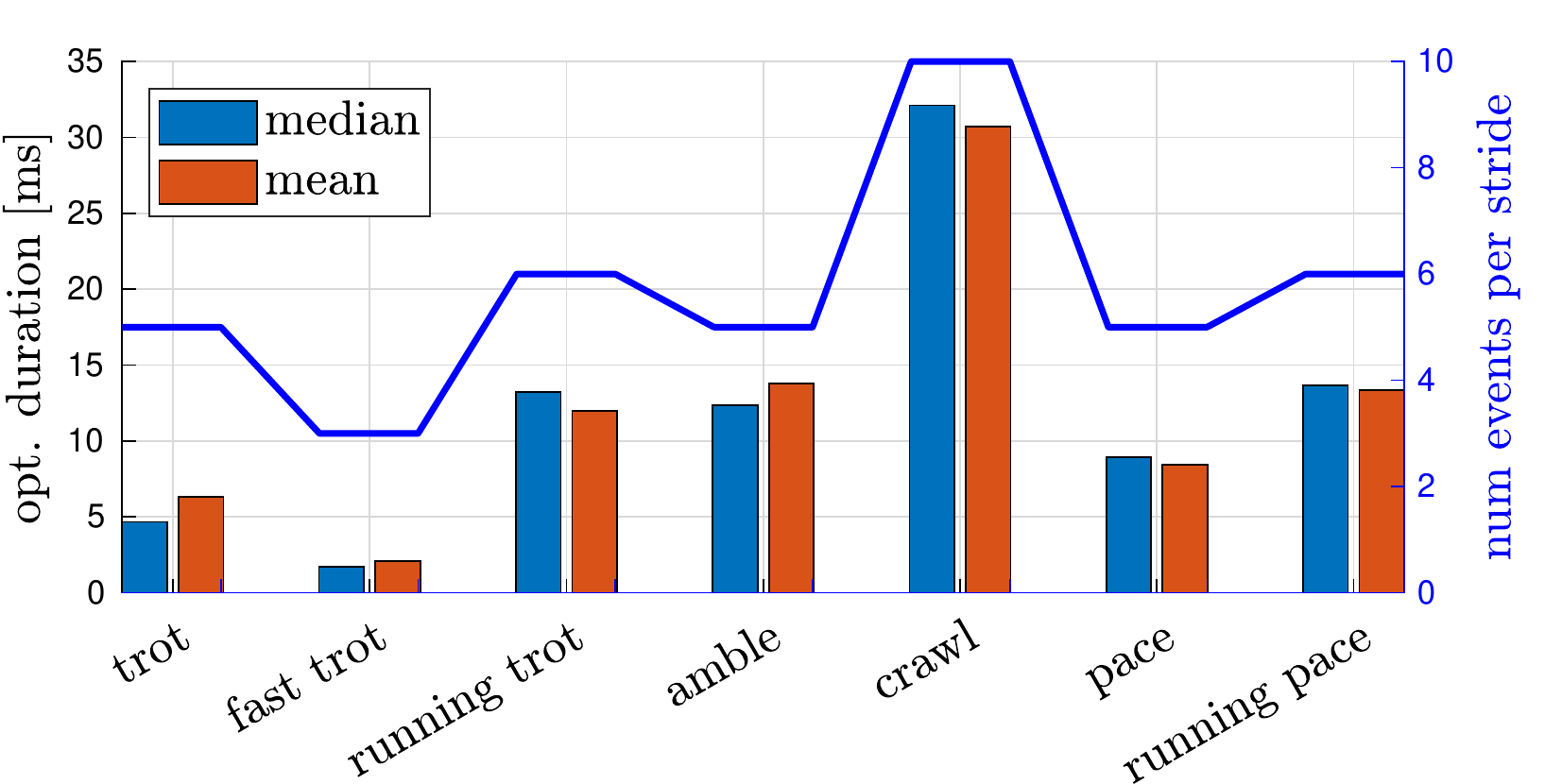}
\caption{Optimization duration (elapsed time between initialization to retrieving the solution) for a prediction horizon of one stride. Fast trot eliminates the full stance phase of trot, while the swing phases overlap for running trot. Amble is obtained from crawl by overlapping all swing phases until the triple stance phase vanishes. Pace/running pace has the same gait timing as trot/running trot but has a lateral leg pair alternating.}
\label{fig:results:optimization_durations}
\end{figure}
\ac{TAMOLS} is able to produce motions for a wide variety of dynamic gaits.
We record optimization durations while ANYmal is walking on flat ground with a maximum heading velocity. Mean and median, computed over five strides, are plotted in Fig.~\ref{fig:results:optimization_durations}. Each gait introduces a different number of decision variables, which grows linearly with the number of phases (splines) covered by the prediction horizon. This dependency can also be observed in the computation time.

For the most common gait trot, we achieve an average optimization duration of \unit[6.3]{ms}, which is 48 times faster than the latest state-of-the-art fully perceptive \ac{CMO} pipeline~\cite{Melon2021}. Compared to the blind \ac{MMO} controller presented in~\cite{Bellicoso2}, which shares the same parametrization of the base pose, we are only 2.9 times slower. If we also eliminate the full stance phase (fast trot), the average computation duration drops to \unit[2.1]{ms}.

Fig.~\ref{fig:results:optimization_duration_trot} contains the histogram for the trotting gait. The \ac{SQP} converges most often after one (\unit[5]{ms}) or two (\unit[10]{ms}) iterations. The peak values correspond to touch-down events, for which \ac{TAMOLS} also optimizes over the initial guess and performs a batch search. The latter takes in average about \unit[0.15]{ms}. On rough terrain, the peaks are slightly larger (about one SQP iteration) due to a mismatch between the height maps \highlight{$h_{s2}$ and $h$}, and thus the optimized initial guess and the solution.

\begin{figure}
\centering
\includegraphics[width=1.0\columnwidth]{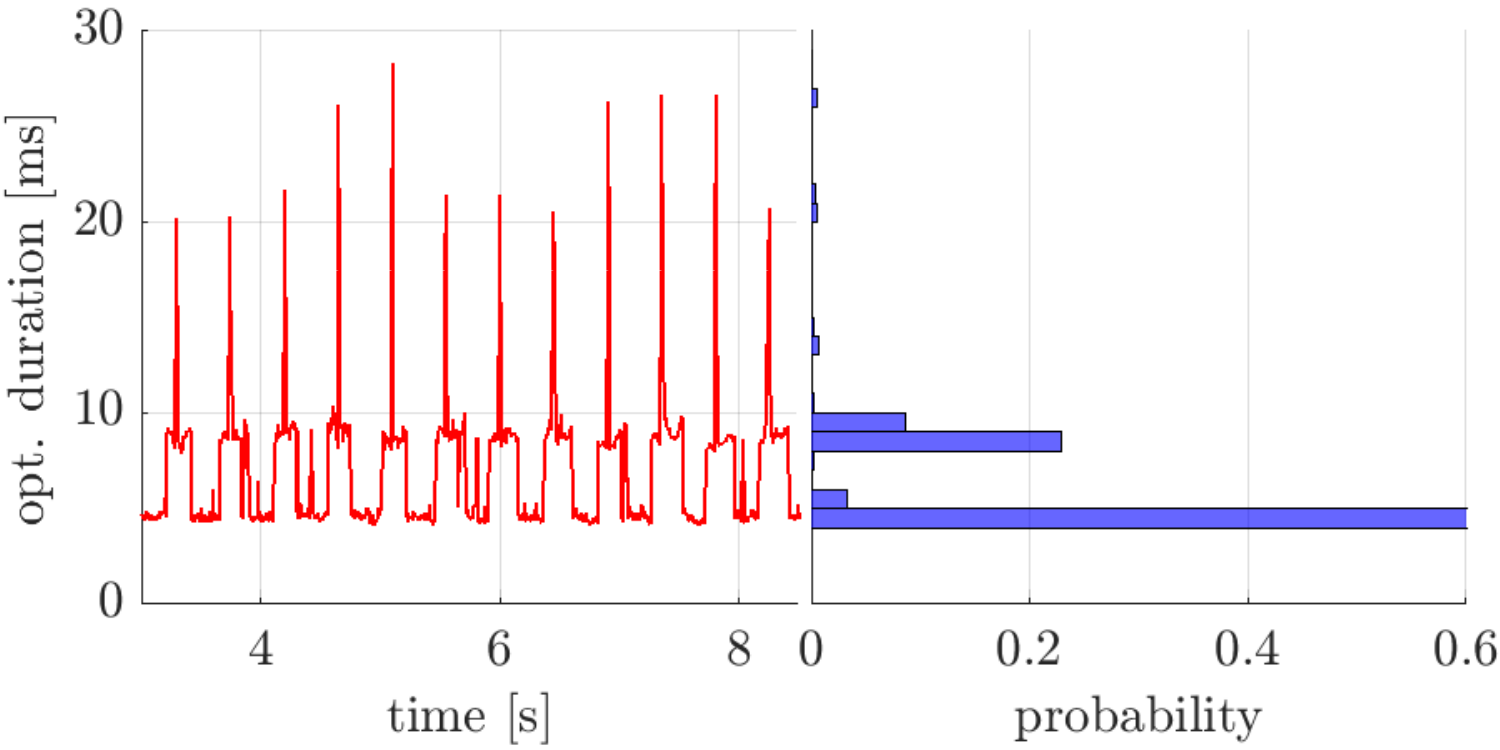}
\caption{Left: Computation duration recorded over time for a trotting gait. Right: Corresponding normalized histogram.}
\label{fig:results:optimization_duration_trot}
\end{figure}

\subsection{Staircase (simulation)}
In our previous work~\cite{fabianje2020} we have presented a \ac{MMO} pipeline and discussed its limitations. We found that a significant amount of failure cases on stairs were caused by knee joint collisions with edges (upstairs) or by violating reachability constraints (downstairs). The experimental set-up included a simulated staircase of $12$ treads, which was passed $18$ times in ascending and descending direction using a trotting gait. The success rates are recapitulated in table~\ref{tab:results:benchmark_dyn_gaits}.

We repeat the exact same experiment with \ac{TAMOLS}. 
The different limb lengths of the robot versions are taken into account by increasing height and width of each tread by a factor 1.27.\footnote{The thigh length increased from \unit[0.25]{m} to \unit[0.3]{m} and the shin length from \unit[0.3]{m} to \unit[0.32]{m}. The total relative limb extension is \unit[12.7]{\%} longer.} We further choose a reference velocity that allows the robot to progress one tread per step.

The statistical results are presented in table~\ref{tab:results:benchmark_dyn_gaits}. While trotting upstairs, the \ac{KFE} joints occasionally collide with the edges, but the robot is always able to keep its balance. Contrary to our previous control framework, we do not encounter issues with leg over-extensions. This can be explained by the kinematic constraints that are present in the prediction and the tracking level.
Moreover, we successfully repeat the experiment with three additional dynamic gaits: running trot, amble, and pace. For the sake of completeness, the experiment is also performed for crawl. Due to its large stride, the robot is very sensitive to contact state mismatches (downstairs) and fails to react to knee joint collisions (upstairs).

\begin{table}
\centering
\caption{Success rate scored while waling on stairs.}
\begin{tabular}{l c c c}
\toprule
method          & batch search  & \multicolumn{2}{c}{\ac{TAMOLS}}      \\
gait            & trot              & trot, amble, running trot, pace  & crawl         \\ \cmidrule(lr){2-2} \cmidrule(lr){3-4}
up              & $10/18$           & $18/18$       & $16/18$    \\ 
down            & $14/18$           & $18/18$       & $14/18$   \\ \bottomrule
\end{tabular}
\label{tab:results:benchmark_dyn_gaits}
\end{table}

\subsection{Staircase (real world)}
\begin{figure*}
\centering
\includegraphics[width=1.0\textwidth]{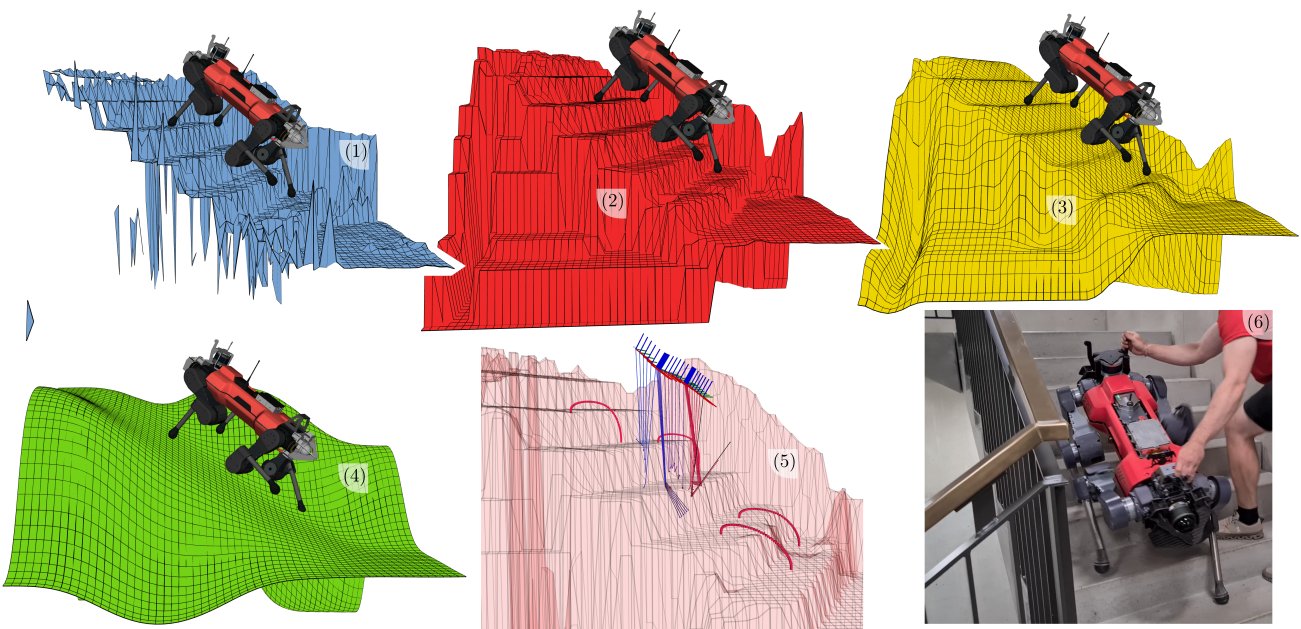}
\caption{ANYmal trotting up stairs. 1) Visualization of the robot together with the raw elevation map $\tilde h$. 2) The in-painted and de-noised elevation map $h$ is used for height constraints on the footholds. 3) The Gaussian-smoothed height map $h_{s1}$ is used for edge avoidance.  4) The robot aligns its torso with the virtual floor map $h_{s2}$. 5) Visualization of the motion plan in task space. 6) Snap-shot of the experiment.}
\label{fig:results:stairs_exp}
\end{figure*}
We take ANYmal to the real world and walk upstairs using a trot in the building of our lab. Two floors are connected with each other by $20$ treads where each platform has the dimensions $\unit[29]{cm} \times  \unit[17]{cm}$, forming an inclination of \unit[36]{deg}. The operator commands a heading speed of \unit[0.45]{m/s} whereas the realized average velocity is \unit[0.37]{m/s}. The tracking error originates mainly from the reduced feasible space imposed by the stair geometry: The robot prefers to place the footholds, s.t. all feet either clear zero, one, or two treads per stride. Depending on the magnitude of the reference velocity, a different optimum is generated, which was found at one tread per step for our experiment. This behavior is further investigated in the subsequent experiment~\ref{ss:results:vel_mod}.

In Fig. \ref{fig:results:stairs_exp} we visualize the three height maps used for motion optimization along with the raw heightmap. For a staircase, the computation of the virtual floor $h_{s2}$ equals Gaussian filtering of the heightmap $h$. The smooth map $h_{s1}$ attains zero gradient in the middle of the tread, thereby pushing footholds away from edges towards the center line.

\subsection{Velocity Modulation (Simulation)} \label{ss:results:vel_mod}
We want to investigate the relation between commanded and realized velocity when walking 12 steps upstairs. We use the stair parameters from the previous experiment and a fast trot. The relationship between reference velocity and the number of stair treads traversed per step is visualized in Fig.~\ref{fig:results:velocity_modulation}. For heading reference speeds larger than \unit[0.9]{m/s}, the robot always clears two treads per step. Kinematic limits prevent the robot from progressing any faster. If the commanded velocity is reduced below \unit[0.9]{m/s}, the robot alternates between clearing two and one treads per step. Clearing exactly one tread per step is found as the optimal solution over a wide range of reference velocities $\unit[\{0.45,\ldots, 0.55\}]{m/s}$. 
\begin{figure}
\centering
\includegraphics[width=1.0\columnwidth]{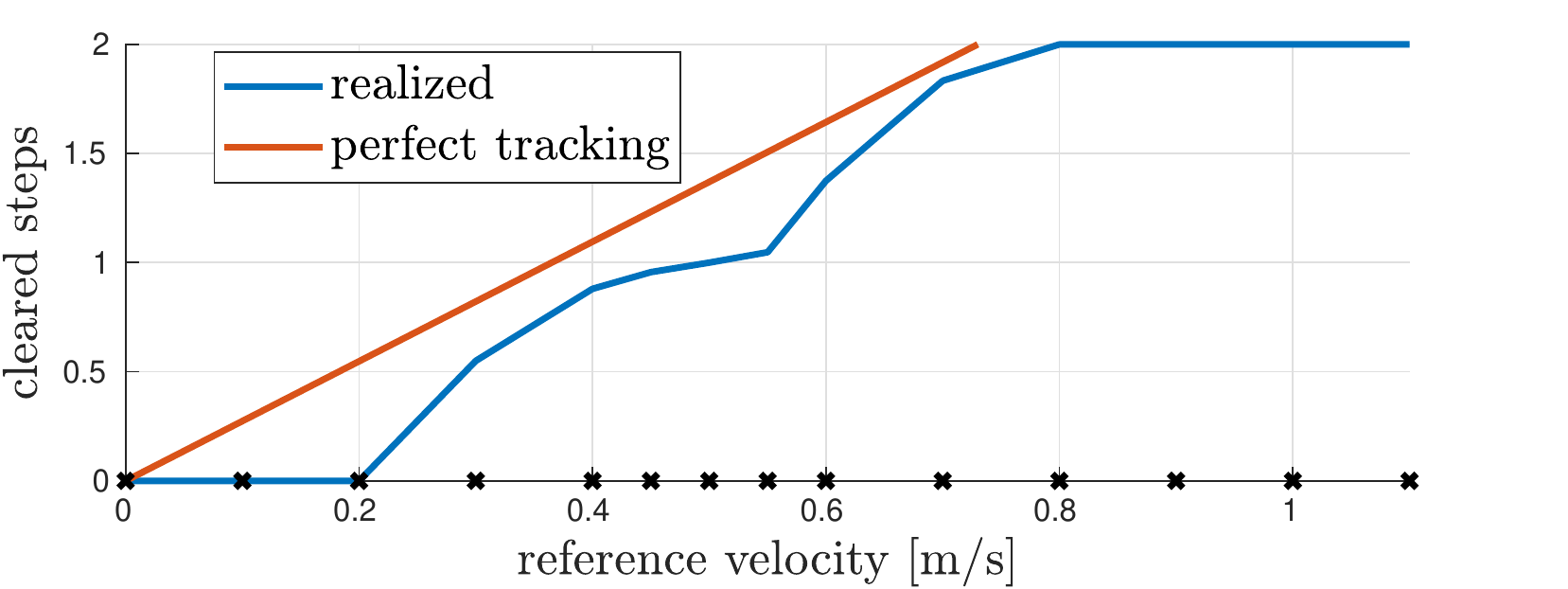}
\caption{Average number of stair treads cleared per step and per foot, which is proportional to the realized average velocity. The count starts when the second foot strikes the first tread and stops if the first foot reaches the top platform. Blue: Results for a fast trot. The attractive velocities clear zero, one, or two treads per step. Red: Hypothetical curve that would lead to perfect velocity tracking. Black: Reference velocities for which experiments were conducted.}
\label{fig:results:velocity_modulation}
\end{figure}

It can be observed that the realized velocity is always smaller than the reference. TAMOLS trades off robustness against velocity tracking. Increasing the weight for minimizing $\varepsilon$ in \eqref{eq:planning:dyn} increases the robustness margin but also degrades velocity tracking.

\subsection{Gap (real world)}
The strength of the proposed \ac{TO} method lies in the generalization to stairs, gaps, and stepping stones.  
In the experiment outlined in Fig.~\ref{fig:results:gaps_exp}, we test the controller's behavior in the presence of a gap. The robot is required to traverse over a pallet and slope using an ambling gait. Both obstacles are placed around \unit[30]{cm} apart from each other.

By penalizing inclinations, the robot successfully avoids the steep part of the pallet. 
The height layer $h_{s2}$ connects the pallet with the slope, thus creating an artificial floor. The cost function \highlight{of the batch search} increases quadratically with the vertical foothold distance to $h_{s2}$, virtually rendering the gap unattractive to step. The experiment was repeated five times, from which the robot never stepped on the inclined slope and placed once \ac{RH} foot into the gap. 
\begin{figure}
\centering
\includegraphics[width=1.0\columnwidth]{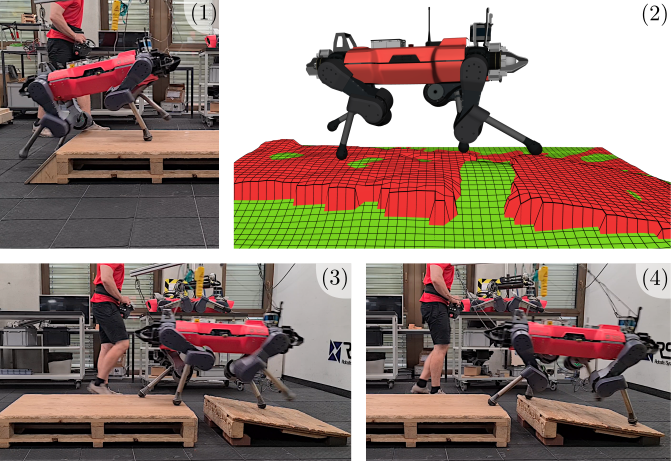}
\caption{ANYMal ambling over two obstacles, forming a gap. The commanded heading speed is \unit[0.7]{m/s}. The gap has a height of \unit[20]{cm} and width of \unit[27]{cm}. The second pictures shows the two layers $h$ (red) and $h_{s2}$ (green).}
\label{fig:results:gaps_exp}
\end{figure}

\subsection{Stepping Stones (real world)}
\begin{figure*}
\centering
\includegraphics[width=1.0\textwidth]{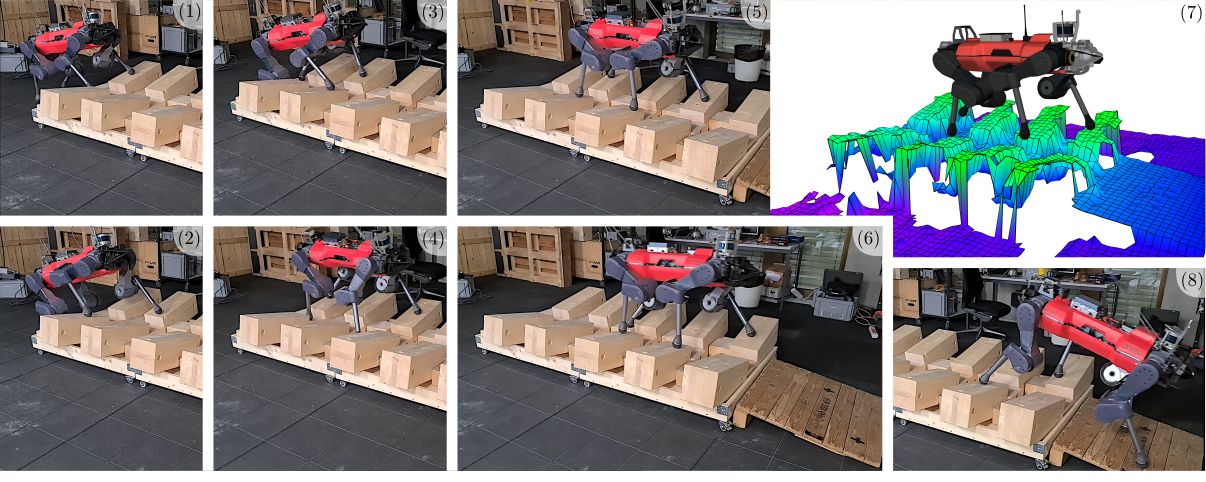}
\caption{Anymal traversing a stepping stone parkour made of inclined wooden bricks using a trotting gait. The commanded heading velocity is \unit[0.4]{m/s}. The dimensions of each brick are \unit[20]{cm}$\times$\unit[20]{cm}$\times$\unit[50]{cm} and the gap between any two adjacent bricks measures $\unit[20]{cm}$. Due to the inclined surfaces, the model assumptions are not perfectly satisfied in this experiment. Picture 7 shows the visualization of the robot along the raw elevation map.}
\label{fig:results:stepping_stones_exp}
\end{figure*}
We validate the control performance in a stepping stone experiment as described in Fig.~\ref{fig:results:stepping_stones_exp}. By penalizing gradients of the smooth map $h_{s1}$, the robot centers the footholds in the middle of the stepping stones. Due to odometry drift, the bricks, as seen by the elevation map, may be translated by a few centimeters from their actual locations. This explains why the robot sometimes steps close to the border.\footnote{When visualizing the robot together with the elevation map, the footholds appear in the middle of the brick.}

The robot is attached to an industrial crane during the experiment, exerting an external force on the base. The wrench estimate of the \ac{GM} observer is plotted in Fig.~\ref{fig:results:stepping_stones_gm_observer}. It can be seen that the forces do not converge to zero at rest ($t<\unit[2]{s}$ and $t>\unit[25]{s}$). This is due to modeling errors, in particular inaccurate whole body mass and \ac{COM} location. Each peak of the unfiltered forces corresponds to one step, which requires the robot to push the crane for the same distance walked. Towards the end of the path, the robot attempts to walk down the slope while the crane is holding back the torso. This disturbance is experienced majorly as a horizontal force and pitch moment, as can be seen in the plots for the time interval $t \in (\unit[20]{s}, \unit[25]{s})$.
\begin{figure}
\centering
\includegraphics[width=1.0\columnwidth]{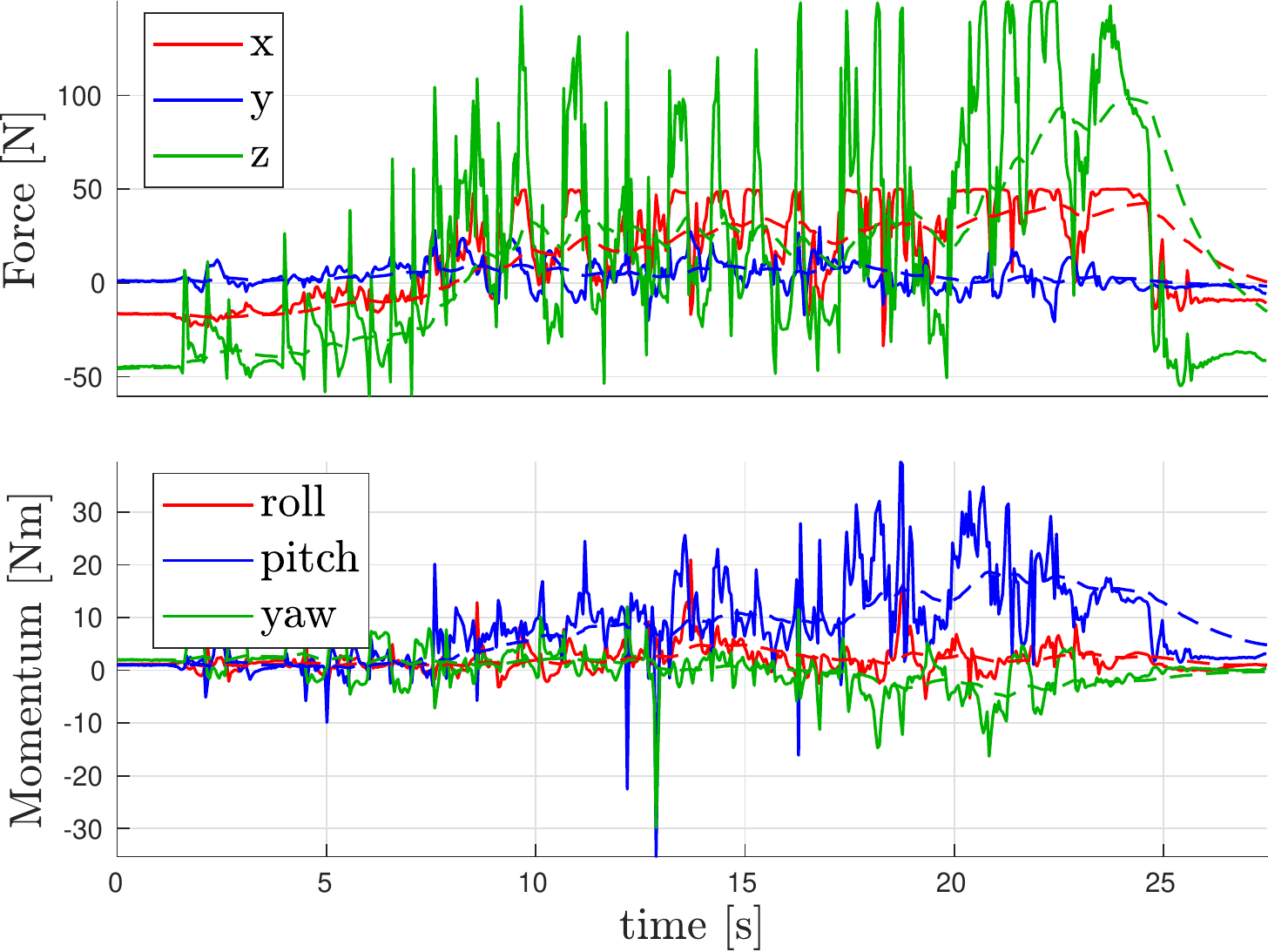}
\caption{Estimate of the external force (top) and moment (bottom) acting on the base. Solid: Low pass filtered at \unit[10.0]{Hz}, compensated at tracking level. Dashed: Low pass filtered at \unit[0.5]{Hz}, compensated at planning level. Horizontal/vertical forces are thresholded at \unit[50]{N},\unit[150]{N}.}
\label{fig:results:stepping_stones_gm_observer}
\end{figure}

\subsection{Limitations}
The major drawback of \ac{TAMOLS} originates from the dynamic model approximation: The method is only guaranteed to find feasible \highlight{(i.e., weak contact stable)} motions if it finds \highlight{horizontal} contact surfaces. If at least one of the feet is located on a tilted plane, the \ac{WBC} may deviate from the planned trajectory in order to ensure the no-slip condition. Theoretically, it is possible to generalize the model to a more general terrain structure by enforcing the last property in prop.~\ref{a:dyn_prop1} and~\ref{a:dyn_prop2} through hard constraints. However, it may be very difficult to find such footholds.

Second, \ac{TAMOLS} does not take into account the full kinematics. The simplified kinematic constraints formulated in task space greatly help to avoid leg over-extension but knee joint collisions still remain a problem.

As a final limitation, we mention the strong dependency of the footholds and base pose on the quality of the elevation map. State estimator drift shifts the map relative to odometry. This leads to a mismatch between realized and planned foot locations, which, in extreme cases, can destabilize the system.

%====================================================================================================
\section{Conclusion} \label{s:conclusion}
First, we have derived a differentiable and contact-force free dynamic stability metric. It was shown to be at most as complex as \ac{ZMP} constraints while attaining a large fraction of validity of \ac{SRBD}.
In the second part of this work, we have embedded our model into a terrain-aware motion optimizer. Experiments have suggested that underlying assumptions generalize well to rough, human-engineered environments. By deploying graduated optimization, we have shown that it is possible to jointly optimize footholds and base pose over rough terrain in real-time. Compared to our previous work, we have shown that the inclusion of footholds into the optimization problem successfully removes the appearance of leg-over-extension.
\appendices
%====================================================================================================
\section{}  \label{a:comparison}
\subsection{Geometric Derivation and Interpretation of the \acs{GIAC}} \label{ss:giac_interpretation}
\begin{figure}
\centering
\includegraphics[width=1.0\columnwidth]{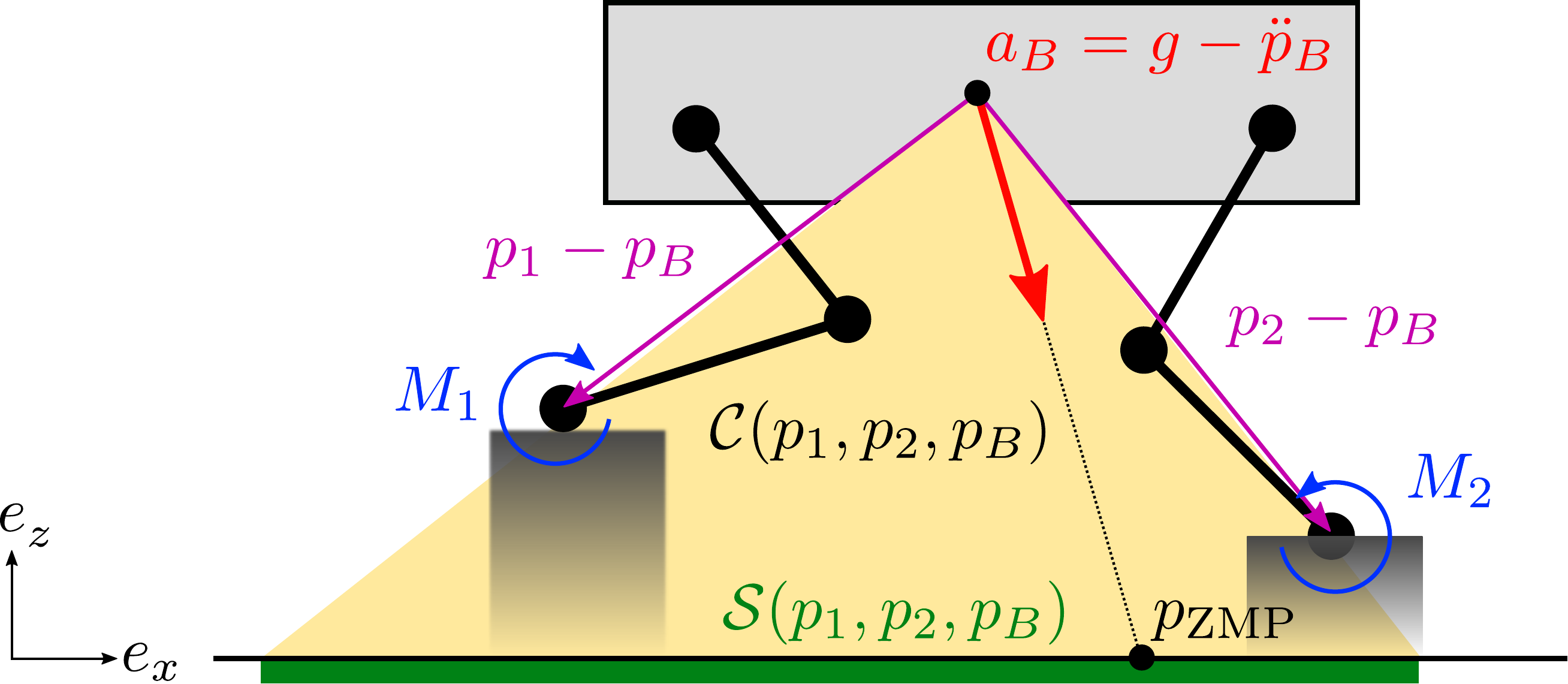}
\caption{The \ac{GIA} vector $\vec a_B = \vec g - \vec {\ddot p}_B$ induces the moment $\vec M_{1,2}$ about the contact positions $\vec p_{1,2}$. The cone $\mathcal{C}$, spanned by the base $\vec p_B$ and the two footholds, defines the set of all admissible $\vec a_B$. The \acs{ZMP} equals the intersection of the ray $\vec p_B + \gamma \vec a_B$ with the ground. The associated support polygon $\mathcal{S}$ is obtained by projecting the footholds onto the ground.}
\label{fig:models:dynamics_2d}
\end{figure}
To start with, let us consider a two-legged $2D$ robot that establishes point-contacts with the environment at $\vec p_1$ and $\vec p_2$, as illustrated in Fig.~\ref{fig:models:dynamics_2d}. For this particular example we assume
\begin{assumption}[Inverted pendulum mode] \label{assumption:models:pendulum}
1) The limbs have zero mass, 2) the base inertia is negligible, 3) contact forces can only push on the ground, 4) friction is infinitely large, and 5) contacts are established on horizontal planes.
\end{assumption}
The \acl{GIA} vector $\vec a_B$ induces a moment $\vec M_i = m(\vec p_B - \vec p_i) \times \vec a_B$ around the two feet.
The rays defined by the vectors $\vec p_i-\vec p_B$ play an important role as they define the support cone:
\begin{definition}[\acs{GIAC}]
The convex cone of rays, connecting the base position and the footholds, is called \acf{GIAC} $\mathcal{C}$.
\end{definition}

If the \ac{GIA} vector $\vec a_B$ stays on the left side of the vector $\vec p_B-\vec p_2$, the robot may be stable. However, if it were located on the opposite side, the induced moment would be such that the robot might tip over the right foot. Similar considerations also apply to the opposite foot, and it can be concluded that 
\begin{prop} \label{prop:models:zmp_linear}
Under assumption~\ref{assumption:models:pendulum}, a robot does not tip over any supporting foot if $\vec a_B \in \mathcal{C}$.
\end{prop}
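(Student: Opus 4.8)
\emph{Proof plan.} The plan is to convert the rotational ``no-tip'' assertion into an existence statement about admissible contact forces and then to exhibit such forces directly from the conic representation of $\vec a_B$. Negligible base inertia makes $\vec L_B \equiv \vec 0$, hence $\vec {\dot L}_B = \vec 0$; with massless limbs the single-body balance~\eqref{a:sbcd} then reduces to the quasi-static pair $\sum_i \vec f_i = m(\vec {\ddot p}_B - \vec g) = -m\vec a_B$ and $\sum_i (\vec p_i - \vec p_B) \times \vec f_i = \vec 0$. Tipping over a supporting foot is exactly the mode in which the robot rotates about one contact while the other lifts off, i.e.\ the other foot would have to pull; since friction is unbounded no tangential restriction is ever active, so ``the robot does not tip over any foot'' is equivalent to: there exist $\vec f_1, \vec f_2$ solving the quasi-static pair with $\vec e_z^T \vec f_i \geq 0$ for $i = 1, 2$. (All the hypotheses invoked here are parts of Assumption~\ref{assumption:models:pendulum}.)

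Next I would use the definition of $\mathcal{C}$: $\vec a_B \in \mathcal{C}$ means $\vec a_B = \gamma_1 (\vec p_1 - \vec p_B) + \gamma_2 (\vec p_2 - \vec p_B)$ with $\gamma_1, \gamma_2 \geq 0$, the coefficients being unique since $\mathcal{C}$ is spanned by two non-parallel directions in the plane. Pick the candidate forces $\vec f_i \coloneqq m \gamma_i (\vec p_B - \vec p_i)$, a pure compression along each leg. Then force balance holds, $\sum_i \vec f_i = -m \sum_i \gamma_i (\vec p_i - \vec p_B) = -m \vec a_B$; the moment balance holds because each $\vec f_i$ is collinear with $\vec p_i - \vec p_B$, so $\sum_i (\vec p_i - \vec p_B) \times \vec f_i = \vec 0$, matching $\vec {\dot L}_B = \vec 0$; and the unilateral condition holds, $\vec e_z^T \vec f_i = m \gamma_i\, \vec e_z^T (\vec p_B - \vec p_i) \geq 0$, using $\gamma_i \geq 0$ and the over-hanging geometry $\vec e_z^T \vec p_B > \vec e_z^T \vec p_i$ implicit in the inverted-pendulum mode. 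Admissible contact forces therefore exist, no foot is unloaded, and the robot does not tip.

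For tightness --- not required by the statement, but worth recording --- the same computation runs backwards: if $\vec a_B$ lies strictly past the ray $\vec p_1 - \vec p_B$ on the side opposite $\vec p_2 - \vec p_B$, the unique split forces $\gamma_2 < 0$, so $\vec f_2$ would have to pull, contradicting the push-only hypothesis, and the robot pivots over $\vec p_1$ --- exactly the failure picture of Fig.~\ref{fig:models:dynamics_2d}.

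The main obstacle is not the algebra, which is short, but justifying cleanly the \emph{equivalence} ``does not tip over a foot'' $\Longleftrightarrow$ ``admissible contact forces exist'': this needs a precise notion of tipping (a rotation about one contact that unloads the other, hence a mode that would demand a tensile normal force) and the remark that, under the massless-limb, negligible-inertia and infinite-friction parts of Assumption~\ref{assumption:models:pendulum}, the dynamics are quasi-static and the only feasibility restriction is the normal-sign one. I would also make explicit that the inverted-pendulum mode carries the hypothesis $\vec e_z^T \vec p_B > \vec e_z^T \vec p_i$ used in the last inequality, and flag the boundary case $\vec a_B \in \partial \mathcal{C}$ (a foot on the verge of unloading) as the limiting configuration that is still, by continuity, non-tipping.
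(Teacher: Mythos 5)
Your proposal is correct, but it takes a genuinely different route from the paper's. The paper argues directly at the level of moments: it computes the moment $\vec M_i = m(\vec p_B - \vec p_i)\times \vec a_B$ that the GIA vector induces about each foot, observes that $\vec a_B$ lying inside $\mathcal{C}$ forces the sign of this moment to be ``inward'' for every supporting foot (so the robot cannot be rotated outward over any contact), and this sign check is precisely what is later packaged as the determinant inequality~\eqref{eq:comparison_giac_simple}. You instead reformulate ``does not tip'' as weak contact stability --- the existence of admissible contact forces --- and exhibit an explicit witness: the conic coefficients of $\vec a_B$ scaled into pure leg compressions $\vec f_i = m\gamma_i(\vec p_B - \vec p_i)$, for which force balance, moment balance (by collinearity), and unilaterality all hold. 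Your route is more constructive and ties Prop.~\ref{prop:models:zmp_linear} directly to the machinery of Props.~\ref{a:dyn_prop1} and~\ref{a:dyn_prop2} and to the weak-contact-stability discussion in~\ref{a:comparison_cwc}; the price is that you must formalize the equivalence between tipping and the nonexistence of admissible forces, which the paper sidesteps by staying with the geometric moment picture (and which you rightly identify as the only delicate step). One point you should state rather than leave implicit: the inequality $\vec e_z^T\vec f_i \geq 0$ requires the base to lie above the feet, and Assumption~\ref{assumption:models:pendulum} does not literally include this condition (it appears as Assumption~\ref{assumtopns:models:6} in the main text); your appeal to it being ``implicit in the inverted-pendulum mode'' is consistent with Fig.~\ref{fig:models:dynamics_2d}, and the paper's own moment-sign argument quietly relies on the same geometry, but an explicit hypothesis would make your version airtight.
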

We note that the opposite is not always true, i.e., $\vec a_B \notin \mathcal{C}$ does not imply that the robot will tip over a supporting foot. For instance, taking the example of Fig.~\ref{fig:models:dynamics_2d}, the contact force $\vec f_1$ of the left foot can create a counter moment $(\vec p_1-\vec p_B)\times \vec f_1$ to balance a \ac{GIA} vector that lies on the right side of the \ac{GIAC}.

First and foremost, notice that prop.~\ref{prop:models:zmp_linear} is not restricted to a certain dimensionality. In two dimensions, the stability criterion $\vec a_B \in \mathcal{C}$ can be easily converted to inequality constraints by checking the sign of the induced moment about each foot. A similar conversion can be found in three dimensions, where we consider the moment induced about the line connecting two neighboring footholds $\vec p_{ij} = \vec p_j - \vec p_i$, resulting in the condition $m\vec p_{ij}^T \cdot \big[(\vec p_B - \vec p_i) \times \vec a_B \big] \leq 0$, or, more compactly
\begin{equation} \label{eq:comparison_giac_simple}
    \det(\vec p_{ij}, \vec p_B - \vec p_i, \vec a_B) \leq 0.
\end{equation}
The formulation~\eqref{eq:comparison_giac_simple} is more conservative than prop.~\ref{prop:models:zmp_linear} because it restricts the \ac{GIA} vector to the largest inscribing convex cone $\tilde{\mathcal{C}}$ of $\mathcal{C}$. This cone satisfies $\mathcal{C}= \tilde{\mathcal{C}}$ (for $N\leq 3$) or $\mathcal{C}\subseteq \tilde{\mathcal{C}} \cup \emptyset$ (for $N>3$).
Fig.~\ref{fig:planning:convex_cone} visualizes three possible geometries of the inscribing cone for a full stance phase, i.e., $N = 4$.

By comparing~\eqref{eq:comparison_giac_simple} with \eqref{eq:models:m2}, we can now see that a non-zero angular momentum derivative can be observed as a change of shape and size of the \ac{GIAC}.

\subsection{Relation to ZMP} \label{a:comparison_zmp}
By definition, the \ac{ZMP} is a point $\vec p_\text{ZMP}(\vec O, \vec n)$ where the moment caused by the contact wrench aligns with the unit vector $\vec n$~\cite{CaronZMP}. If the moment is computed w.r.t. to a point $\vec O$, then $\vec p_\text{ZMP}(\vec O, \vec n)$ is located on the plane $\vec \Pi(\vec O, \vec n)$ which contains $\vec O$ and is orthogonal to $\vec n$. 
Without loss of generality, we can assume $\vec O = \vec O_W$ and $\vec n = \vec e_z$. For the time being, we omit the rate of change of the angular momentum, then~\cite{zmp}
\begin{equation} \label{eq:models:zmp}
    \vec p_\text{ZMP} \coloneqq \frac{\vec e_z \times (\vec p_B \times \vec a_B)}{\vec e_z^T \vec a_B}.
\end{equation}

A simple computation reveals that $\vec p_\text{ZMP}$ is the projection of the base $\vec p_B$ along the vector $\vec a_B$ onto the ground plane. We write this projection as $\mathcal{P}_{\vec a_B}(\vec p_B)$ with $\mathcal{P}_{\vec a_B}: \mathbb{R}^3 \to \vec \Pi(\vec O_W, \vec e_z)$.
\begin{proof}
The point~\eqref{eq:models:zmp} can be written as
\begin{equation}
    \vec p_\text{ZMP}
    = \frac{\vec p_B (\vec e_z^T \vec a_B) - \vec a_B (\vec e_z^T \vec p_B)}{\vec e_z^T \vec a_B}
    = \vec p_B - \vec a_B \frac{\vec e_z^T \vec p_B}{\vec e_z^T \vec a_B}.
\end{equation}
The intersection of the ray, extending $\vec p_B$ along $\vec a_B$, with the plane $\vec \Pi(\vec O_W, \vec e_z)$ satisfies the equation $\vec e_z^T (\vec p_B + \vec a_B \gamma ) = 0$. Solving for $\gamma$, the intersection can be found as
\begin{equation}
    \mathcal{P}_{\vec a_B}(\vec p_B) = \vec p_B + \vec a_B \gamma 
    = \vec p_B - \vec a_B \frac{\vec e_z^T \vec p_B}{\vec e_z^T\vec a_B}.
\end{equation}
\end{proof}

This relation amounts to the following proposition:
\begin{prop}
On flat ground, prop.~\ref{prop:models:zmp_linear} equals the \ac{ZMP} stability criterion, i.e., a motion of a robot is stable if $\vec p_\text{ZMP} \in \mathcal{S}$ with $\mathcal{S}$ being the support polygon.
\end{prop}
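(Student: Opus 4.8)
The plan is to show that, under Assumption~\ref{assumption:models:pendulum} (zero-mass limbs, negligible base inertia, unilateral push-only contacts, infinite friction, horizontal contact planes), the condition $\vec a_B \in \mathcal{C}$ stated in prop.~\ref{prop:models:zmp_linear} is equivalent to the classical \ac{ZMP} criterion $\vec p_\text{ZMP} \in \mathcal{S}$. The key observation, already established in subsection~\ref{a:comparison_zmp}, is that on flat ground $\vec p_\text{ZMP} = \mathcal{P}_{\vec a_B}(\vec p_B)$, i.e.\ the \ac{ZMP} is the central projection of the base position $\vec p_B$ along the \ac{GIA} ray onto the ground plane $\vec \Pi(\vec O_W, \vec e_z)$. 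Likewise, each foothold $\vec p_i$ projects (trivially, since it already lies on the ground) to itself, so the support polygon $\mathcal{S}$ is exactly the image of the footholds under $\mathcal{P}_{\vec a_B}$. Since $\vec e_z^T \vec a_B > 0$ is enforced (the robot pushes on the ground, cf.\ the friction-cone requirement), $\mathcal{P}_{\vec a_B}$ restricted to the half-space above the ground plane is a well-defined central projection from the center $\vec p_B$.

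First I would recall that $\mathcal{C}$ is by definition the convex cone generated by the rays $\{\vec p_i - \vec p_B\}$ emanating from $\vec p_B$; thus $\vec a_B \in \mathcal{C}$ means $\vec a_B = \sum_i \lambda_i (\vec p_i - \vec p_B)$ for some $\lambda_i \ge 0$, not all zero. Next I would apply the projection map $\mathcal{P}_{\vec a_B}$ and use the elementary fact that central projection from $\vec p_B$ onto a plane not passing through $\vec p_B$ carries the cone $\mathcal{C}$ onto the convex hull of the images of its generating rays — here the footholds — because projection from a point is a projective (hence convexity- and incidence-preserving) map on the relevant affine chart. Concretely: if $\vec a_B$ points into the cone spanned by the $\vec p_i - \vec p_B$, then the ray $\vec p_B + \gamma \vec a_B$ ($\gamma \ge 0$) hits the ground plane inside $\mathrm{conv}\{\vec p_i\} = \mathcal{S}$; conversely, if the ray hits the plane inside $\mathcal{S}$, then $\vec a_B$ can be written as a nonnegative combination of the $\vec p_i - \vec p_B$. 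This is most transparently checked in barycentric coordinates: writing $\vec p_\text{ZMP} = \sum_i \mu_i \vec p_i$ with $\sum_i \mu_i = 1$, one verifies by substituting $\vec p_\text{ZMP} = \vec p_B - \vec a_B\,(\vec e_z^T \vec p_B)/(\vec e_z^T \vec a_B)$ that the $\mu_i$ are, up to a common positive scalar and the known normalization, precisely the coefficients $\lambda_i$ of $\vec a_B$ in the generators of $\mathcal{C}$; hence $\mu_i \ge 0$ for all $i$ iff $\lambda_i \ge 0$ for all $i$, i.e.\ $\vec p_\text{ZMP} \in \mathcal{S} \iff \vec a_B \in \mathcal{C}$.

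The main obstacle I anticipate is bookkeeping the sign conventions and the affine-versus-conic translation cleanly — in particular making sure the common scalar relating $\lambda_i$ and $\mu_i$ is genuinely positive, which is exactly where the hypothesis $\vec e_z^T \vec a_B > 0$ (the push-only / over-ground condition) enters and must be invoked explicitly. A minor auxiliary point is the degenerate case where $\vec a_B$ is parallel to the ground plane or $\vec p_B$ lies in it, which the over-hanging-torso configuration and $\vec e_z^T \vec a_B > 0$ rule out, so these can be dispatched in a sentence. Everything else is the routine computation already set up in the proof of the projection identity in subsection~\ref{a:comparison_zmp}, so I would keep that part brief and lean on the equation $\mathcal{P}_{\vec a_B}(\vec p_B) = \vec p_B - \vec a_B (\vec e_z^T \vec p_B)/(\vec e_z^T \vec a_B)$ already derived there.
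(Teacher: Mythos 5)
Your proposal is correct and follows essentially the same route as the paper's proof: write $\vec a_B \in \mathcal{C}$ in span representation, push it through the central projection $\mathcal{P}_{\vec a_B}$ onto the ground plane, note that on flat ground the footholds project to themselves, and identify the resulting convex hull with $\mathcal{S}$. The only difference is that you make explicit the positivity of the scalar relating the conic coefficients $\lambda_i$ to the barycentric coefficients of $\vec p_\text{ZMP}$ (via $\vec e_z^T \vec a_B > 0$), a point the paper's proof passes over silently when it asserts $\tilde\lambda_i \geq 0$.
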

\begin{proof}
Prop.~\ref{prop:models:zmp_linear} can be written in span representation as
\begin{equation} \label{prop:models:zmp_linear_span}
    \vec p_B + \vec a_B = \sum_i \lambda_i (\vec p_i - \vec p_B), \quad \lambda_i \geq 0.
\end{equation}
The projection of the left-hand side of~\eqref{prop:models:zmp_linear_span} on the ground was shown to yield $\vec p_\text{ZMP}$. Because each point along the ray $\vec p_B + \vec a_B \gamma$ is contained in the \ac{GIAC}, and $\vec p_\text{ZMP}$ is part of that ray, we can write
\begin{equation} \label{prop:models:zmp_projection_span}
   \vec p_\text{ZMP} = \sum_i \tilde \lambda_i\mathcal{P}_{\vec p_i-\vec p_B}(\vec p_B), \quad   \lambda_i \neq \tilde \lambda_i \geq 0.
\end{equation}
On flat ground we have that $\mathcal{P}_{\vec p_i-\vec p_B}(\vec p_B) = \vec p_i$, and the set spanned by the right hand side of~\eqref{prop:models:zmp_projection_span} simplifies to the convex hull of contacts, conventionally termed \emph{support polygon}.
\end{proof}

The concept of \ac{ZMP} can be extended to also account for the angular momentum. The resulting model is the equivalent of an inverted pendulum with an attached flywheel and its \ac{ZMP} can computed as
\begin{equation} \label{a:zmp:angular_momentum}
\begin{aligned}
    \vec {\tilde p}_\text{ZMP} &= \frac{\vec e_z \times (\vec p_B \times m\vec a_B -  \vec{\dot L}_B)}{\vec me_z^T \vec a_B} \\
    &=
    \frac{\vec e_z \times (\vec p_B \times \vec a_B)}{\vec e_z^T \vec a_B} - 
   \frac{\vec e_z \times \vec{\dot L}_B}{m\vec e_z \vec a_B}.
\end{aligned}
\end{equation}
Hence, the additional angular momentum derivative leads to a shift of the \ac{ZMP} while leaving the support polygon unchanged. 

On flat ground, the \ac{ZMP} is typically computed w.r.t. the ground plane, while rough terrains require the definition of some \emph{virtual plane}~\cite{Sugihara2002, CaronZMP, BellicosoJenelten2017}. The notion of \ac{ZMP} can thus be easily extended to any terrain geometry; however, the generalization of the support polygon is not straightforward. Sugihara~\cite{Sugihara2002} introduced virtual contact locations as the intersection of the virtual plane with lines, connecting the \ac{COM} and the footholds. Harada~\cite{Harada2006} derived a projection from a convex 3D support volume to the virtual plane. Caron~\cite{CaronZMP} introduced a method for projecting the entire \ac{CWC}, thereby also accounting for pull contacts and frictional constraints. In contrast to those methods, the \ac{GIAC} model does not depend on rigorous, geometric construction of the support area. 

A different line of research has tried to simplify the projection by choosing a virtual plane that best matches the contact locations. Sato~\cite{Sato2011} constructed the support area by heuristically predicting the \ac{COM} trajectory, and  Bellicoso~\cite{BellicosoJenelten2017} projected the footholds vertically on the virtual plane. These methods produce a support area independent of the \ac{COM} but introduce a projection error that grows with prediction uncertainty or terrain nonlinearity, respectively. On the other hand, the \ac{GIAC} model does not depend on the definition of a virtual plane and does not produce a projection error.

\subsection{Relation to Wrench Models} \label{a:comparison_cwc}
If each contact force satisfies the no-slip condition~\eqref{a:no_slip_condition}, then the \ac{CW} $\vec w^\text{c}$ lies in a six dimensional \ac{CWC}, encompassing projections of all friction cones via the mapping defined by $\vec w^\text{c}$ in~\eqref{eq:models_cd}. By linearizing the friction cones, the \ac{CWC} becomes a convex polyhedral cone, which can be written as~\cite{CaronZMP}
\begin{equation} \label{a:models:cwc}
    \mathcal{C}^\text{CWC} = \left\{\sum\nolimits_{i,j} \lambda_{ij} \Mx{ \vec e_{ij} \\ \vec p_i \times \vec e_{ij}} ~\mid \lambda_{ij} \geq 0 \right\}.
\end{equation}
This form is called the \emph{span} or \emph{ray} representation of the cone. The unit vector $\vec e_{ij}$ denotes the $j$th facet of the $i$th friction pyramid. According to polyhedral cone theory~\cite{double_description}, it is possible to eliminate the slack variables $\lambda_{ij}$ and write~\eqref{a:models:cwc} in the so called \emph{face} or \emph{half-space} representation, e.g. by leveraging numerical tools such as the \ac{DDM}
\begin{equation}
    \mathcal{C}^\text{CWC} = 
    \underbrace{\{ \mat V \vec z \mid \vec z \geq 0 \}}_{\text{span}(\mat V)} \overset{\text{DDM}}{\iff}
     \mathcal{C}^\text{CWC} = 
    \underbrace{\{ \vec x \mid \mat U \vec x \leq 0 \}}_{\text{face}(\mat U)}.
\end{equation}
The face representation is preferred in \ac{TO} methods as it constitutes a minimal representation of the cone. 

Due to the inherent relation $\vec w^\text{gi} + \vec w^\text{c} = \vec 0$~\eqref{eq:models_cd}, both gravito-inertia and contact wrench belong to the same cone, i.e.,
\begin{equation}
    \vec w^\text{c} \in \mathcal{C}^\text{CWC}  \iff \vec w^\text{gi} \in \mathcal{C}^\text{CWC}.
\end{equation}

Iff
\begin{equation} \label{eq:models:cwc_stability}
    \vec w^\text{gi} \in \mathcal{C}^\text{CWC},
\end{equation}
then the resulting motion is \emph{weak-contact stable}~\cite{Caron2015}.
\begin{definition}[Weak-contact stability~\cite{contact_stability}] A motion is said to be weak-contact stable iff there exists a set of contact forces $\{\vec f_i\}_{\highlight{1,\ldots, N}}$ s.t. the \ac{EOM}~\eqref{eq:models_cd} are satisfied and the contact forces are bounded by its Coulomb friction cones~\eqref{eq:models_cd_friction_cone}.
\end{definition}

Weak contact stability has led to a new branch of research in the \ac{TO}-community, namely the \emph{wrench models}. Related methods follow a five-staged \ac{MMO} structure: 1) foothold selection; 2) geometric construction of the \ac{CWC}; 3) conversion of the geometric cone to inequality constraints; 4) Motion optimization subject to the stability criterion~\eqref{eq:models:cwc_stability}; And 5) computation of joint torques that realize stable contact forces. Step 4) can be seen as the bottleneck because the conversion is of pure algorithmic nature, and footholds need to be provided as known parameters.

A similar (perhaps less general) conclusion holds for the \ac{GIAC} model. From prop.~\ref{a:dyn_prop1} and~\ref{a:dyn_prop2} we can deduce that:
\begin{prop}[Generalization of \ac{GIAC} stability] \label{prop:models:generalization_cone}
If 
\begin{itemize}
    \item the \ac{GIAC} constraints \eqref{eq:models:m1} to \eqref{eq:models:mN} are satisfied,
    \item the rate of change of the angular momentum has a negligible effect on the  contact  forces,
    \item the torso is over-hanging, and
    \item the ground is flat (and tilted), or can be segmented into planes perpendicular to gravity, 
\end{itemize}
then the motion of the robot is weak-contact stable.
\end{prop}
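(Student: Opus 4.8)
The plan is to argue mode-by-mode on the number $N$ of grounded feet and to read off, in each mode, that the active constraints among \eqref{eq:models:m1}--\eqref{eq:models:mN} are exactly the hypotheses of Proposition~\ref{a:dyn_prop1} (for $N\ge 3$), of Proposition~\ref{a:dyn_prop2} (for $N=2$), or of the single-contact / flight-phase discussion of Section~\ref{ss:giac_derivation} (for $N\le 1$). Those results already produce contact forces $\{\vec f_i\}$ that solve \eqref{a:sbcd} and lie in the true Coulomb cones \eqref{a:no_slip_condition}; and \eqref{a:sbcd} is nothing but \eqref{eq:models_cd} after substituting $\vec{\dot L}=\vec{\dot L}_B$ (assumption~\ref{assumtopns:models:1}, implicit in the \ac{GIAC} model) and re-referencing the moment balance from $\mathcal O_W$ to $\vec p_B$. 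Such a force family is therefore, by definition, a certificate of weak-contact stability, so it only remains to check the hypothesis bookkeeping in each mode.

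For $N\ge 3$, the active constraints \eqref{eq:models:m1} and \eqref{eq:models:m2} are, respectively, the simplified friction-cone bound \eqref{eq:models:simplified_no_slip_condition} and the \ac{GIA}-determinant bound that make up the first two bullets of Proposition~\ref{a:dyn_prop1}; the second hypothesis of the present statement is assumption~\ref{assumtopns:models:2} (its third bullet); and the fourth hypothesis --- ground flat, possibly tilted, or segmented into planes perpendicular to gravity --- is precisely the geometric setting in which Section~\ref{ss:giac_derivation} establishes the remaining bullet $\angle\{\vec\Pi_{jki},\vec n_i\}\ge 0$. Proposition~\ref{a:dyn_prop1} then delivers the required $\{\vec f_i\}$. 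For $N=2$, the active constraints \eqref{eq:models:ss1}--\eqref{eq:models:ss2} are the \ac{GIA}-vector bounds of Proposition~\ref{a:dyn_prop2}; adjoining \eqref{eq:models:m1}, assumption~\ref{assumtopns:models:2}, the over-hanging-torso hypothesis (assumption~\ref{assumtopns:models:6}, used exactly as in the derivation of \eqref{eq:a:double_support_generalized_zmp}), and the angle condition $\angle\{\vec p_{ij},\vec n_j\}\ge 0$ (again supplied by the fourth hypothesis) reproduces the full hypothesis list of that proposition. For $N=1$, the force row of \eqref{a:sbcd} fixes $\vec f_1$ uniquely, \eqref{eq:models:m5} is then the moment row of \eqref{a:sbcd}, and the friction part of \eqref{eq:models:m1} together with the push constraint $\vec e_z^T\vec{\ddot p}_B\ge\vec e_z^T\vec g$ certifies $\vec f_1\in\mathcal F_1$, mirroring the reduction of \eqref{eq:models_cd_friction_cone} to \eqref{eq:models:simplified_no_slip_condition}. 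For $N=0$, \eqref{eq:models:mN} is the ballistic condition, under which \eqref{eq:models_cd} is satisfied with the empty force set. In every mode a friction-feasible force family solving \eqref{eq:models_cd} has been exhibited, so the motion is weak-contact stable.

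The part I expect to require real care is not a computation but the geometric step, in the $N\ge 3$ and $N=2$ modes, of verifying that ``flat (and tilted), or segmented into planes perpendicular to gravity'' genuinely implies the angle conditions $\angle\{\vec\Pi_{jki},\vec n_i\}\ge 0$ and $\angle\{\vec p_{ij},\vec n_j\}\ge 0$: a single plane makes every $\vec n_i$ equal to its (constant) normal, so that $\vec n_i$ is parallel to $\vec p_{jk}\times(\vec p_i-\vec p_j)$, whereas horizontal segments make every $\vec n_i=\vec e_z$ and reduce the condition to a $2$D cross-product sign --- and one must invoke the counterclockwise (convex) ordering of the footprint to fix that sign. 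A second point that deserves an explicit sentence is that assumption~\ref{assumtopns:models:2} is exactly what licenses replacing per-contact friction feasibility by the single resultant bound \eqref{eq:models:m1} on the \ac{GIA} vector, the tangential forces that realize $\vec{\dot L}_B$ being neglected, and that the $\vec e_z$-referenced form of \eqref{eq:models:m1} still rests on assumption~\ref{assumtopns:models:5}, so the ``and tilted'' clause in the fourth hypothesis is to be read as applying to the angle conditions rather than to the friction model.
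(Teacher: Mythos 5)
Your proposal is correct and follows essentially the same route as the paper: the paper's own justification is the single remark that the proposition is deduced from Propositions~\ref{a:dyn_prop1} and~\ref{a:dyn_prop2} together with the definition of weak-contact stability, which is exactly the reduction you carry out, merely with the per-mode ($N\ge 3$, $N=2$, $N\le 1$) hypothesis bookkeeping made explicit. Your closing observations --- that the flat-or-horizontally-segmented ground hypothesis is what supplies the angle conditions, and that the $\vec e_z$-referenced friction bound still rests on assumption~\ref{assumtopns:models:5} --- are consistent with the paper's own caveat that the proposition holds under the full assumption set of~\ref{ss:model_assumptions}.
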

We note that prop.~\ref{prop:models:generalization_cone} guarantees weak-contact stability of our model under assumptions~\ref{ss:model_assumptions}. 

Apart from being described by \ac{SRBD}, wrench models assume only polyhedral Coulomb friction. Even if assumption~\ref{ss:model_assumptions} is satisfied, \ac{CWC} constraints may appear less conservative than \ac{GIAC} constraints. This can be seen by prop.~\ref{prop:models:zmp_linear} or \ref{a:dyn_prop1} and~\ref{a:dyn_prop2}, which hold with ``if'' but not with ``iff''. The additional assumption of flat ground leads to a special case, for which the propositions hold with ``iff''.

%In conclusion, we can state the following relationship:
%The friction constraint of our model~\eqref{eq:models:m1} is identical to the first three rows of the \ac{CWC} model~\eqref{eq:models:cwc_stability} with all friction pyramids aligned with gravity. The remaining constraints~\eqref{eq:models:m2} to~\eqref{eq:models:mN} encode the last three rows of~\eqref{eq:models:cwc_stability}, but assume that the angular momentum can not not push contact forces out of the friction cones.

%====================================================================================================
\section{} \label{a:gauss_newton}
We consider a constrained \ac{NLP}~\eqref{eq:method:opt} and impose the limitation that the objective functions $f_i:\mathbb{R}^n\to\mathbb{R}$ can be factorized with some vector-valued functions $\vec g_i:\mathbb{R}^n\to \mathbb{R}^m$,
\begin{equation}
    f_i(\vec x) = 0.5\vec g_i(\vec x)^T \mat W_i \vec g_i(\vec x), \quad \mat W_i = \mat W_i^T \succ \vec 0.
\end{equation}
By approximating $\vec g_i$ by it's first order Tailor expansion about $\vec x_0$, i.e., $\vec g_i(\vec x) \approx \vec g_i(\vec x_0) + \nabla \vec g_i^T(\vec x_0) (\vec x - \vec x_0)$,
we can compute the Jacobian and approximate Hessian of the objective as
\begin{align}
    \mat J_i(\vec x_0) &= \nabla \vec g_i(\vec x_0)^T \mat W_i \vec g_i(\vec x_0) \\
    \mat H_i(\vec x_0) &= \nabla \vec g_i(\vec x_0)^T \mat W_i \nabla \vec g_i(\vec x_0). \label{eq:a:H}
\end{align}
It is easy to verify that the Hessian matrix~\eqref{eq:a:H} is always positive semi-definite. This technique is generally known as the \emph{Gaussian-Newton} approximation.

\ifCLASSOPTIONcaptionsoff
  \newpage
\fi

% Bibliography
\bibliographystyle{IEEEtran}
\bibliography{IEEEabrv,library}

% Biography
% If you have an EPS/PDF photo (graphicx package needed) extra braces are
% needed around the contents of the optional argument to biography to prevent
% the LaTeX parser from getting confused when it sees the complicated
% \includegraphics command within an optional argument. (You could create
% your own custom macro containing the \includegraphics command to make things
% simpler here.)
%\begin{IEEEbiography}[{\includegraphics[width=1in,height=1.25in,clip,keepaspectratio]{mshell}}]{Michael Shell}
% or if you just want to reserve a space for a photo:
\begin{IEEEbiography}[{\includegraphics[width=1in,height=1.25in,clip,keepaspectratio]{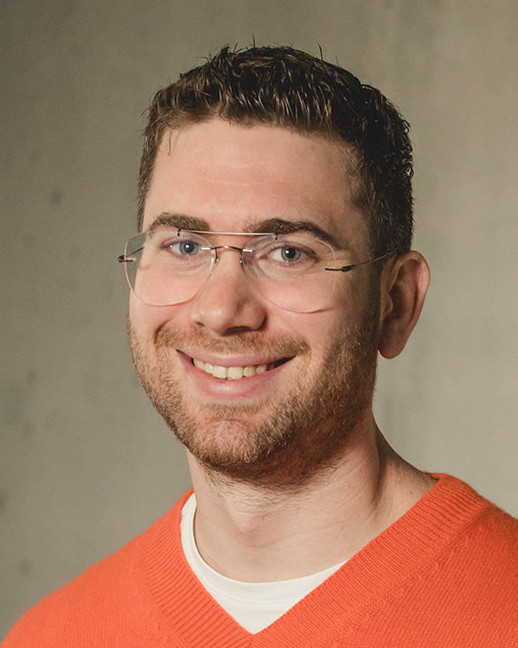}}]{Fabian Jenelten}
is a Ph.D. student at the Robotic Systems Lab, ETH Zurich, under the supervision of Prof. M. Hutter. He is currently working toward a Ph.D. degree in the field of trajectory optimization and learning-based control. He received his B.Sc. and M.Sc. in Mechanical Engineering from ETH Zurich, Switzerland, in 2015 and 2018.
His research interests include the combination of model- and learning-based control for legged locomotion.\vspace{-4mm}
\end{IEEEbiography}%
\begin{IEEEbiography}[{\includegraphics[width=1in,height=1.25in,clip,keepaspectratio]{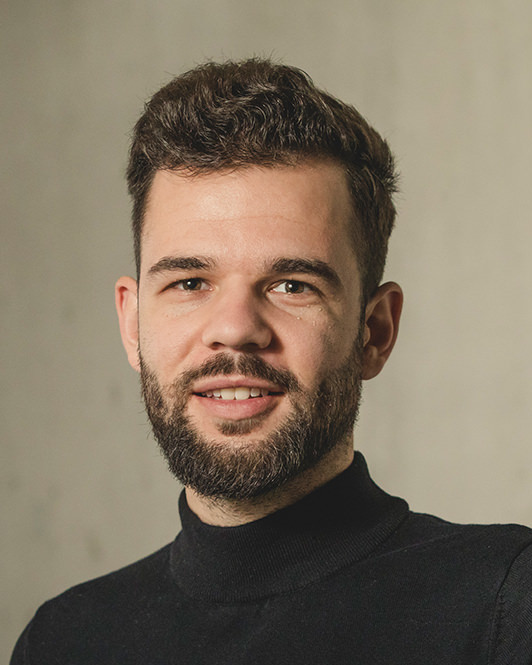}}]{Ruben Grandia}
received his B.Sc. in Aerospace Engineering from TU Delft, the Netherlands, in 2014, and his M.Sc. degree in Robotics, Systems, and Control from ETH Zurich, Switzerland, in 2017.
He is currently working toward a Ph.D. degree in the field of motion planning and control for legged robots at the Robotic Systems Lab at ETH Zurich, under the supervision of Prof. M. Hutter. 
His research interests include nonlinear optimal control and its application to dynamic mobile robots.\vspace{-4mm}
\end{IEEEbiography}%
\begin{IEEEbiography}[{\includegraphics[width=1in,height=1.25in,clip,keepaspectratio]{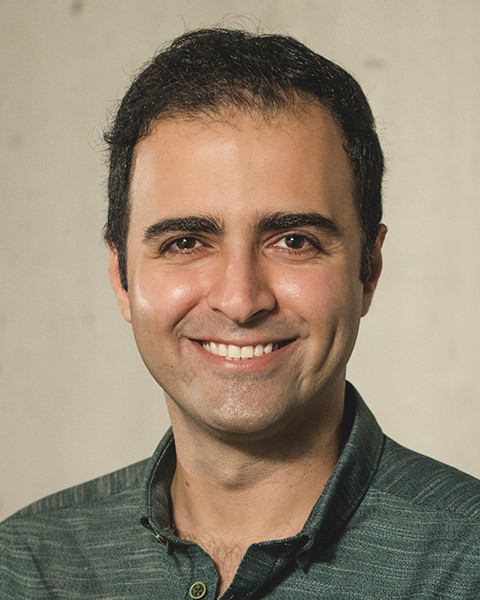}}]{Farbod Farshidian}
is a Senior Scientist at Robotic System Lab, ETH Zurich. 
He received his M.Sc. in electrical engineering from the University of Tehran in 2012 and his PhD from ETH Zurich in 2017, focusing on the planning and control of legged robots. His research interests are in mobile robots' motion planning and control, aiming to develop algorithms and techniques to endow these platforms to operate autonomously in real-world applications. Farbod is part of the NCCRs Robotics and Digital Fabrication. \vspace{-4mm}
\end{IEEEbiography}%
\begin{IEEEbiography}[{\includegraphics[width=1in,height=1.25in,clip,keepaspectratio]{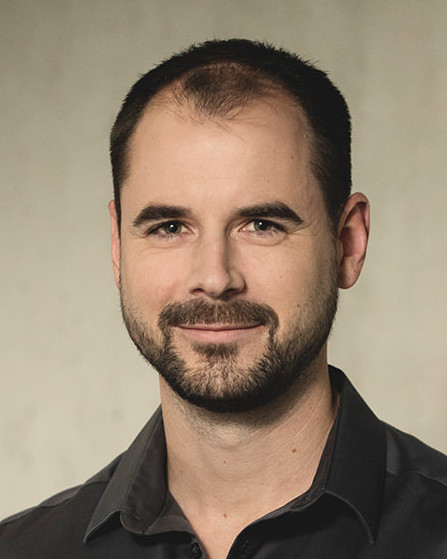}}]{Marco Hutter}
is Associate Professor for Robotic
Systems at ETH Zurich. He received his M.Sc. and PhD from ETH Zurich in 2009 and 2013 in the field of design, actuation and control of legged robots. His research interests are in the development of novel machines and actuation concepts together with the underlying control, planning, and machine learning algorithms for locomotion and manipulation. Marco is recipient of an ERC Starting Grant, PI of the NCCRs robotics and digital fabrication, PI in various EU projects and international challenges, an co-founder of several ETH Startups such as ANYbotics AG.
\end{IEEEbiography}

\vfill

% You can push biographies down or up by placing
% a \vfill before or after them. The appropriate
% use of \vfill depends on what kind of text is
% on the last page and whether or not the columns
% are being equalized.

%\vfill

% Can be used to pull up biographies so that the bottom of the last one
% is flush with the other column.
%\enlargethispage{-5in}

% that's all folks

\end{document}